\newcommand{\arxiv}[1]{\iftoggle{iclr}{}{#1}}
\newcommand{\iclr}[1]{\iftoggle{iclr}{#1}{}}
\newcommand{\arxivstyle}[1]{\iftoggle{iclr}{}{#1}}
\newcommand{\iclrstyle}[1]{\iftoggle{iclr}{#1}{}}
\global\toggletrue{iclr}
\global\togglefalse{iclr}
\DeclarePairedDelimiter{\abs}{\lvert}{\rvert} %
\DeclarePairedDelimiter{\brk}{[}{]}
\DeclarePairedDelimiter{\crl}{\{}{\}}
\DeclarePairedDelimiter{\prn}{(}{)}
\DeclarePairedDelimiter{\nrm}{\|}{\|}
\DeclarePairedDelimiter{\tri}{\langle}{\rangle}
\DeclarePairedDelimiter{\floor}{\lfloor}{\rfloor}
\let\Pr\undefined
\DeclareMathOperator{\En}{\mathbb{E}}
\DeclareMathOperator{\Pr}{Pr}
\newcommand{\wt}[1]{\widetilde{#1}}
\def\ddefloop#1{\ifx\ddefloop#1\else\ddef{#1}\expandafter\ddefloop\fi}
\def\ddef#1{\expandafter\def\csname bb#1\endcsname{\ensuremath{\mathbb{#1}}}}
\def\ddefloop#1{\ifx\ddefloop#1\else\ddef{#1}\expandafter\ddefloop\fi}
\def\ddef#1{\expandafter\def\csname b#1\endcsname{\ensuremath{\mathbf{#1}}}}
\def\ddef#1{\expandafter\def\csname sf#1\endcsname{\ensuremath{\mathsf{#1}}}}
\def\ddef#1{\expandafter\def\csname c#1\endcsname{\ensuremath{\mathcal{#1}}}}
\def\ddef#1{\expandafter\def\csname h#1\endcsname{\ensuremath{\widehat{#1}}}}
\def\ddef#1{\expandafter\def\csname hc#1\endcsname{\ensuremath{\widehat{\mathcal{#1}}}}}
\def\ddef#1{\expandafter\def\csname t#1\endcsname{\ensuremath{\widetilde{#1}}}}
\def\ddef#1{\expandafter\def\csname tc#1\endcsname{\ensuremath{\widetilde{\mathcal{#1}}}}}
\def\ddefloop#1{\ifx\ddefloop#1\else\ddef{#1}\expandafter\ddefloop\fi}
\def\ddef#1{\expandafter\def\csname scr#1\endcsname{\ensuremath{\mathscr{#1}}}}
\newcommand{\ind}{\mathbbm{1}}    %
\newcommand{\veps}{\varepsilon}
\newcommand{\ldef}{\vcentcolon=}
\newcommand{\rdef}{=\vcentcolon}
\newcommand{\neutralize}[1]{\expandafter\let\csname c@#1\endcsname\count@}
  \renewenvironment{proof}[1][Proof]%
  {%
   \par\noindent{\bfseries\upshape {#1.}\ }%
  }%
  {\qed\newline}
\xpatchcmd{\proof}{\itshape}{\normalfont\proofnameformat}{}{}
\newcommand{\proofnameformat}{\bfseries}
\newcommand{\pref}[1]{\cref{#1}}
\newcommand{\pfref}[1]{Proof of \pref{#1}}
\Crefname{assumption}{Assumption}{Assumptions}
    \let\Cref\crtCref
    \let\cref\crtcref
\newcommand{\creftitle}[1]{\crtcref{#1}}
\DeclareDocumentCommand{\XDeclarePairedDelimiter}{mm}
 {
  \__egreg_delimiter_clear_keys: %
  \keys_set:nn { egreg/delimiters } { #2 }
  \use:x %
   {
    \exp_not:n {\NewDocumentCommand{#1}{sO{}m} }
     {
      \exp_not:n { \IfBooleanTF{##1} }
       {
        \exp_not:N \egreg_paired_delimiter_expand:nnnn
         { \exp_not:V \l_egreg_delimiter_left_tl }
         { \exp_not:V \l_egreg_delimiter_right_tl }
         { \exp_not:n { ##3 } }
         { \exp_not:V \l_egreg_delimiter_subscript_tl }
       }
       {
        \exp_not:N \egreg_paired_delimiter_fixed:nnnnn 
         { \exp_not:n { ##2 } }
         { \exp_not:V \l_egreg_delimiter_left_tl }
         { \exp_not:V \l_egreg_delimiter_right_tl }
         { \exp_not:n { ##3 } }
         { \exp_not:V \l_egreg_delimiter_subscript_tl }
       }
     }
   }
 }
\XDeclarePairedDelimiter{\supnorm}{
  left=\lVert,
  right=\rVert,
  subscript=\infty
  }
\newcommand{\dtil}{\wt{d}}%
\newcommand{\mustar}{\mu^{\star}}
\newcommand{\betext}{Bellman-Eluder dimension\xspace}
\newcommand{\betextsq}{Squared Bellman-Eluder dimension\xspace}
\newcommand{\betextshort}{Bellman-Eluder dim.\xspace}
\newcommand{\bitext}{Bilinear rank\xspace}
\newcommand{\exbmdp}{Ex-BMDP\xspace}
\newcommand{\Conc}{C_{\mathrm{conc}}}
\newcommand{\Enpi}[1][\pi]{\En^{#1}}
\newcommand{\Prpi}[1][\pi]{\bbP^{#1}}
\renewcommand{\emptyset}{\varnothing}
\newcommand{\CompText}{Decision-Estimation Coefficient\xspace}
\newcommand{\M}[1]{^{{\scriptscriptstyle M}}}  %
\newcommand{\sups}[1]{^{{\scriptscriptstyle#1}}}
\newcommand{\fstar}{f^{\star}}
\newcommand{\pistar}{\pi^{\star}}
\newcommand{\Mstar}{M^{\star}}
\newcommand{\algcommentlight}[1]{\textcolor{blue!70!black}{\transparent{0.5}\small{\texttt{\textbf{//\hspace{2pt}#1}}}}}
\newcommand{\trn}{\top}
\newcommand{\psdleq}{\preceq}
\newcommand{\approxleq}{\lesssim}
\renewcommand{\ind}[1]{^{{\scriptscriptstyle(#1)}}}
\newcommand{\bigoh}{O}
\newcommand{\bigoht}{\wt{O}}
\newcommand{\bigom}{\Omega}
\newcommand{\bigomt}{\wt{\Omega}}
\newcommand{\bigthetat}{\wt{\Theta}}
\newcommand{\indic}{\mathbb{I}}
\renewcommand{\Pr}{\bbP}
\newcommand{\poly}{\mathrm{poly}}
\newcommand{\polylog}{\mathrm{polylog}}
\newcommand{\Ber}{\mathrm{Ber}}
\newcommand{\Qpi}{Q^{\pi}}
\newcommand{\Vstar}{V^{\star}}
\newcommand{\Qstar}{Q^{\star}}
\newcommand{\supp}{\mathrm{supp}}
  \newcommand{\mathand}{\quad\text{and}\quad}
\def\multiset#1#2{\ensuremath{\left(\kern-.3em\left(\genfrac{}{}{0pt}{}{#1}{#2}\right)\kern-.3em\right)}}
\newcommand{\iid}{i.i.d.\xspace}
\renewcommand{\emptyset}{\varnothing}
\newcommand{\loose}{\iclrstyle{\looseness=-1}}
\newcommand{\Cconc}{C_{\mathsf{conc}}}
\newcommand{\Ccov}{C_{\mathsf{cov}}}
\newcommand{\Cconcgen}{\Cgen_{\mathsf{conc}}}
\newcommand{\Ccovgen}{\Cgen_{\mathsf{cov}}}
\newcommand{\coverability}{coverability\xspace}
\newcommand{\Coverability}{Coverability\xspace}
\newcommand{\compmeasure}{sequential extrapolation coefficient\xspace}
\newcommand{\CompMeasure}{Sequential Extrapolation Coefficient\xspace}
\newcommand{\SNC}{\mathsf{SEC}\xspace}
\newcommand{\bicompmeasure}{\textsf{gen}-$\mathsf{SEC}$\xspace}
\newcommand{\BiCompMeasure}{\textsf{Gen}-$\mathsf{SEC}$\xspace}
\newcommand{\sinit}{x_1}
\renewcommand{\indic}{\1}
\setlist{leftmargin=*,topsep=0pt}
\let\oldparagraph\paragraph
\renewcommand{\paragraph}[1]{\oldparagraph{#1.}}
\title{The Role of Coverage in Online Reinforcement Learning}
\author{
Tengyang Xie\thanks{Equal contribution}
\\
\normalsize
\href{mailto:tx10@illinois.edu}{\texttt{tx10@illinois.edu}}
\and
Dylan J. Foster\footnotemark[1]
\\
\normalsize
\href{mailto:dylanfoster@microsoft.com}{\texttt{dylanfoster@microsoft.com}}
\and
Yu Bai
\\
\normalsize
\href{mailto:yu.bai@salesforce.com}{\texttt{yu.bai@salesforce.com}}
\and
Nan Jiang
\\
\normalsize
\href{mailto:nanjiang@illinois.edu}{\texttt{nanjiang@illinois.edu}}
\and
Sham M. Kakade
\\
\normalsize
\href{mailto:sham@seas.harvard.edu}{\texttt{sham@seas.harvard.edu}}
}
\begin{document}

\maketitle

\begin{abstract}

\emph{Coverage conditions}---which assert that the data logging
distribution adequately covers the state space---play a fundamental
role in determining the sample complexity of offline reinforcement
learning.
While such conditions might seem irrelevant to online reinforcement
learning at first glance, we establish a new connection by
showing---somewhat surprisingly---that the mere \emph{existence} of a data
distribution with good coverage can enable sample-efficient online
RL. Concretely, we show that \emph{coverability}---that is, existence of a data distribution that satisfies a ubiquitous
coverage condition called concentrability---can be viewed as a
structural property of the underlying MDP, and can be exploited by
standard algorithms for sample-efficient exploration, even when the
agent does not know said distribution. We complement this result by
proving that several weaker notions of coverage, despite being
sufficient for offline RL, are insufficient for online RL. We also show that existing complexity measures for
online RL, including Bellman rank and Bellman-Eluder dimension, fail to optimally capture coverability, and propose a new complexity measure, the \emph{\compmeasure}, to provide a unification.

 \end{abstract}

\section{Introduction}
\label{sec:intro}

The
last decade has seen development of reinforcement learning algorithms with strong empirical performance in domains including robotics
\citep{kober2013reinforcement,lillicrap2015continuous}, dialogue
systems \citep{li2016deep}, and personalization
\citep{agarwal2016making,tewari2017ads}. While there is great interest
in applying these techniques to real-world decision making
applications, the number of samples (steps of interaction) required to
do so is often prohibitive, with state-of-the-art algorithms requiring
millions of samples to reach human-level performance in challenging domains.
Developing
algorithms with improved sample efficiency, which entails efficiently
generalizing across high-dimensional states and actions while taking
advantage of problem structure as modeled practitioners, remains a major challenge.\loose
Investigation into design and analysis of algorithms for sample-efficient
reinforcement learning has largely focused on two distinct problem formulations:
\begin{itemize}
\item \emph{Online reinforcement learning}, where the learner can
  repeatedly interact with the environment by executing a policy and
  observing the resulting trajectory.
\item \emph{Offline reinforcement learning}, where the learner has
  access to logged transitions ands reward gathered from a fixed
  behavioral policy (e.g., historical data or expert demonstrations),
  but cannot directly interact with the underlying environment.
\end{itemize}
While these formulations share a common goal (learning a near-optimal
policy), the algorithms used to achieve this goal and conditions under
which it can be achieved are seemingly quite different. Focusing on
value function approximation, sample-efficient algorithms for online
reinforcement learning require both (a) \emph{representation
  conditions}, which assert that the function approximator is flexible
enough to represent value functions for the underlying MDP (optimal or
otherwise), and (b) \emph{exploration conditions} (or, structural
conditions) which limit the amount of exploration required to learn a
near-optimal policy---typically by enabling extrapolation across
states or limiting the number of effective state distributions \citep{russo2013eluder,jiang2017contextual,sun2019model,wang2020provably,du2021bilinear,jin2021bellman,foster2021statistical}. Algorithms for offline reinforcement learning
typically require similar representation conditions. However, since
data is collected passively from a fixed logging policy/distribution rather than actively, the exploration
conditions used in online RL are replaced with \emph{coverage
  conditions}, which assert that the data collection distribution
provides sufficient coverage over the state space \citep{antos2008learning,chen2019information,xie2020q,xie2021batch, jin2021pessimism,rashidinejad2021bridging,foster2022offline,zhan2022offline}. The aim for both lines of research (online and offline) is to identify the weakest possible conditions under which
learning is possible, and design algorithms that take
advantage of these conditions. The two lines have largely evolved in parallel, and it is
natural to wonder whether there are deeper connections. Since the
conditions for sample-efficient online RL and offline
RL mainly differ via exploration versus coverage, this leads us to ask:\loose

\begin{quote}
\centering
\emph{If an MDP admits a data distribution with favorable
    coverage for offline RL, what does this imply about our ability to perform
    online RL efficiently?}
\end{quote}
Beyond intrinsic theoretical value, this question is motivated by the
observation that many real-world applications lie on a spectrum between
offline and offline. It is common for the learner to have access to
logged/offline data, yet also have the ability to actively interact
with the underlying environment, possibly subject to limitations such
as an exploration budget \citep{kalashnikov2018scalable}. Building a theory of real-world RL that can lead to algorithm design insights for such
settings requires understanding the interplay between online and
offline RL.

\subsection{Our Results}
We investigate connections between coverage conditions in offline
\arxiv{reinforcement learning}\iclr{RL} and exploration in online \arxiv{reinforcement learning}\iclr{RL} by focusing on the \emph{concentrability} coefficient, the most ubiquitous
notion of coverage in offline RL. Concentrability quantifies the
extent to which the data collection distribution uniformly covers 
the state-action distribution induced by any policy. We introduce a new structural property,
\emph{\coverability}, which reflects the best
concentrability coefficient that can achieved by \emph{any data
  distribution}, possibly designed by an oracle with
knowledge of the underlying MDP.
\iclr{Our main results are as follows:}\loose
\begin{enumerate}
\item We show (\pref{sec:basic_cover}) that coverability (that is, mere existence of a distribution with
  good concentrability) is sufficient for sample-efficient online
  exploration, even when the learner has no prior knowledge of this distribution. This result requires no additional assumptions on the
  underlying MDP beyond standard Bellman completeness,
  and---perhaps surprisingly---is
  achieved using standard algorithms \citep{jin2021bellman}, albeit with analysis ideas
  that go beyond existing techniques.
\item We show (\pref{sec:C_gen}) that several weaker notions of coverage in offline RL, including
  single-policy concentrability~\citep{jin2021pessimism,rashidinejad2021bridging} and conditions based on Bellman residuals~\arxiv{\citep{chen2019information,xie2021bellman,cheng2022adversarially}}\iclr{\citep{chen2019information,xie2021bellman}}, are \emph{insufficient} for sample-efficient online
  exploration. This shows that in general, coverage in offline
  reinforcement learning and exploration in online RL not compatible, and highlights the need for additional investigation
  going forward.
\end{enumerate}
Our results serve as a starting point for systematic study of connections between online and
offline learnability in RL. To this end, we provide several secondary
      results:
      \begin{enumerate}
      \item  We show (\pref{sec:gen_bedim}) that existing complexity measures for online RL,
        including Bellman rank and Bellman-Eluder dimension, do not optimally capture coverability, and provide a new complexity
        measure, the \compmeasure, which unifies these notions.
      \item We establish \arxiv{(\pref{sec:exbmdp})}\iclr{(\pref{app:exbmdp})} connections between coverability and reinforcement
        learning with exogenous noise, with applications to
        learning in exogenous block MDPs \citep{efroni2021provably,efroni2022sample}.\loose
      \item We give algorithms for reward-free exploration \citep{jin2020reward,chen2022statistical} under
        coverability (\pref{sec:reward_free}).
      \end{enumerate}
      While our results primarily concern analysis
      of existing algorithms rather than algorithm design, they
      highlight a number of exciting directions for future research,
      and we are
      optimistic that the notion of coverability can guide the design
      of practical algorithms going forward.

\arxiv{\paragraph{Notation}}
\iclr{\noindent\textbf{Notation.}~~}For an integer $n\in\bbN$, we let $[n]$ denote the set
  $\{1,\dots,n\}$. For a set $\cX$, we let
        $\Delta(\cX)$ denote the set of all probability distributions
        over $\cX$. \arxiv{We adopt non-asymptotic big-oh notation: For functions
	$f,g:\cX\to\bbR_{+}$, we write $f=\bigoh(g)$ (resp. $f=\bigom(g)$) if there exists a constant
	$C>0$ such that $f(x)\leq{}Cg(x)$ (resp. $f(x)\geq{}Cg(x)$)
        for all $x\in\cX$. We write $f=\bigoht(g)$ if
        $f=\bigoh(g\cdot\mathrm{polylog}(T))$, $f=\bigomt(g)$ if $f=\bigom(g/\polylog(T))$, and
        $f=\bigthetat(g)$ if $f=\bigoht(g)$ and $f=\bigomt(g)$. %
	We write $f\propto g$ if $f=\bigthetat(g)$.} \iclr{We adopt standard
        big-oh notation, and write $f=\bigoht(g)$ to denote that $f =
        \bigoh(g\cdot{}\max\crl*{1,\mathrm{polylog}(g)})$ and
        $a\approxleq{}b$ as shorthand for $a=\bigoh(b)$.}

\arxiv{\section[Background: Reinforcement Learning, Coverage, and Coverability]{Background:\;Reinforcement\;Learning,\;Coverage,\;and\;Coverability}}
\iclr{\section{Background: Online/Offline RL, Coverage, and
  Coverability}}
\label{sec:background}
\arxiv{
We begin by formally introducing the online and offline reinforcement
learning problems, then review the concept of coverage in offline
reinforcement learning, focusing on \emph{concentrability}. Based on this notion, we introduce
\emph{\coverability} as a structural property.
}

\paragraph{Markov decision processes}
We consider an episodic reinforcement
learning setting. Formally, a Markov decision process $M=(\cX, \cA, P, R,
H, \sinit)$ consists of a (potentially large) state
space $\cX$, action space $\cA$, horizon $H$, probability transition function
$P=\crl{P_h}_{h=1}^{H}$, where $P_h:\cX\times\cA\to\Delta(\cX)$, reward function
$R=\crl{R_h}_{h=1}^{H}$, where $R_h:\cX\times{}\cA\to\brk*{0,1}$, and
deterministic initial state $\sinit \in \cX$.\footnote{While our
  results assume that the initial state is fixed for simplicity, this
  assumption is straightforward to relax.}
A (randomized) policy is a sequence of per-timestep functions
$\pi=\crl*{\pi_h:\cX\to\Delta(\cA)}_{h=1}^{H}$. The policy induces a
distribution over trajectories $(x_1,a_1, r_1),\ldots,(x_H,a_H,r_H)$
via the following process. For $h=1,\ldots,H$: $a_h\sim\pi(\cdot \mid x_h)$,
$r_h=R_h(x_h,a_h)$, and $x_{h+1}\sim{}P_h(\cdot \mid x_h,a_h)$. For notational
convenience, we use $x_{H+1}$ to denote a
  deterministic terminal state with zero reward. We let $\En\sups{\pi}\brk*{\cdot}$ and $\Pr\sups{\pi}\brk{\cdot}$
  denote expectation and probability under this process,
  respectively. 

\arxiv{
  The expected reward for policy $\pi$ is given $J(\pi) \coloneqq
  \Enpi\brk[\big]{\sum_{h=1}^{H}r_h}$, and the value
  function and $Q$-function for $\pi$ are given by\iclr{ $V_h^{\pi}(x)\coloneqq\E^\pi\big[\sum_{h'=h}^{H}r_{h'}\mid{}x_h=x\big],\mathand
    Q_h^{\pi}(x,a)\coloneqq\E^\pi\big[\sum_{h'=h}^{H}r_{h'}\mid{}x_h=x,
      a_h=a\big]$.}
\arxiv{\[\textstyle
    V_h^{\pi}(x)\coloneqq\Enpi\brk*{\sum_{h'=h}^{H}r_{h'}\mid{}x_h=x},\mathand
    Q_h^{\pi}(x,a)\coloneqq\Enpi\brk*{\sum_{h'=h}^{H}r_{h'}\mid{}x_h=x,
      a_h=a}.\]}
}
\iclr{
The $Q$-function for policy $\pi$ is $Q_h^{\pi}(x,a)\coloneqq\E^\pi\big[\sum_{h'=h}^{H}r_{h'}\mid{}x_h=x, a_h=a\big]$, the value function for $\pi$ is $V_h^{\pi}(x) \coloneqq \E_{a \sim \pi_h(\cdot|x)} \brk{Q_h^{\pi}(x,a)}$, and the expected reward for $\pi$ is $J(\pi) \coloneqq V_1^{\pi}(x_1)$.
}
We let \arxiv{$\pistar=\crl*{\pistar_h}_{h=1}^{H}$}\iclr{$\pistar$} denote the optimal (deterministic) policy,
which\arxiv{ satisfies the Bellman equation and} maximizes $\Qpi_h(x,a)$ for all
$(x,a)\in\cX\times\cA$ simultaneously; we define
$\Vstar_h=V^{\pistar}_h$ and $\Qstar_h=Q^{\pistar}_h$. We define the occupancy measure for policy $\pi$ via
$d^{\pi}_h(x,a) \ldef\Prpi\brk{x_h=x,a_h=a}$ and $d^{\pi}_h(x)
\ldef\Prpi\brk{x_h=x}$. We let $\cT_h$ denote the
  Bellman operator for layer $h$, defined via
  $\brk{\cT_hf}(x,a)=R_h(x,a)+\En_{x'\sim{}P_h(x,a)}\brk{\max_{a'}f(x',a')}$
  for $f:\cX\times\cA\to\bbR$.
\arxiv{\paragraph{Assumptions}}
We\iclr{ also} assume that rewards are normalized such that \arxiv{$\sum_{h=1}^{H}r_h\in\brk*{0,1}$}\iclr{$\sum_{h \in [H]}r_h \in [0,1]$} \arxiv{\citep{jiang2018open,wang2020long,zhang2021reinforcement,jin2021bellman}}. To simplify technical presentation, we assume that $\cX$ and $\cA$ are countable; we anticipate that this assumption can be removed.\loose

  \arxiv{\subsection{Online Reinforcement Learning}}
  \iclr{ \paragraph{Online Reinforcement Learning}}
Our main results concern online reinforcement learning in an episodic
framework, where the learner repeatedly interacts with an unknown MDP
by executing a policy and observing the resulting trajectory,
with the goal of maximizing total reward.

Formally, the protocol proceeds in $T$ rounds, where at each round
$t=1,\ldots,T$, the learner:\iclr{ i) Selects a policy $\pi\ind{t}=\crl*{\pi\ind{t}_h}_{h \in [H]}$ to execute in the
  (unknown) underlying MDP $\Mstar$; ii)  Observe the resulting trajectory $(x_1\ind{t},a_1\ind{t},r_1\ind{t}),\ldots,(x_H\ind{t},a_H\ind{t},r_H\ind{t})$.}
\arxiv{
\begin{itemize}
\item Selects a policy $\pi\ind{t}=\crl*{\pi\ind{t}_h}_{h=1}^{H}$ to execute in the
  (unknown) underlying MDP $\Mstar$.
\item Observe the resulting trajectory $(x_1\ind{t},a_1\ind{t},r_1\ind{t}),\ldots,(x_H\ind{t},a_H\ind{t},r_H\ind{t})$.
\end{itemize}
}
The learner's goal is to minimize their cumulative regret, defined
via\iclr{ $\Reg \ldef \sum_{t\in [T]}J(\pistar) - J(\pi\ind{t})$.}
\arxiv{
\begin{align*}
  \Reg \ldef \sum_{t=1}^{T}J(\pistar) - J(\pi\ind{t}).
\end{align*}
}
\iclr{

}
To achieve sample-efficient online reinforcement learning guarantees
that do not depend on the size of the state space, one typically appeals to \emph{value function approximation} methods that take
advantage of a function class $\cF\subset(\cX\times\cA\to\bbR)$ that
attempts to model the value functions for the underlying MDP $\Mstar$
(optimal or otherwise). An active line of research provides structural
conditions under which such approaches succeed
\citep{russo2013eluder,jiang2017contextual,sun2019model,wang2020provably,du2021bilinear,jin2021bellman,foster2021statistical},
based on assumptions that control the interplay between the
function approximator $\cF$ and the dynamics of the MDP
$\Mstar$. These results require (i) \emph{representation conditions}, which
require that $\cF$ is flexible enough to model value functions of
interest (e.g., $\Qstar\in\cF$ or $\cT_h\cF_{h+1}\subseteq\cF_h$) and (ii) \emph{exploration
  conditions}, which either explicitly or implicitly limit
the amount of exploration required for a deliberate algorithm to learn a
near-optimal policy. This is typically accomplished by either enabling extrapolation from
states already visited, or by limiting the number of effective state
distributions that can be encountered.

\arxiv{\subsection{Offline Reinforcement Learning and Coverage
    Conditions}}
\iclr{\paragraph{Offline Reinforcement Learning and Coverage Conditions}}
Our aim is to investigate parallels between online and offline
reinforcement learning. In offline reinforcement learning,
the learner cannot actively execute policies in the underlying MDP
$\Mstar$. Instead, for each layer $h$, they receive a dataset $D_h$ of $n$ tuples $(x_h, a_h, r_h, x_{h+1})$ with $r_h=R_h(x_h,a_h)$,
$x_{h+1}\sim{}P_h(\cdot \mid x_h,a_h)$, and $(x_h,a_h)\sim\mu_h$ \iid, where
$\mu_h\in\Delta(\cX\times\cA)$ is the \emph{data collection
  distribution}; we define \arxiv{$\mu=\crl{\mu_{h}}_{h=1}^{H}$}\iclr{$\mu=\crl{\mu_{h}}_{h\in [H]}$}. The goal of the learner is to use this data to learn an $\veps$-optimal policy $\pihat$, that is:\iclr{ $J(\pistar) - J(\pihat) \leq \veps$.

}
\arxiv{
\[
J(\pistar) - J(\pihat) \leq \veps.
\]
}
Algorithms for offline reinforcement learning require representation
conditions similar to those required for online RL. However, since it
is not possible to actively explore the underlying MDP, one dispenses
with exploration conditions and instead considers \emph{coverage
  conditions}, which require that each data distribution $\mu_h$
sufficiently covers the state space.
As an example, consider \emph{Fitted Q-Iteration} (FQI), one of the most
well-studied offline reinforcement learning algorithms \citep{munos2007performance,munos2008finite,chen2019information}. The algorithm,
which uses least-squares to approximate Bellman backups,
is known to succeed under (i) a representation condition known as \emph{Bellman
  completeness} (or ``completeness''), which requires that
$\cT_hf\in\cF_h$ for all $f\in\cF_{h+1}$,
and (ii) a coverage condition called \emph{concentrability}.
\iclr{To state,
the result, recall that $\nrm{x}_{\infty}\ldef{}\max_{i}\abs{x_i}$ for $x\in\bbR^{d}$.}
\begin{definition}[Concentrability]
\label{def:low_concentrability_offline}
The concentrability coefficient for a data distribution
$\mu=\crl*{\mu_h}_{h=1}^{H}$ and policy class $\Pi$ is given by\iclr{ $\Cconc(\mu) \ldef \sup_{\pi \in \Pi,h\in\brk{H}}\, \nrm*{{d_h^\pi}/{\mu_h}}_{\infty}$.}
\arxiv{
\begin{align*}
  \Cconc(\mu) %
  \ldef
  \sup_{\pi \in \Pi,h\in\brk{H}}\,
  \nrm*{\frac{d_h^\pi}{\mu_h}}_{\infty}.
\end{align*}
}
\end{definition}
Concentrability requires that the data distribution uniformly covers
all possible induced state distributions. 
With concentrability\footnote{Specifically, FQI requires
  concentrability with $\Pi$ chosen to be the set of all admissible
  policies~\citep[see, e.g.,][]{chen2019information}. Other
algorithms \citep{xie2020q} can leverage concentrability w.r.t
smaller policy classes.} and
completeness, FQI can learn an $\veps$-optimal policy using
$\poly(\Cconc(\mu),\log\abs{\cF}, H, \veps^{-1})$ samples. 
Importantly, this result scales only with the concentrability coefficient $\Cconc(\mu)$ and the capacity $\log\abs{\cF}$ for the function class, and has no explicit dependence on the size of the state space. There is a vast literature which provides algorithms with similar, often more refined guarantees \citep{chen2019information,xie2020q,xie2021batch, jin2021pessimism,rashidinejad2021bridging,foster2022offline,zhan2022offline}.

\arxiv{\subsection{The Coverability Coefficient}}
\iclr{\paragraph{The Coverability Coefficient}}
Having seen that access to a data distribution $\mu$ with low concentrability
$\Conc(\mu)$ is sufficient for sample-efficient offline RL, we now ask what
existence of such a distribution implies about our ability to perform
online RL.
To this end, we introduce a
new structural parameter, the \emph{\coverability coefficient}, whose
value reflects the best concentrability coefficient that can be
achieved with oracle knowledge of the underlying MDP $\Mstar$.

\begin{definition}[\Coverability]
\label{def:low_concentrability}
The \coverability coefficient $\Ccov>0$ for a policy class $\Pi$ is
given by\iclr{ $\Ccov \ldef
  \inf_{\mu_1,\ldots,\mu_H\in\Delta(\cX\times\cA)}\crl{\Cconc(\mu)}$
  .}
\arxiv{
\begin{align*}
  \Ccov \ldef \inf_{\mu_1,\ldots,\mu_H\in\Delta(\cX\times\cA)}  \sup_{\pi \in \Pi,h\in\brk{H}}\,
  \nrm*{\frac{d_h^\pi}{\mu_h}}_{\infty}.
\end{align*}
}
\end{definition}
\Coverability is an intrinsic structural property of the MDP $\Mstar$ which
implicitly restricts the complexity of the set of possible state
distributions. While it is always the case that
$\Ccov\leq\abs{\cX}\cdot\abs{\cA}$, the coefficient can be significantly
smaller (in particular, independent of $\abs{\cX}$) for benign MDPs
such as block MDPs and MDPs with low-rank
structure \citep[Prop 5]{chen2019information}\iclr{; see
  \pref{app:exbmdp} for details}. \arxiv{For example in block
MDPs, the state space $\cX$ is potentially very large (e.g., raw
pixels for an Atari game), but can be mapped down to a small number of
unobserved \emph{latent states} (e.g., the game's underlying state machine) which determine the dynamics. In this
case, the \coverability coefficient scales only with the number of
latent states, not with the size of $\cX$; see \pref{sec:exbmdp} for
further discussion and examples.}

With this definition in mind, we ask: {\em If the MDP $\Mstar$
  satisfies low \coverability, is sample-efficient online reinforcement
  learning possible?}  Note that if the learner were given access to data
from the distribution $\mu$ that
achieves the value of $\Ccov$, it would be possible to simply appeal to offline RL methods such as FQI, but since the learner
has no prior knowledge of $\mu$, this question is non-trivial,
and requires deliberate exploration.\loose

\section{Coverability Implies Sample-Efficient Online Exploration}
\label{sec:basic_cover}

We now present our main result, which shows that low \coverability is sufficient for sample-efficient online exploration. We first \arxiv{give}\iclr{describe} the algorithm and regret bound\arxiv{ (\pref{sec:main})}, then \arxiv{prove the result}\iclr{sketch the proof} and give intuition (\pref{sec:golf_sketch}). \arxiv{We conclude (\pref{sec:exbmdp}) by highlighting additional structural properties of \coverability and, as an application, use these properties along with the main result to give regret bounds for learning in \emph{exogenous block MDPs} \citep{efroni2021provably}.}\iclr{We conclude (\pref{sec:exbmdp}) by applying the main result to give regret bounds for learning in \emph{exogenous block MDPs} \citep{efroni2021provably}, highlighting structural properties of \coverability.\loose}

\arxiv{\subsection{Main Result}
  \label{sec:main}
  }

\iclr{\paragraph{Function approximation}}

We work with a value function class $\cF=\cF_1\times\cdots\times\cF_H$, where $\cF_h\subset(\cX\times\cA\to\brk{0,1})$, with the goal of modeling value functions for the underlying MDP. We adopt the convention that $f_{H+1}=0$, and for each $f\in\cF$, we let $\pi_f$ denote the greedy policy with $\pi_{f,h}(x)\ldef\argmax_{a\in\cA}f_h(x,a)$, and we use $f_h(x,\pi_h) \coloneqq \E_{a \sim \pi_h(\cdot|x)}[f_h(x,a)]$ for any $\pi_h$. We take our policy class to be the induced class $\Pi\ldef{}\crl*{\pi_f\mid{}f\in\cF}$ for the remainder of the paper unless otherwise stated. We make the following standard completeness assumption, which requires that the value function class is closed under Bellman backups \citep{wang2020provably,jin2020provably,wang2021optimism,jin2021bellman}.\loose
  \begin{assumption}[Completeness]
    \label{asm:completeness}
    For all $h\in\brk{H}$, we have $\Tcal_h f_{h+1} \in \Fcal_h$ for all $f_{h+1} \in \Fcal_{h+1}$.
  \end{assumption}
Completeness implies that $\cF$ is \emph{realizable} (that is, $\Qstar\in\cF$), but is a stronger assumption in general.
\arxiv{

}
We assume for simplicity that $\abs{\cF}<\infty$, and our results scale with $\log\abs{\cF}$; this can be extended to infinite classes via covering numbers using a standard analysis.
  
  \arxiv{\paragraph{Algorithm}}
  \iclr{\paragraph{Algorithm and main result}}
  Our result is based on a new analysis of the \golf algorithm of \citet{jin2021bellman}, which is presented in \pref{alg:golf} \iclr{of \cref{app:main} for completeness}. \golf is based on the principle of optimism in the face of uncertainty. At each round, the algorithm restricts to a confidence set $\cF\ind{t}\subseteq\cF$ with the property that $\Qstar\in\cF\ind{t}$, and chooses $\pi\ind{t}=\pi_{f\ind{t}}$ based on the value function $f\ind{t}\in\cF\ind{t}$ with the most optimistic estimate $f_1(x_1,\pi_{f,1}(x_1))$ for the total reward. The confidence sets $\cF\ind{t}$ are based on an empirical proxy to squared Bellman error, and are constructed in a \emph{global} fashion that entails optimizing over $f_h$ for all layers $h\in\brk{H}$ simultaneously~\citep{zanette2020learning}.

Note that while \golf was originally introduced to provide regret bounds based on the notion of Bellman-Eluder dimension, we show (\pref{sec:gen_bedim}) that \coverability cannot be (optimally) captured by this complexity measure, necessitating a new analysis.
\iclr{Our main result, \pref{thm:golf_guarantee_basic}, shows that \golf attains low regret for online reinforcement learning whenever the \coverability coefficient is small.}

\arxiv{%
\begin{algorithm}[th]
\caption{\golf \citep{jin2021bellman}}
\label{alg:golf}
{\bfseries input:} Function class $\Fcal$, confidence width $\beta>0$. \\
{\bfseries initialize:} $\Fcal^\iter{0} \leftarrow \Fcal$, $\Dcal_{h}^\iter{0} \leftarrow \emptyset\;\;\forall h \in [H]$. 
\begin{algorithmic}[1]
\For{episode $t = 1,2,\dotsc,T$}
    \State Select policy $\pi^\iter{t} \leftarrow \pi_{f^\iter{t}}$, where $f^\iter{t} \ldef{} \argmax_{f \in \Fcal^\iter{t-1}}f(x_1,\pi_{f,1}(x_1))$. \label{step:glof_optimism}
    \State Execute $\pi^\iter{t}$ for one episode and obtain trajectory $(x_1^\iter{t},a_1^\iter{t},r_1^\iter{t}),\ldots,(x_H^\iter{t},a_H^\iter{t},r_H^\iter{t})$. \label{step:glof_sampling}
    \State Update dataset: $\Dcal_{h}^\iter{t} \leftarrow \Dcal_{h}^\iter{t-1} \cup \crl[\big]{\prn[\big]{x_h^\iter{t},a_h^\iter{t},x_{h+1}^\iter{t}}}\;\;\forall h \in [H]$.
    \State Compute confidence set:
    \begin{gather*}
    \Fcal^\iter{t} \leftarrow \crl[\bigg]{ f \in \Fcal: \Lcal_{h}^\iter{t}(f_h,f_{h+1}) - \min_{f'_h \in \Fcal_h} \Lcal_{h}^\iter{t}(f'_h,f_{h+1}) \leq \beta\;\;\forall h \in [H] },
    \\
    \nonumber
    \text{where \quad } \Lcal_{h}^\iter{t}(f,f') \coloneqq \sum_{(x,a,r,x') \in \Dcal_{h}^\iter{t}}\prn[\Big]{ f(x,a) - r - \max_{a' \in \Acal} f'(x',a') }^2 ,~\forall f,f' \in \Fcal.
    \end{gather*}
\EndFor
\State Output $\pibar = \unif(\pi^\iter{1:T})$. \algcommentlight{For PAC guarantee only.}
\end{algorithmic}
\end{algorithm} %
}

\arxiv{
\paragraph{Main result}
Our main result, \pref{thm:golf_guarantee_basic}, shows that \golf attains low regret for online reinforcement learning whenever the \coverability coefficient is small.
}
\begin{theorem}[\Coverability implies sample-efficient online RL]
\label{thm:golf_guarantee_basic}
Under \cref{asm:completeness}, there exists an absolute constant $c$ such that for any $\delta \in (0,1]$ and $T \in \NN_+$, if we choose $\beta = c \cdot \log(\nicefrac{TH |\Fcal|}{\delta})$ in \cref{alg:golf}, then with probability at least $1 - \delta$, we have\iclr{ $\Reg  \leq O ( H \sqrt{\Ccov T \log(\nicefrac{TH |\Fcal|}{\delta}) \log(T) } )$,}
\arxiv{
\begin{align*}
  \Reg  \leq O \left( H \sqrt{\Ccov T \log(\nicefrac{TH |\Fcal|}{\delta}) \log(T) } \right),
\end{align*}
}
where $C_\on$ is the coverability coefficient (\cref{def:low_concentrability}).
\end{theorem}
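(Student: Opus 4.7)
The argument combines the optimism/concentration framework of \golf \citep{jin2021bellman} with a new coverability-based pigeonhole lemma that replaces the eluder-dimension analysis used there. Following \citet{jin2021bellman}, I would first establish via a Freedman-style concentration bound on $\Lcal_h^{(t)}$ that, for $\beta=c\log(TH\abs{\cF}/\delta)$, with probability at least $1-\delta$: \emph{(i)} $\Qstar\in\cF^{(t)}$ for all $t$ (using \pref{asm:completeness} so that $\cT_h\Qstar_{h+1}=\Qstar_h\in\cF_h$); and \emph{(ii)} every $f\in\cF^{(t)}$ satisfies $\sum_{s<t}\En^{\pi\ind{s}}\brk[\big]{((\cT_hf_{h+1}-f_h)(x_h,a_h))^2}\le O(\beta)$ for each $h$. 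Optimism then yields $J(\pistar)\le f_1^{(t)}(\xinit,\pi_{f^{(t)},1}(\xinit))$, and the performance-difference identity produces the Bellman-error decomposition
\begin{align*}
\Reg \le \sum_{t=1}^{T}\sum_{h=1}^{H}\En^{\pit}\brk[\big]{\abs{g_t^h(x_h,a_h)}}, \qquad g_t^h\ldef f_h^{(t)}-\cT_h f_{h+1}^{(t)}.
\end{align*}

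The heart of the proof is to bound each per-layer sum $\sum_t\En^{\pit}\brk{\abs{g_t^h}}$ by $\bigoh(\sqrt{\Ccov T\beta\log T})$. Fix $h$ and let $\mu_h\in\Delta(\cX\times\cA)$ attain (up to a constant) the infimum in \pref{def:low_concentrability}, so $d_h^\pi(x,a)\le\Ccov\cdot\mu_h(x,a)$ for every $\pi\in\Pi$. Define the running potential
\begin{align*}
\omega_t^h(x,a) \ldef \Ccov\cdot\mu_h(x,a) + \sum_{s<t}d^{\pi\ind{s}}_h(x,a),
\end{align*}
which dominates $d^{\pit}_h$ pointwise. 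Writing $\En^{\pit}\brk{\abs{g_t^h}} = \sum_{x,a}(d^{\pit}_h/\sqrt{\omega_t^h})\cdot(\sqrt{\omega_t^h}\abs{g_t^h})$ and applying Cauchy--Schwarz gives $\En^{\pit}\brk{\abs{g_t^h}}^2\le A_tB_t$ with $A_t\ldef\sum_{x,a}(d^{\pit}_h)^2/\omega_t^h$ and $B_t\ldef\sum_{x,a}\omega_t^h(g_t^h)^2$; a second Cauchy--Schwarz on $\sum_t\sqrt{A_tB_t}$ yields $\sum_t\En^{\pit}\brk{\abs{g_t^h}}\le\sqrt{(\sum_tA_t)(\sum_tB_t)}$. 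The first factor is controlled by an elliptic-potential-style calculation: using $d^{\pit}_h\le\Ccov\mu_h$ and the log inequality $y/(b+y)\le\log(1+y/b)$, telescoping at each $(x,a)$ yields $\sum_t d^{\pit}_h(x,a)/\omega_t^h(x,a)\le O(\log T)$, and multiplying by $\Ccov\mu_h(x,a)$ and summing gives $\sum_tA_t\le O(\Ccov\log T)$. The second factor splits as $B_t = \Ccov\En_{\mu_h}[(g_t^h)^2] + \sum_{s<t}\En^{\pi\ind{s}}[(g_t^h)^2]$: the second term is at most $O(\beta)$ by event \emph{(ii)}, while the first is a lower-order overhead using $\abs{g_t^h}\le H$. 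Summing over $h$ produces the claimed regret bound.

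The main technical obstacle is the choice of regularization $\Ccov\mu_h$ in $\omega_t^h$: it must be large enough that $d^{\pit}_h\le\omega_t^h$ (which is what makes the Cauchy--Schwarz decomposition valid and yields the logarithmic bound on $\sum_tA_t$), yet small enough that the $\Ccov\En_{\mu_h}[(g_t^h)^2]$ contribution to $B_t$ remains dominated by the past-policy $\beta$-term. This is exactly where \emph{\coverability}---rather than a weaker notion such as the concentrability of the visited policies $\{\pi\ind{s}\}_{s<t}$---is essential, and also why the result is not captured by the Bellman--Eluder-dimension framework of \citet{jin2021bellman}, as discussed in \pref{sec:gen_bedim}.
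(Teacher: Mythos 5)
Your architecture---optimism plus squared-Bellman-error confidence sets, a Cauchy--Schwarz change of measure against a regularized cumulative occupancy, and a per-state-action elliptic potential---is essentially the paper's, and your bound $\sum_t A_t \le O(\Ccov\log T)$ is correct (it repackages \cref{lem:per_sa_ep} together with $\sum_{x,a}\max_{t} d_h^{(t)}(x,a)\le\sum_{x,a}\Ccov\mu_h(x,a)=\Ccov$, i.e.\ \cref{lem:concen_eq_area}). The gap is in $B_t$. You decompose $B_t=\Ccov\,\En_{\mu_h}\brk[\big]{(g_t^h)^2}+\sum_{s<t}\En^{\pi^{(s)}}\brk[\big]{(g_t^h)^2}$ and assert that the first term is dominated by the second. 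It is not: the confidence set controls the squared Bellman error of $f^{(t)}$ only under the \emph{previously visited} occupancies, and coverability gives $\En_{\mu_h}\brk[\big]{(g_t^h)^2}\ge\Ccov^{-1}\En_{d_h^{\pi}}\brk[\big]{(g_t^h)^2}$---an inequality in the wrong direction---so nothing rules out $g_t^h$ having $\Theta(1)$ squared error on the portion of $\supp(\mu_h)$ that no $\pi^{(s)}$ with $s<t$ has yet reached. The only available bound is $\En_{\mu_h}\brk[\big]{(g_t^h)^2}\le 1$, giving $\sum_t B_t\le O(\beta T)+\Ccov T$ and a final regret of
\[
H\sqrt{O(\Ccov\log T)\cdot\prn[\big]{O(\beta T)+\Ccov T}}
= O\prn[\big]{H\sqrt{\Ccov\,\beta\,T\log T}} + O\prn[\big]{H\,\Ccov\sqrt{T\log T}},
\]
whose second term exceeds the theorem's bound by a factor of $\sqrt{\Ccov/\beta}$---polynomially worse whenever $\Ccov\gg\log\abs{\cF}$ (e.g.\ tabular problems with many state-action pairs).

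The paper avoids this by never letting the regularizer enter the in-sample side of the Cauchy--Schwarz. It introduces a per-pair burn-in time $\tau_h(x,a)=\min\crl[\big]{t: \sum_{s<t}d_h^{(s)}(x,a)\ge\Ccov\mu_h^{\star}(x,a)}$ and splits the on-policy Bellman error accordingly. The burn-in rounds contribute at most $2\Ccov$ per layer \emph{additively} (so $O(H\Ccov)$ in total, which is always dominated: if $\Ccov>T$ the claim is vacuous since $\Reg\le T$, and otherwise $H\Ccov\le H\sqrt{\Ccov T}$). For $t\ge\tau_h(x,a)$ one has $\sum_{s<t}d_h^{(s)}(x,a)\ge\tfrac12\prn[\big]{\sum_{s<t}d_h^{(s)}(x,a)+\Ccov\mu_h^{\star}(x,a)}$, so the extrapolation term can carry the regularized denominator (enabling exactly your elliptic-potential bound) while the in-sample term carries only the unregularized $\sum_{s<t}d_h^{(s)}$, which the concentration event does control. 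Replacing your uniform regularization with this burn-in decomposition repairs the argument; everything else you wrote goes through.
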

Beyond the coverability parameter $\Ccov$, the regret bound in \pref{thm:golf_guarantee_basic} depends only on standard problem parameters (the horizon $H$ and function class capacity $\log\abs{\cF}$). Hence, this result shows that \coverability, along with completeness, is sufficient for sample-efficient online RL.%
\arxiv{

}
Additional features of \pref{thm:golf_guarantee_basic} are as follows.
\begin{itemize}
\item While \coverability implies that there exists a distribution $\mu$ for which the concentrability coefficient $\Cconc$ is bounded, \pref{alg:golf} has no prior knowledge of this distribution. We find the fact that the \golf algorithm---which does not involve explicitly searching such a distribution---succeeds under this condition to be somewhat surprising (recall that given sample access to $\mu$, one can simply run FQI). Our proof shows that despite the fact that \golf does not explicitly reason about $\mu$, \coverability implicitly restricts the set of possible state distributions, and limits the extent to which the algorithm can be ``surprised'' by substantially new distributions. We anticipate that this analysis will find broader use.\loose
\item Ignoring factors logarithmic in $T$, $H$, and $\delta^{-1}$, the regret bound in \pref{thm:golf_guarantee_basic} scales as
  $H\sqrt{\Ccov{}T\log\abs{\cF}}$, which is optimal for contextual bandits (where $\Ccov=\abs{\cA}$ and $H=2$),\footnote{\arxiv{Since we assume a deterministic starting state, we require $H=2$ rather than $H=1$ to apply the result to contextual bandits.}\iclr{We require $H=2$ to apply the result to contextual bandits due to assuming the deterministic starting state.}} and hence cannot be improved in general \citep{agarwal2012contextual}. The dependence on $H$ matches the regret bound for \golf based on Bellman-Eluder dimension \citep{jin2021bellman}.
\item \golf uses confidence sets based on squared Bellman error, but there are similar algorithms which instead work with average Bellman error \citep{jiang2017contextual,du2021bilinear} and, as a result, require only realizability rather than completeness (\pref{asm:completeness}). While existing complexity measures such as Bellman rank and \betext can be used to analyze both types of algorithm, and our results critically use the non-negativity of squared Bellman error, which facilitates certain ``change-of-measure'' arguments. Consequently, it is unclear whether the completeness assumption can be removed (i.e., whether \coverability and realizability alone suffice for sample-efficient online RL).
\end{itemize}

On the algorithmic side, our results give guarantees for PAC RL via online-to-batch conversion, which we state here for completeness. We also provide an extension to reward-free exploration in \pref{sec:reward_free}. 
\begin{corollary}
\label{cor:basic_c_batch}
Under \cref{asm:completeness}, there exists an absolute constant $c$ such that for any $\delta \in (0,1]$ and $T \in \NN_+$, if we choose $\beta = c \cdot \log(\nicefrac{TH |\Fcal|}{\delta})$ in \cref{alg:golf}, then with probability at least $1 - \delta$, the policy $\pibar$ output by \cref{alg:golf} has\footnote{$\pibar$ is the non-Markov policy obtained by sampling $t\sim\brk{T}$ and playing $\pi\ind{t}$.} \iclr{$J(\pi^\star) - J(\pibar) \leq O \big( H\sqrt{C_\on \log(\nicefrac{TH |\Fcal|}{\delta}) \log(T)/T}\big)$.}
\arxiv{\begin{align*}
J(\pi^\star) - J(\pibar) \leq O \left( H\sqrt{\frac{C_\on \log(\nicefrac{TH |\Fcal|}{\delta}) \log(T)}{T}}\right).
\end{align*}}
\end{corollary}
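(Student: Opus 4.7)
The plan is to derive this PAC bound from \pref{thm:golf_guarantee_basic} via a standard online-to-batch conversion, which is essentially a one-line argument once the right observation about $\bar\pi$ is in place.

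First, I would invoke \pref{thm:golf_guarantee_basic} with the same choice of $\beta = c \cdot \log(\nicefrac{TH|\cF|}{\delta})$. This gives, with probability at least $1-\delta$, the regret bound
\[
\sum_{t=1}^{T}\prn*{J(\pi^\star) - J(\pi\ind{t})} \leq O\prn*{H\sqrt{\Ccov\, T\,\log(\nicefrac{TH|\cF|}{\delta})\log(T)}}.
\]
Dividing both sides by $T$ produces the average suboptimality bound
\[
\frac{1}{T}\sum_{t=1}^{T}\prn*{J(\pi^\star) - J(\pi\ind{t})} \leq O\prn*{H\sqrt{\frac{\Ccov\log(\nicefrac{TH|\cF|}{\delta})\log(T)}{T}}}.
\]

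Next, I would use the definition of $\bar\pi$ given in the footnote: $\bar\pi$ is the non-Markov mixture policy that first samples $t\sim\mathrm{Unif}([T])$ and then executes $\pi\ind{t}$. The expected reward of this mixture is, by construction and linearity of expectation, $J(\bar\pi) = \frac{1}{T}\sum_{t=1}^{T}J(\pi\ind{t})$. Consequently,
\[
J(\pi^\star) - J(\bar\pi) = \frac{1}{T}\sum_{t=1}^{T}\prn*{J(\pi^\star) - J(\pi\ind{t})},
\]
and substituting the regret bound above yields the claimed PAC guarantee.

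There is essentially no main obstacle here; the only subtle point worth flagging is that the equality $J(\bar\pi) = \frac{1}{T}\sum_{t}J(\pi\ind{t})$ requires the mixture interpretation of $\bar\pi$ (sample an index, then commit to that policy for the whole episode), rather than a per-step randomization, since expected return is linear only over full-trajectory policy mixtures. Given that the statement of the corollary explicitly adopts this convention in its footnote, no additional care is required.
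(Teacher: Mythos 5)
Your proposal is correct and matches the paper exactly: the paper derives \pref{cor:basic_c_batch} from \pref{thm:golf_guarantee_basic} by the same standard online-to-batch conversion (divide the regret bound by $T$ and use $J(\pibar)=\frac{1}{T}\sum_{t=1}^{T}J(\pi\ind{t})$ for the mixture policy defined in the footnote). Your remark about why the mixture interpretation of $\pibar$ is needed is a correct and worthwhile clarification.
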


\arxiv{\subsection{Proof of \creftitle{thm:golf_guarantee_basic}: Why is
    Coverability Sufficient?}}
\iclr{\subsection{Proof Sketch for \creftitle{thm:golf_guarantee_basic}: Why is
  Coverability Sufficient?}}
\label{sec:golf_sketch}
\newcommand{\reachability}{cumulative reachability\xspace}
\newcommand{\Reachability}{Cumulative Reachability\xspace}

\arxiv{We now prove \pref{thm:golf_guarantee_basic}, highlighting the role of \coverability in limiting the complexity of exploration.}
\paragraph{Equivalence to cumulative reachability}

\iclr{
We now sketch the main ideas behind the proof of \pref{thm:golf_guarantee_basic}, highlighting the role of \coverability in limiting the complexity of exploration.
  }

\arxiv{
A key idea underlying the proof of \pref{thm:golf_guarantee_basic} is the equivalence between \coverability and a quantity we term \emph{cumulative reachability}. Define the \emph{reachability} for a tuple $(x,a,h) \in \Xcal \times \Acal \times [H]$ by
$\sup_{\pi \in \Pi} d_h^\pi(x,a)$,
which captures the greatest probability of reaching $(x,a)$ at layer $h$ that can be achieved with any policy. We define cumulative reachability by
\begin{equation*}
\iclr{\textstyle}
\sum_{(x,a) \in \Xcal \times \Acal}\sup_{\pi \in \Pi} d_h^\pi(x,a).
\end{equation*}
Cumulative reachability reflects the variation in visitation probabilities for policies in the class $\Pi$. In particular, cumulative reachability is low when the state-action pairs visited by policies in $\Pi$ have large overlap, and vice versa; see \cref{fig:coverage} for an illustration.  

\arxiv{
\begin{figure}[tp]
    \centering
    \includegraphics[width=0.43\textwidth]{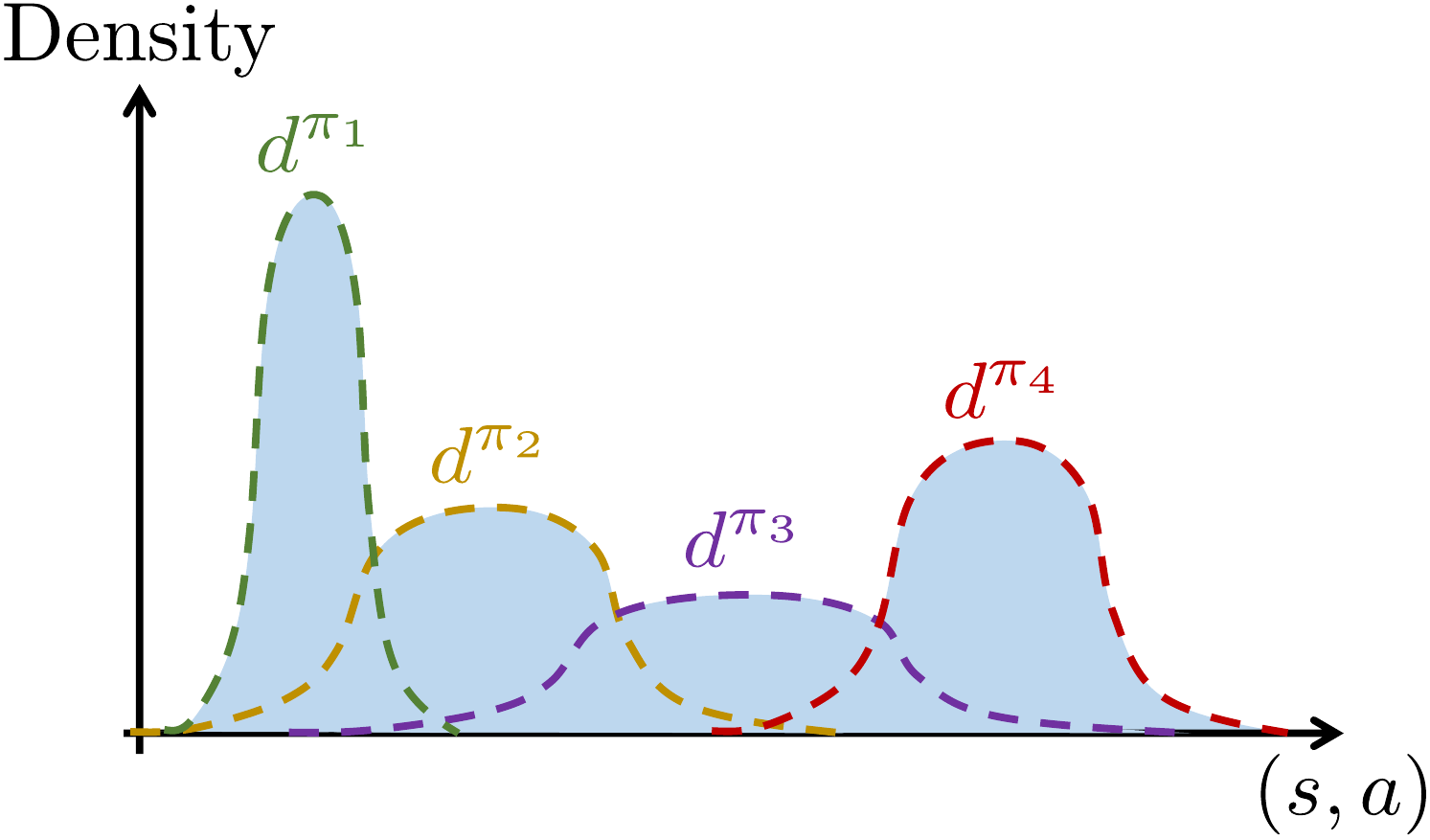}
    \caption{An example illustrating the equivalence of \emph{coverability} and \emph{\reachability}. Here, $\Pi = \{\pi_1,\pi_2,\pi_3,\pi_4\}$, and dashed curves
  plots $d^\pi$ for each $\pi \in \Pi$. The coverability coefficient, via \pref{lem:concen_eq_area},
  is equal to the total area of the shaded region (without
  double-counting overlapping regions).}
  \label{fig:coverage}
\end{figure}
}

The following lemma shows that \reachability and \coverability coincide; we defer the proof to \cref{app:main}.

\begin{lemma}[Equivalence of \coverability and \reachability]
\label{lem:concen_eq_area}
The following definition is equivalent to \pref{def:low_concentrability}:\iclr{ $\Ccov \ldef{} \max_{h\in\brk{H}}\sum_{(x,a) \in \Xcal \times \Acal}\sup_{\pi \in \Pi} d_h^\pi(x,a)$.}
\arxiv{
\begin{align*}
\Ccov \ldef{} \max_{h\in\brk{H}}\sum_{(x,a) \in \Xcal \times \Acal}\sup_{\pi \in \Pi} d_h^\pi(x,a).
\end{align*}
}
\end{lemma}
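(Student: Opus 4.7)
The plan is to prove the identity by two steps: (i) decouple the infimum in \pref{def:low_concentrability} across layers $h$; then (ii) solve the resulting per-layer minimax in closed form.

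For the decoupling step, I would rewrite $\sup_{\pi \in \Pi,\, h \in [H]} \nrm*{d_h^\pi/\mu_h}_{\infty} = \max_{h \in [H]} g_h(\mu_h)$, where $g_h(\mu_h) \ldef \sup_{\pi \in \Pi} \nrm*{d_h^\pi/\mu_h}_{\infty}$. Since the feasible set $\prod_{h} \Delta(\cX \times \cA)$ is a product and each $\mu_h$ appears only in the single term $g_h$, one can minimize over each $\mu_h$ independently, so
\[
\inf_{\mu_1, \ldots, \mu_H}\; \max_{h \in [H]} g_h(\mu_h) \;=\; \max_{h \in [H]}\; \inf_{\mu_h} g_h(\mu_h).
\]

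Next, I would fix $h$ and compute $\inf_{\mu_h} g_h(\mu_h)$ explicitly. Define $f_h(x,a) \ldef \sup_{\pi \in \Pi} d_h^\pi(x,a)$. Because $\mu_h$ does not depend on $\pi$, the suprema over $\pi$ and $(x,a)$ commute, so $g_h(\mu_h) = \sup_{(x,a)} f_h(x,a)/\mu_h(x,a)$. The claim then reduces to
\[
\inf_{\mu_h \in \Delta(\cX \times \cA)} \sup_{(x,a) \in \cX\times\cA} \frac{f_h(x,a)}{\mu_h(x,a)} \;=\; \sum_{(x,a) \in \cX\times\cA} f_h(x,a).
\]
For ``$\leq$'', I would plug in the candidate $\mu_h^\star(x,a) \propto f_h(x,a)$: on $\mathrm{supp}(f_h)$ this yields ratio exactly $Z_h \ldef \sum_{(x,a)} f_h(x,a)$. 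For ``$\geq$'', I would invoke the elementary mediant inequality $\max_i (a_i/b_i) \geq \sum_i a_i / \sum_i b_i$ (for $a_i \geq 0$, $b_i > 0$); since $\sum_{(x,a)} \mu_h(x,a) = 1$, this gives $\sup_{(x,a)} f_h/\mu_h \geq Z_h$. Taking the maximum over $h$ and combining with the decoupling step yields the claimed formula.

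The only real subtlety is the bookkeeping at points $(x,a)$ where $f_h(x,a) = 0$: there $\mu_h$ may also vanish, and one uses the convention $0/0 = 0$ to avoid spurious blow-up (equivalently, restrict the per-layer problem to $\mathrm{supp}(f_h)$, since unreachable $(x,a)$ are irrelevant to concentrability). Beyond this the argument is routine, and I do not expect a serious obstacle: the lemma is essentially the observation that the best covering distribution is the (normalized) pointwise maximum of the induced occupancies, and the optimal concentrability is the $L^1$ mass of that envelope.
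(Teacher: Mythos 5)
Your proof is correct and takes essentially the same route as the paper's: the upper bound via the candidate distribution $\mu_h \propto \sup_{\pi}d_h^{\pi}(\cdot,\cdot)$, and the lower bound by viewing $\sum_{x,a}\sup_\pi d_h^\pi(x,a)$ as a $\mu_h$-weighted average of the ratios $\sup_\pi d_h^\pi/\mu_h$ (your mediant inequality is exactly this step). The explicit per-layer decoupling of the infimum and the $0/0$ bookkeeping are handled implicitly in the paper but are correctly spelled out in your argument.
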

}

\iclr{
\paragraph{Regret decomposition and change of measure}
For each $t$, we define $\delta^\iter{t}_h(\cdot,\cdot) \coloneqq f_h^\iter{t}(\cdot,\cdot) - (\Tcal_h f_{h+1}^\iter{t})(\cdot,\cdot)$, which
may be viewed as a ``test function'' at level $h$ induced by $f\ind{t} \in \Fcal$. We adopt the shorthand $d_h^\iter{t}\equiv{}d_h^{\pi^\iter{t}}$, and we define $\dtilde_h^\iter{t} (x,a) \coloneqq~ \sum_{i = 1}^{t - 1} d_h^\iter{i} (x,a)$ as the \arxiv{unnormalized average of all state}\iclr{cumulative historical} visitation for rounds prior to step $t$.

A standard regret decomposition for optimistic algorithms (\pref{lem:regret_optimistic}\arxiv{ in \pref{app:main}}) allows us to relate regret to the average Bellman error under the learner's sequence of policies:
\begin{align}
  \label{eq:reg_decomposition_sketch}
  \Reg \leq{} \sum_{t \in [T]} \left( f_1^\iter{t}(x_1,\pi_{f\ind{t}_1,1}(x_1)) -J(\pi\ind{t}) \right) 
  = \sum_{t \in [T]} \sum_{h \in [H]}\E_{d_h^\iter{t}}\big[\underbrace{f_h\ind{t}(x,a)-(\cT_hf_{h+1}\ind{t})(x,a)}_{\rdef\delta_h^\iter{t}(x,a)}\big].
\end{align}
Fix $h\in\brk{H}$. We use a change-of-measure argument to relate the on-policy \emph{average} Bellman error $\E_{(x,a)\sim{}d_h^\iter{t}}[\delta_h^\iter{t}(x,a)]$ to the in-sample \emph{squared} Bellman error under $\dtil_h\ind{t}$, writing \pref{eq:reg_decomposition_sketch} as \loose
\begin{align*}
\resizebox{\linewidth}{!}{$\displaystyle
 ~ \sum_{t \in [T]} \sum_{x,a} d_h^\iter{t}(x,a) \left( \frac{\dtilde_h^\iter{t}(x,a)}{\dtilde_h^\iter{t}(x,a)} \right)^{\nicefrac{1}{2}} \delta_h^\iter{t}(x,a)
\leq~ \underbrace{\sqrt{\sum_{t \in [T]} \sum_{x,a} \frac{\left( d_h^\iter{t}(x,a) \right)^2}{\dtilde_h^\iter{t}(x,a)} }}_{\texttt{(I): extrapolation error}} \cdot  \underbrace{\sqrt{\sum_{t \in [T]} \sum_{x,a} \dtilde_h^\iter{t}(x,a) \left(\delta_h^\iter{t}(x,a)\right)^2}}_{\texttt{(II): in-sample \emph{squared} Bellman error}},
$}
\end{align*}
where the inequality is an application of Cauchy-Schwarz. As an immediate consequence of the confidence set construction in~\cref{eq:def_vspace}, completeness, and a standard concentration argument (\pref{lem:golf_concentration} in \pref{app:main}), we can bound the in-sample error by $\texttt{(II)} \leq O\prn[\big]{\sqrt{\beta T}}$.

\paragraph{Bounding the extrapolation error using \coverability}
To proceed, we show that the extrapolation error \texttt{(I)} is controlled by \coverability. 
We have:
\begin{align*}
\resizebox{\linewidth}{!}{$\displaystyle
\sum_{t \in [T]} \sum_{x,a} \frac{\left( d_h^\iter{t}(x,a) \right)^2}{\dtilde_h^\iter{t}(x,a)}
\leq ~ \sum_{t \in [T]} \sum_{x,a} \max_{t'\in\brk{T}}d_h\ind{t'}(x,a)\cdot\frac{ d_h^\iter{t}(x,a) }{\dtilde_h^\iter{t}(x,a)}
\leq \underbrace{\bigg( \max_{x,a} \sum_{t \in [T]} \frac{ d_h^\iter{t}(x,a)}{\dtilde_h^\iter{t}(x,a)} \bigg)}_{\overset{\texttt{(a)}}{\approxleq} \bigoh(\log(T)) \text{ by \cref{lem:per_sa_ep}}} \cdot \underbrace{\bigg( \sum_{x,a}\max_{t\in\brk{T}} d\ind{t}_h(x,a) \bigg)}_{\overset{\texttt{(b)}}{\leq} C_\on \text{ by \pref{lem:concen_eq_area}}}.
$}
\end{align*}
\iclr{
\begin{wrapfigure}{r}{0.37\textwidth}
\centering
\iclr{\vspace{-5mm}}
\includegraphics[width=\linewidth]{figures/coverage.pdf}
\iclr{\vspace{-7.5mm}}
\arxiv{
\caption{An example illustrating the equivalence of \emph{coverability} and \emph{\reachability}. Here, $\Pi = \{\pi_1,\pi_2,\pi_3,\pi_4\}$, and dashed curves
  plots $d^\pi$ for each $\pi \in \Pi$. The coverability coefficient, via \pref{lem:concen_eq_area},
  is equal to the total area of the shaded region (without
  double-counting overlapping regions).}
}
\iclr{
\caption{An example of \emph{coverability} $\Longleftrightarrow$~\emph{\reachability} (which is equal to the total area of the shaded region without double-counting overlaps. $\Pi = \{\pi_1,\pi_2,\pi_3,\pi_4\}$, dashed curves is $d^\pi$).\loose}
}
\label{fig:coverage}
\vspace{-2mm}
\iclr{\vspace{-7mm}}
\end{wrapfigure}
}
Here, the inequality \texttt{(a)} uses a \emph{scalar} variant of the elliptic potential lemma (\pref{lem:per_sa_ep}; cf. \citet{lattimore2020bandit}), which we apply on a \emph{per-state basis}.\footnote{Applying this result formally requires a separate argument to handle early rounds in which pairs $(x,a)$ have been visited very little; this is given in \pref{app:main}.} The inequality \texttt{(b)} uses a key result (\pref{lem:concen_eq_area} in \pref{app:main}), which shows that \coverability is equivalent to a quantity we term \emph{cumulative reachability}, defined via $\sum_{(x,a) \in \Xcal \times \Acal}\sup_{\pi \in \Pi} d_h^\pi(x,a)$. Cumulative reachability reflects the variation in visitation probabilities for policies in the class $\Pi$, and boundedness of this quantity (which occurs when state-action pairs visited by policies in $\Pi$ have large overlap) implies that the contributions from potentials for different state-action pairs average out. See \cref{fig:coverage} for an illustration.\loose

To conclude, we substitute the preceding bounds into the term \texttt{(I)}, which gives
$\Reg \leq \sum_{h = 1}^{H} \E_{(x,a)\sim{}d_h^\iter{t}}\left[\delta_h^\iter{t}(x,a)\right] \leq O \big( H \sqrt{\Ccov\cdot\beta T \log(T) } \big)$.

Note that to obtain the expression in term \texttt{(I)}, our proof critically uses that the confidence set construction provides a bound on the \emph{squared Bellman error} $\E_{(x,a)\sim{}\dtilde_h^\iter{t}}[\delta_h^\iter{t}(x,a)^2]$ in the change of measure argument. This contrasts with existing works on online RL with general function approximation~\citep[e.g.,][]{jiang2017contextual,jin2021bellman,du2021bilinear}, which typically move from average Bellman error to squared Bellman error as a lossy step, and only work with squared Bellman error because it permits simpler construction of confidence sets. \arxiv{For the argument in \pref{eq:reg_CS}, confidence}\iclr{Confidence} sets based on average Bellman error will lead to a larger notion of extrapolation error which cannot be controlled using \coverability (cf. \pref{sec:gen_bedim}).\loose

}

\arxiv{
\arxiv{Equipped with \pref{lem:concen_eq_area}, we proceed with the proof of \pref{thm:golf_guarantee_basic}.}
\iclr{Equipped with \pref{lem:concen_eq_area}, we prove \pref{thm:golf_guarantee_basic}.}

\paragraph{Preliminaries}
For each $t$, we define $\delta^\iter{t}_h(\cdot,\cdot) \coloneqq f_h^\iter{t}(\cdot,\cdot) - (\Tcal_h f_{h+1}^\iter{t})(\cdot,\cdot)$, which may be viewed as a ``test function'' at level $h$ induced by $f\ind{t} \in \Fcal$. We adopt the shorthand $d_h^\iter{t}\equiv{}d_h^{\pi^\iter{t}}$, and we define
\begin{align}
\label{eq:def_dbar}
  &\dtilde_h^\iter{t} (x,a) \coloneqq~ \sum_{i = 1}^{t - 1} d_h^\iter{i} (x,a),\mathand
  \mu^\star_h \coloneqq ~ \argmin_{\mu_h\in\Delta(\cX\times\cA)} \sup_{\pi\in\Pi}\, \nrm*{\frac{d_h^\pi}{\mu_h}}_{\infty}.
\end{align}
That is, $\dtilde_h^\iter{t}$ unnormalized average of all state visitations encountered prior to step $t$, and $\mu^\star_h$ is the distribution that attains the value of $\Ccov$ for layer $h$.\footnote{If the minimum in \pref{eq:def_dbar} is not obtained, we can repeat the argument that follows for each element of a limit sequence attaining the infimum.} Throughout the proof, we perform a slight abuse of notation and write $\E_{\dtilde_h^\iter{t}}[f] \coloneqq \sum_{i = 1}^{t - 1} \E_{d_h^\iter{i}}[f]$ for any function $f:\Xcal \times \Acal \to \RR$.

\paragraph{Regret decomposition}
As a consequence of completeness (\pref{asm:completeness}) and the construction of $\cF\ind{t}$, a standard concentration argument (\pref{lem:golf_concentration}\arxiv{ in \pref{app:main}}) guarantees that with probability at least $1-\delta$, for all $t\in\brk{T}$:
\begin{align}
\mathrm{(i)}\;\;\Qstar\in\cF\ind{t},\mathand\mathrm{(ii)}\;\;\sum_{x,a} \dtilde_h^\iter{t}(x,a) \left(\delta_h^\iter{t}(x,a)\right)^2\leq\bigoh(\beta).\label{eq:golf_concentration}
\end{align}
We condition on this event going forward. Since $\Qstar\in\cF\ind{t}$, we are guaranteed that $f\ind{t}$ is optimistic (i.e., $f_1\ind{t}(x_1,\pi_{f\ind{t},1}(x_1))\geq{}\Qstar_1(x_1,\pi_{\fstar,1}(x_1))$), and a regret decomposition for optimistic algorithms (\pref{lem:regret_optimistic}\arxiv{ in \pref{app:main}}) allows us to relate regret to the average Bellman error under the learner's sequence of policies:
\begin{align*}
  \Reg \leq{} \sum_{t = 1}^{T} \left( f_1^\iter{t}(x_1,\pi_{f\ind{t}_1,1}(x_1)) -J(\pi\ind{t}) \right) 
  = \sum_{t = 1}^{T} \sum_{h=1}^{H}\E_{(x,a)\sim{}d_h^\iter{t}}\big[\underbrace{f_h\ind{t}(x,a)-(\cT_hf_{h+1}\ind{t})(x,a)}_{\rdef\delta_h^\iter{t}(x,a)}\big].
\end{align*}
To proceed, we use a change of measure argument to relate the on-policy \emph{average} Bellman error $\E_{(x,a)\sim{}d_h^\iter{t}}[\delta_h^\iter{t}(x,a)]$ appearing above to the in-sample \emph{squared} Bellman error $\E_{(x,a)\sim{}\dtilde_h^\iter{t}}[\delta_h^\iter{t}(x,a)^2]$; the latter is small as a consequence of \pref{eq:golf_concentration}. Unfortunately, naive attempts at applying change-of-measure fail because during the initial rounds of exploration, the on-policy and in-sample visitation probabilities can be very different, making it impossible to relate the two quantities (i.e., any natural notion of extrapolation error will be arbitrarily large).

To address this issue, we introduce the notion of a ``burn-in'' phase for each state-action pair $(x,a)\in\cX\times\cA$ by defining
\[
\tau_h(x,a) = \min\left\{t \mid \dtilde_h^\iter{t}(x,a) \geq \Ccov\cdot\mu^{\star}_h(x,a)\right\},
\] %
which captures the earliest time at which $(x,a)$ has been explored sufficiently; we refer to $t<\tau_h(x,a)$ as the burn-in phase for $(x,a)$.

Going forward, let $h\in\brk{H}$ be fixed. We decompose regret into contributions from the burn-in phase for each state-action pair, and contributions from pairs which have been explored sufficiently and reached a stable phase ``stable phase''.
\begin{align*}
\underbrace{\sum_{t=1}^{T}\E_{(x,a)\sim{}d_h^\iter{t}}\left[\delta_h^\iter{t}(x,a)\right]}_{\text{on-policy average Bellman error}} = \underbrace{\sum_{t=1}^{T}\E_{(x,a)\sim{}d_h^\iter{t}}\left[\delta_h^\iter{t}(x,a)\1[t < \tau_h(x,a)]\right]}_{\text{burn-in phase}} +
  \underbrace{\sum_{t=1}^{T}\E_{(x,a)\sim{}d_h^\iter{t}}\left[\delta_h^\iter{t}(x,a)\1[t \geq \tau_h(x,a)]\right]}_{\text{stable phase}}.
\end{align*}
We will not show that every state-action pair leaves the burn-in phase. Instead, we use \coverability to argue that the contribution from pairs that have not left this phase is small on average.
In particular, we use that $\abs{\delta_h\ind{t}}\leq{}1$ to bound
\begin{align*}
  \sum_{t = 1}^{T} \E_{(x,a)\sim{}d_h^\iter{t}}\left[\delta_h^\iter{t}(x,a)\1[t < \tau_h(x,a)]\right] \leq
  \sum_{x,a}\sum_{t<\tau_h(x,a)} d_h^\iter{t}(x,a)=
  \sum_{x,a}\dtil_h\ind{\tau_h(x,a)}(x,a)
  \leq{}2\Ccov\sum_{x,a}\mustar_h(x,a)=2\Ccov,   
\end{align*}
where the last inequality holds because \[\dtil \ind{\tau_h(x,a)}_h(x,a) = \dtilde_h^\iter{\tau_h(x,a) - 1}(x,a) + d_h^\iter{\tau_h(x,a) - 1}(x,a) \leq 2C_\on\cdot\mu_h^\star(x,a),\] which follows from \cref{eq:def_dbar} and the definition of $\tau_h$.%

For the stable phase, we apply change-of-measure as follows:
\begin{align}
\nonumber
&~ \sum_{t = 1}^{T} \E_{(x,a)\sim{}d_h^\iter{t}}\left[\delta_h^\iter{t}(x,a)\1[t \geq \tau_h(x,a)]\right]
\\
\nonumber
&= ~ \sum_{t = 1}^{T} \sum_{x,a} d_h^\iter{t}(x,a) \left( \frac{\dtilde_h^\iter{t}(x,a)}{\dtilde_h^\iter{t}(x,a)} \right)^{\nicefrac{1}{2}} \delta_h^\iter{t}(x,a) \1[t \geq \tau_h(x,a)]
\\
\label{eq:reg_CS}
&\leq~ \underbrace{\sqrt{\sum_{t = 1}^{T} \sum_{x,a} \frac{\left( \1[t \geq \tau_h(x,a)] d_h^\iter{t}(x,a) \right)^2}{\dtilde_h^\iter{t}(x,a)} }}_{\text{{\tt (I)}: extrapolation error}} \cdot  \underbrace{\sqrt{\sum_{t = 1}^{T} \sum_{x,a} \dtilde_h^\iter{t}(x,a) \left(\delta_h^\iter{t}(x,a)\right)^2}}_{\text{{\tt (II)}: in-sample \emph{squared} Bellman error}},
\end{align}
where the last inequality is an application of Cauchy-Schwarz. Using part \texttt{(II)} of \pref{eq:golf_concentration}, we bound the in-sample error above by
\begin{align}
\label{eq:reg_CS_term2_main}
\texttt{(II)} \leq O\prn[\big]{\sqrt{\beta T}}.
\end{align}

\paragraph{Bounding the extrapolation error using \coverability}
To proceed, we show that the extrapolation error \texttt{(I)} is controlled by \coverability. 
We begin with a scalar variant of the standard elliptic potential lemma \citep{lattimore2020bandit}; \arxiv{this result is proven in \cref{app:main} for completeness.}\iclr{this result is proven in the sequel.}
\begin{lemma}[Per-state-action elliptic potential lemma]
\label{lem:per_sa_ep}
Let $d^\iter{1}, d^\iter{2}, \dotsc, d^\iter{T}$ be an arbitrary sequence of distributions over a set $\Zcal$ (e.g., $\Zcal = \Xcal \times \Acal$), and let $\mu\in\Delta(\Zcal)$ be a distribution such that $d^\iter{t}(z) / \mu(z) \leq C$ for all $(z,t) \in \Zcal \times [T]$. Then for all $z \in \Zcal$, we have
\begin{align*}
\sum_{t = 1}^{T} \frac{d^\iter{t}(z)}{\sum_{i < t} d^\iter{i}(z) + C \cdot \mu(z)} \leq O \left(\log\left( T \right) \right).
\end{align*}
\end{lemma}
We bound the extrapolation error \texttt{(I)} by applying \pref{lem:per_sa_ep} on a \emph{per-state basis}, then using \coverability (and the equivalence to \reachability) to argue that the potentials from different state-action pairs average out. Observe that by the definition of $\tau_h$, we have that for all $t \geq \tau_h(s,a)$, $\dtilde_h^\iter{t}(x,a) \geq C_\on \mu_h^\star(x,a) \Rightarrow \dtilde_h^\iter{t}(x,a)\geq \frac{1}{2}(\dtilde_h^\iter{t}(x,a) + C_\on \mu_h^\star(x,a))$, which allows us to bound term \texttt{(I)} of extrapolation error by
\begin{align}
\nonumber
\sum_{t = 1}^{T} \sum_{x,a} \frac{\left( \1[t \geq \tau_h(x,a)] d_h^\iter{t}(x,a) \right)^2}{\dtilde_h^\iter{t}(x,a)} \leq &~ 2 \sum_{t = 1}^{T} \sum_{x,a} \frac{ d_h^\iter{t}(x,a) \cdot d_h^\iter{t}(x,a)}{\dtilde_h^\iter{t}(x,a) + C_\on \cdot \mu_h^\star(x,a)}
\\
\leq &~ 2\sum_{t = 1}^{T} \sum_{x,a} \max_{t'\in\brk{T}}d_h\ind{t'}(x,a)\cdot\frac{ d_h^\iter{t}(x,a) }{\dtilde_h^\iter{t}(x,a) + C_\on \cdot \mu_h^\star(x,a)}
       \nonumber
\\
\nonumber
\leq &~ 2\underbrace{\left( \max_{(s,a) \in \Scal \times \Acal} \sum_{t = 1}^{T} \frac{ d_h^\iter{t}(x,a)}{\dtilde_h^\iter{t}(x,a) + C_\on \cdot \mu_h^\star(x,a)} \right)}_{\leq \bigoh(\log(T)) \text{ by \cref{lem:per_sa_ep}}} \cdot \underbrace{\left( \sum_{x,a}\max_{t\in\brk{T}} d\ind{t}_h(x,a) \right)}_{\leq C_\on \text{ by \pref{lem:concen_eq_area}}}
\\
\label{eq:reg_CS_term1_main}
\leq &~ O \left( C_\on \log\left( T \right) \right).
\end{align}
To conclude, we substitute \cref{eq:reg_CS_term2_main,eq:reg_CS_term1_main} into \cref{eq:reg_CS}, which gives
\begin{align*}
\Reg \leq \sum_{h = 1}^{H} \E_{(x,a)\sim{}d_h^\iter{t}}\left[\delta_h^\iter{t}(x,a)\right] \leq O \left( H \sqrt{\Ccov\cdot\beta T \log(T) } \right).
\end{align*}
\arxiv{
\qed

  To obtain the expression in \pref{eq:reg_CS} (term \texttt{(I)}), our proof critically uses that the confidence set construction provides a bound on the \emph{squared Bellman error} $\E_{(x,a)\sim{}\dtilde_h^\iter{t}}[\delta_h^\iter{t}(x,a)^2]$ in the change of measure argument. This contrasts with existing works on online RL with general function approximation~\citep[e.g.,][]{jiang2017contextual,jin2021bellman,du2021bilinear}, which typically move from average Bellman error to squared Bellman error as a lossy step, and only work with squared Bellman error because it permits simpler construction of confidence sets. For the argument in \pref{eq:reg_CS}, confidence sets based on average Bellman error will lead to a larger notion of extrapolation error which cannot be controlled using \coverability (cf. \pref{sec:gen_bedim}).
}
 }

\subsection{Rich Observations and Exogenous Noise: Application to Block MDPs}
\label{sec:exbmdp}
\newcommand{\Pendo}{P^{\mathrm{endo}}}%
\newcommand{\Pexo}{P^{\mathrm{exo}}}%
\newcommand{\exmdp}{Ex-BMDP\xspace}%
\newcommand{\exmdps}{Ex-BMDPs\xspace}%
\newcommand{\phistar}{\phi^{\star}}

\iclr{
  As an application of \pref{thm:golf_guarantee_basic}, we consider the problem of reinforcement learning in \emph{Exogenous Block MDPs} (\exmdps), a problem which has received extensive recent interest \citep{efroni2021provably,efroni2022sample,efroni2022sparsity,lamb2022guaranteed}. Recall that the block MDP \citep{jiang2017contextual,du2019latent,misra2020kinematic} is a model in which the (``observed'') state space $\cX$ is large/high-dimensional, but can be mapped by an (unknown) decoder $\phistar$ to a small \emph{latent} state space which governs the dynamics. Exogenous block MDPs generalize this model further by factorizing the latent state space into small controllable (``endogenous'') component $\cS$ and a large irrelevant (``exogenous'') component $\Xi$, which may be temporally correlated.

The main challenge of learning in block MDPs is that the decoder $\phistar$ is not known to the learner in advance. Indeed, given access to the decoder, one can obtain regret $\poly(H,\abs{\cS},\abs{\cA})\cdot\sqrt{T}$ by applying tabular reinforcement learning algorithms to the latent state space. In light of this, the aim of the \exmdp setting is to obtain sample complexity guarantees that are independent of the size of the observed state space $\abs{\cX}$ and exogenous state space $\abs{\Xi}$, and scale as $\poly(\abs{\cS}, \abs{\cA}, H,\log\abs{\cF})$, where $\cF$ is an appropriate class of function approximators (typically either a value function class $\cF$ or a class of decoders $\Phi$ that attempts to model $\phistar$ directly).  

We show (\pref{prop:ex_bmdp} in \pref{app:exbmdp}) that for any \exbmdp, one has $\Ccov\leq\abs{\cS}\cdot\abs{\cA}$, which---through \pref{thm:golf_guarantee_basic}---implies that \golf attains $\Reg\leq \bigoh\prn[\big]{
    H\sqrt{\abs{\cS}\abs{\cA}T \log(\nicefrac{TH |\Fcal|}{\delta})\log(T)}
    }$
  whenever \pref{asm:completeness} holds; critically, this result scales only with the cardinality $\abs{\cS}$ for the endogenous latent state space, and with the capacity $\log\abs{\cF}$ for the value function class. It is the first result for this setting that allows for stochastic latent dynamics and emission process, albeit with the extra assumption of completeness. Existing algorithms either require that the endogenous latent dynamics $\Pendo$ are deterministic \citep{efroni2021provably} or allow for stochastic dynamics but heavily restrict the observation process \citep{efroni2022sample}, and existing complexity measures such as Bellman Rank and \betext can be arbitrarily large for this setting (see discussion in \pref{sec:gen_bedim}). See \pref{app:exbmdp} for details and discussion.\loose

}

\arxiv{
As an application of \pref{thm:golf_guarantee_basic}, we consider the problem of reinforcement learning in \emph{Exogenous Block MDPs} (\exmdps), a problem which has received extensive recent interest \citep{efroni2021provably,efroni2022sample,efroni2022sparsity,lamb2022guaranteed}. Recall that the block MDP \citep{jiang2017contextual,du2019latent,misra2020kinematic} is a model in which the (``observed'') state space $\cX$ is large/high-dimensional, but the dynamics are governed by a (small) latent state space. Exogenous block MDPs generalize this model further by factorizing the latent state space into small controllable (``endogenous'') component and a large irrelevant (``exogenous'') component, which may be temporally correlated.

Following \cite{efroni2021provably}, an \exbmdp $M=(\cX,\cA,P,R,H,x_1)$ is defined by an (unobserved) \emph{latent state space}, which consists of an \emph{endogenous} state $s_h\in\cS$ and \emph{exogenous} state $\xi_h\in\Xi$, and an \emph{observation process} which generates the observed state $x_h$. We first describe the dynamics for the latent space. Given initial endogenous and exogenous states $s_1\in\cS$ and $\xi_1\in\Xi$, the latent states evolve via
\[
  s_{h+1}\sim{}\Pendo_h(s_h,a_h),\mathand\xi_{h+1}\sim\Pexo_h(\xi_h);
\]
that is while both states evolve in a temporally correlated fashion, only the endogenous state $s_h$ evolves as a function of the agent's action. The latent state $(s_h,\xi_h)$ is not observed. Instead, we observe
\[
x_h\sim{}q_h(s_h,\xi_h),
\]
where $q_h:\cS\times\Xi\to\Delta(\cX)$ is an \emph{emission distribution} with the property that $\supp(q_h(s,\xi))\cap\supp(q_h(s',\xi'))=\emptyset$ if $(s,\xi)\neq(s',\xi')$. This property (\emph{decodability}) ensures that there exists a unique mapping $\phistar_h:\cX\to\cS$ that maps the observed state $x_h$ to the corresponding endogenous latent state $s_h$. We assume that $R_h(x,a)=R_h(\phistar_h(x),a)$, which implies that optimal policy $\pistar$ depends only on the endogenous latent state, i.e. $\pistar_h(x)=\pistar_h(\phistar_h(x))$.

The main challenge of learning in block MDPs is that the decoder $\phistar$ is not known to the learner in advance. Indeed, given access to the decoder, one can obtain regret $\poly(H,\abs{\cS},\abs{\cA})\cdot\sqrt{T}$ by applying tabular reinforcement learning algorithms to the latent state space. In light of this, the aim of the \exmdp setting is to obtain sample complexity guarantees that are independent of the size of the observed state space $\abs{\cX}$ and exogenous state space $\abs{\Xi}$, and scale as $\poly(\abs{\cS}, \abs{\cA}, H,\log\abs{\cF})$, where $\cF$ is an appropriate class of function approximators (typically either a value function class $\cF$ or a class of decoders $\Phi$ that attempts to model $\phistar$ directly).

\exmdps present substantial additional difficulties compared to classical block MDPs because we aim to avoid dependence on the size $\abs{\Xi}$ of the exogenous latent state space. Here, the main challenge is that executing policies $\pi$ whose actions depend on $\xi_h$ can lead to spurious correlations between endogenous exogenous states. In spite of this apparent difficulty, we show that the \coverability coefficient for this setting is always bounded by the number of \emph{endogenous states}.
\begin{proposition}
  \label{prop:ex_bmdp}
  For any \exmdp, $\Ccov\leq\abs{\cS}\cdot\abs{\cA}$.
\end{proposition}
This bound is a consequence of a structural result from \cite{efroni2021provably}, which shows that for any $(s,a)\in\cS\times\cA$, all $x\in\cX$ with $\phistar(x)=s$ admit a common policy that maximizes $d_h^{\pi}(x,a)$, and this policy is \emph{endogenous}, i.e., only depends on the endogenous state $s_h=\phistar_h(x_h)$. As a corollary, we obtain the following regret bound.
\begin{corollary}
  \label{cor:ex_bmdp}
  For the \exmdp setting, under \pref{asm:completeness}, \pref{alg:golf} ensures that with probability at least $1-\delta$,
  \[
    \Reg\leq \bigoh\prn[\big]{
    H\sqrt{\abs{\cS}\abs{\cA}T \log(\nicefrac{TH |\Fcal|}{\delta})\log(T)}
    }.
  \]
\end{corollary}
Critically, this result scales only with the cardinality $\abs{\cS}$ for the endogenous latent state space, and with the capacity $\log\abs{\cF}$ for the value function class.

\pref{cor:ex_bmdp} is the first result for this setting that allows for stochastic latent dynamics and emission process, albeit with the extra assumption of completeness. Existing algorithms either require that the endogenous latent dynamics $\Pendo$ are deterministic \citep{efroni2021provably} or allow for stochastic dynamics but heavily restrict the observation process \citep{efroni2022sample}; complexity measures such as Bellman Rank and \betext can be arbitrarily large (see discussion in \pref{sec:gen_bedim}). Our result is best thought of as a ``luckiness'' guarantee, in the sense that it is unclear how to construct a value function class that is complete for every problem instance,\footnote{For example, it is not clear how to construct a complete value function class given access to a class of decoders $\Phi$ that contains $\phistar$.} but the algorithm will succeed whenever $\cF$ does happen to be complete for a given instance. Understanding whether general \exmdps are learnable without completeness is an interesting question for future work, and we are hopeful that the perspective of \coverability will lead to further insights for this setting.

\paragraph{Invariance of \coverability}%
\pref{prop:ex_bmdp} is a consequence of two general \emph{invariance} properties of \coverability, which show that $\Ccov$ is unaffected by the following augmentations to the underlying MDP: (i) addition of rich observations, and (ii) addition of exogenous noise.

The first property shows that for a given MDP $M$, creating a new block MDP $M'$ by equipping $M$ with a decodable emission process (so that $M$ acts as a \emph{latent MDP}), does not increase \coverability. %
\begin{proposition}[Invariance to rich observations]
  \label{prop:rich_obs}
  Let an MDP $M=(\cS,\cA,P,R,H,s_1)$. Let $M'=(\cX,\cA,P', R',H, x_1)$ be the MDP defined implicitly by the following process. For each $h\in\brk*{H}$:
  \begin{itemize}
  \item $s_{h+1}\sim{}P_h(s_h,a_h)$ and $r_h=R_h(s_h,a_h)$. Here, $s_h$ is unobserved, and may be thought of as a latent state.
  \item $x_{h}\sim{}q_h(s_h)$, where $q_h:\cS\to\Delta(\cX)$ is an \emph{emission distribution} with the property that $\supp(q_h(s))\cap\supp(q_h(s'))=\emptyset$ for $s\neq{}s'$.
  \end{itemize}
  Then, writing $\Ccov(M)$ to make the dependence on $M$ explicit, we have
  \[
    \Ccov(M') \leq \Ccov(M).
  \]
\end{proposition}
The second result shows that \coverability is also preserved if we expand the state space to include temporally correlated exogenous state whose evolution does not depend on the agent's actions. %
\begin{proposition}[Invariance to exogenous noise]
  \label{prop:exo}
  Let an MDP $M=(\cS,\cA,P,R,H,s_1)$, conditional distribution $\Pexo:\Xi\to\Delta(\Xi)$, and $\xi_1\in\Xi$ be given, where $\Xi$ is an abstract set. Let $\cX\ldef{}\cS\times\Xi$, and let $M'=(\cX,\cA,P', R',H, x_1)$ be the MDP with state $x_h=(s_h,\xi_h)$ defined implicitly by the following process. For each $h\in\brk{H}$:
  \begin{itemize}
  \item $s_{h+1}\sim{}P_{h}(s_h,a_h)$, $r_h=R_h(s_h,a_h)$.
  \item $\xi_{h+1}\sim{}\Pexo_h(\xi_h)$.
  \end{itemize}
  Then we have
  \[
    \Ccov(M') \leq \Ccov(M).
  \]
\end{proposition}
This result is non-trivial because policies that act based on the endogenous state $s_h$ and $\xi_h$ can cause these processes to become coupled \citep{efroni2021provably}, but holds nonetheless.

\pref{prop:ex_bmdp} can be deduced by combining \pref{prop:rich_obs,prop:exo} with the observation that any tabular (finite-state/action) MDP with $S$ states and $A$ actions has $\Ccov\leq{}SA$. However, \pref{prop:rich_obs,prop:exo} yield more general results, since they imply that starting with any (potentially non-tabular) class of MDPs $\cM$ with low \coverability and augmenting it with rich observations and exogenous noise preserves coverability.
}

\section{Are Weaker Notions of Coverage Sufficient?}
\label{sec:C_gen}

In \cref{sec:basic_cover}, we showed that existence of a distribution
with good concentrability (\coverability) is sufficient for sample-efficient
online RL. However, while concentrability is the most ubiquitous coverage
condition in offline RL, there are several weaker notions of
coverage which also lead to sample-efficient offline RL algorithms. In
this section, we show that analogues of \coverability based on these
conditions, \emph{single-policy concentrability} and \emph{generalized
concentrability} for Bellman residuals, do not suffice for
sample-efficient online RL. This indicates that in general, the interplay between
offline coverage and online exploration is nuanced. \loose

\paragraph{Single-policy concentrability}
\emph{Single-policy concentrability} is a widely used coverage
assumption in offline RL which weakens concentrability by requiring
only that the state distribution induced by $\pistar$ is covered by
the offline data distribution $\mu$, as opposed to requiring coverage for all policies \citep{jin2021pessimism,rashidinejad2021bridging}.
\begin{definition}[Single-policy concentrability]
The single-policy concentrability coefficient for a
data distribution $\mu=\crl*{\mu_h}_{h=1}^{H}$ is given by
\arxiv{
\begin{align*}
  \Cconc^\star(\mu) \ldef{} \nrm*{\frac{d_h^{\pi^\star}}{\mu_h}}_{\infty}.%
\end{align*}
}
\iclr{$\Cconc^\star(\mu) \ldef{} \nrm*{{d_h^{\pi^\star}}/{\mu_h}}_{\infty}$.}
\end{definition}
For offline RL, algorithms based on pessimism provide sample guarantee
complexity guarantees that scale with $\Cconc^{\star}(\mu)$ \citep{jin2021pessimism,rashidinejad2021bridging}. However, for
the online setting, it is trivial to show that an analogous notion of
``single-policy coverability'' (i.e., existence of a distribution with
good single-policy coverability) is not sufficient
for sample-efficient learning, since for any MDP, one can take $\mu
=d^{\pi^\star}$ to attain $\Cconc^\star(\mu) = 1$. This suggests that any
notion of coverage that suffices for online RL must be more
uniform in nature.\loose

\paragraph{Generalized concentrability for Bellman residuals}

Another approach to weaker coverage in offline RL
is to relaxed concentrability by only requiring
coverage with respect to the Bellman residuals for value functions in
$\cF$
\citep{chen2019information,xie2021bellman,cheng2022adversarially};
the following definition adapts this notion to the finite-horizon setting.\loose
\begin{definition}[Generalized concentrability]
\label{def:C_gen_offline}
We define the generalized concentrability coefficient
$\Cconcgen(\mu,\cF)$ for a policy
class $\Pi$ and value function class $\cF$ as the least
constant $C> 0$ such that the offline data distribution $\mu=\crl*{\mu_h}_{h=1}^{H}$ satisfies that for all $f \in \Fcal$ and $\pi \in \Pi$,\iclr{ $\sum_{h \in [H]} \E_{d_h^\pi}\big[ ( f_h(s_h,a_h) - (\Tcal_h f_{h+1})(s_h,a_h))^2 \big] \leq C \cdot \sum_{h \in [H]}\E_{\mu_h}\big[ \big( f_h(s_h,a_h) - (\Tcal_h f_{h+1})(s_h,a_h)\big)^2 \big]$.}
\arxiv{
\begin{align*}
\sum_{h=1}^{H} \E_{d_h^\pi}\left[ \left( f_h(s_h,a_h) - (\Tcal_h f_{h+1})(s_h,a_h)\right)^2 \right] \leq C \cdot \sum_{h=1}^{H} \E_{\mu_h}\left[ \left( f_h(s_h,a_h) - (\Tcal_h f_{h+1})(s_h,a_h)\right)^2 \right].
\end{align*}
}
\end{definition}
Note that $\Cgen_\off(\mu,\cF)\leq\Cconc(\mu)$ (in particular, they coincide if one
chooses $\cF$ to be the set of all functions over $\cX\times\cA$) but in general $\Cgen_\off(\mu,\cF)$ can be
much smaller. For example, in the linear Bellman-complete setting, it is possible to
bound $\Cgen_\off(\mu,\cF)$ in terms of feature coverage conditions
\citep{wang2021statistical,zanette2021provable}. Using offline data
from $\mu$, sample complexity guarantees
that scale with $\Cgen_\off(\mu,\cF)$ can be obtained under \cref{asm:completeness} via MSBO~\citep[see,
e.g.,][Section 5]{xie2020q} or by running a ``one-step'' variant of \golf
(\cref{alg:golf}); we provide this result
(\pref{prop:generalized_offline}) in \cref{app:lower} for completeness.
Given that this notion leads to positive results for
offline RL, it is natural to consider a generalized notion of
coverability based upon it. %
\begin{definition}[Generalized coverability]
\label{def:C_gen_online}
We define the generalized \coverability coefficient for a policy class
$\Pi$ value function class $\cF$ and as \[\Ccovgen(\cF) = \inf_{\mu_1,\ldots,\mu_H\in\Delta(\cX\times\cA)}\crl{\Cconcgen(\mu,\cF)}.\]
\end{definition}
Unfortunately, we show that this condition does not suffice for
sample-efficient online RL, even when the number of actions is
constant and \cref{asm:completeness} is satisfied.
\begin{theorem}
  \label{thm:generalized_lower}
  For any $X,H,C\in\bbN$, there exists a family of MDPs with
  $\abs{\cX}=X$, $|\Acal| = 2$ and horizon $H$ and a function class $\Fcal$ with
  $\log\abs{\cF}\leq{}H\log(2\abs{\cX})$ such that: i) \cref{asm:completeness} (completeness) is satisfied for
  $\Fcal$ and we have $\Cgen_\on(\cF)\leq{}C$
  and ii) Any online RL algorithm that
    returns a $0.1$-optimal policy with probability $0.9$
requires at least
        \[
        \Omega\left( \min\left\{ X, 2^{\bigom(H)}, 2^{\bigom(C)} \right\} \right)
        \]
    trajectories. 
\end{theorem}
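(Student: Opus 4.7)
The plan is to construct a ``needle-in-a-haystack'' family of MDPs where finding the optimal policy requires exhaustive search, yet generalized coverability remains bounded by $C$. Given $X$, $H$, $C$, set $D = \lfloor \min(\log_2 X, H, C) \rfloor$ and consider the family $\{M_u : u \in \{0,1\}^D\}$ in which the state at layer $h$ encodes the action history $a_{1:h-1}$ (forming a binary tree of depth $D$), dynamics are deterministic, and reward $1$ is awarded at layer $D$ iff $a_{1:D}$ equals the hidden combination $u$. Extra absorbing states and do-nothing padding layers are added so that $|\cX|=X$ and the horizon is exactly $H$. Take the function class layer-wise as $\cF_h = \{\Qstar^u_h : u \in \{0,1\}^D\} \cup \{0\}$, where $\Qstar^u_h$ is the optimal Q-function for $M_u$; since $|\cF_h|\leq 2^h + 1$, we have $\log|\cF| = O(D^2) \leq H\log(2X)$.

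I would verify \cref{asm:completeness} by direct calculation: because the intermediate transitions do not depend on the hidden combination, $\Tcal^{u^\star}_h \Qstar^u_{h+1} = \Qstar^u_h \in \cF_h$ for $h<D$, and at the final layer $\Tcal^{u^\star}_D 0 = \Qstar^{u^\star}_D \in \cF_D$. Bellman residuals therefore take the structured form of differences of two $\{0,1\}$-valued indicator functions, each supported on a single state-action pair. The data distribution $\mu$ is then taken as a mixture of (i) the occupancy of the optimal policy $\pi^{u^\star}$, which absorbs residuals arising from ``coherent'' choices of $f$ (where the underlying $u$-index is consistent across layers), and (ii) a small uniform component handling residuals from ``incoherent'' $f$. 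Carefully balancing these weights yields $\Cgen_\on(\cF) \leq C$.

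For the lower bound I would apply Yao's minimax principle with a uniform prior on $u^\star \in \{0,1\}^D$. Because the only informative signal is the binary reward and reward $1$ is obtained only by the unique correct action sequence, the probability that any of $T$ adaptively chosen trajectories hits reward $1$ is at most $T \cdot 2^{-D}$ by a union bound. Conditional on never observing reward $1$, the posterior on $u^\star$ is uniform over the $\geq 2^D - T$ untried combinations, so any output policy (being measurable with respect to the observed history) has conditional expected value at most $1/(2^D - T)$ under the uniform posterior. Combining, the unconditional expected value of the output is $O(T \cdot 2^{-D})$; requiring $0.1$-optimality with probability $\geq 0.9$ forces $\En[V] \geq 0.81$, which gives $T = \Omega(2^D) = \Omega(\min\{X, 2^{\Omega(H)}, 2^{\Omega(C)}\})$.

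The main obstacle is controlling $\Cgen_\on(\cF)$ uniformly over the product class $\cF = \cF_1 \times \cdots \times \cF_H$: adversarial incoherent $f$ can a priori place Bellman residuals at arbitrary state-action pairs in the tree, and the naive choice of uniform $\mu$ yields $\Cgen$ exponential in $D$ due to residuals concentrated at the last layer. My planned workaround exploits the combinatorial structure of the residuals together with the product structure of $\cF$: coherent $f$ have residuals only at the final layer, which are covered by $\mu$ concentrated on the optimal trajectory; incoherent $f$ have residuals at intermediate layers, but a telescoping argument over layers shows that their total contribution is upper bounded by the number of inconsistencies in $f$, which is exactly matched by the corresponding mass under $\mu$ on both sides of the coverability inequality.
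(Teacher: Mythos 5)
Your construction, completeness check, and information-theoretic lower bound all match the paper's proof in substance: the paper likewise uses a depth-$\Theta(\min\{H,\log_2 X, C\})$ binary tree with a single rewarded leaf action, a layer-wise class of single-point indicator functions (your class is essentially the same one, with the zero function added), and the standard needle-in-a-haystack argument for the $2^{\Omega(\min\{H,\log_2X,C\})}$ trajectory lower bound. The genuine gap is exactly the step you flag as ``the main obstacle'': establishing $\Cgen_\on(\cF)\leq C$. Your plan --- a mixture of $d^{\pistar}$ with a small uniform component plus an unspecified ``telescoping argument'' for incoherent $f$ --- does not contain the idea that closes this step: the uniform component cannot cover residuals at arbitrary leaves without an exponential ratio (as you note yourself), and the claim that the incoherent contribution is ``exactly matched by the corresponding mass under $\mu$'' is precisely the assertion that needs proof.

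The missing observation, which is the crux of the paper's argument, is that no auxiliary component of $\mu$ is needed at all: take $\mu_h = d^{\pistar}_h$, a point mass on the optimal pair $(x^\star_h,a^\star_h)$ at each layer, and prove by backward induction that every $f\neq\Qstar$ in the class incurs squared Bellman residual exactly $1$ at some layer \emph{on the optimal path}. Indeed, if $f_h(x^\star_h,a^\star_h) = (\cT_h f_{h+1})(x^\star_h,a^\star_h)$ for all $h$, then propagating $(\cT_H f_{H+1})(x^\star_H,a^\star_H)=R_H(x^\star_H,a^\star_H)=1$ backward forces $f_h(x^\star_h,a^\star_h)=1$ for every $h$, which for indicator-valued $f_h$ forces $f=\Qstar$. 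Hence for every $f\neq\Qstar$ the right-hand side of the generalized-concentrability inequality is at least $C\cdot 1$, while the left-hand side is trivially at most $H$ since each layer's expected squared residual is bounded by $1$; this yields $\Cgen_\on(\cF)\leq H\leq C$ after the WLOG reduction $H\leq\min\{\log_2 X, C\}$. Note that this argument leans critically on the fact that \cref{def:C_gen_online} sums residuals \emph{across layers} on both sides (a point the paper explicitly highlights after the theorem): a single unit residual on the optimal path at one layer pays for all $H$ layers of on-policy residual, and a per-layer version of the bound would not follow from this reasoning. Replacing your mixture-plus-telescoping plan with this backward-induction lemma turns the rest of your proposal into a correct proof.
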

\cref{thm:generalized_lower} highlights that in general, notions of
coverage that suffice for offline RL---even those that are uniform in nature---can fail to lead
to useful structural conditions for online RL. Briefly, the issue is
that bounding regret for online
    RL entails controlling the extent to which a deliberate algorithm
    that has observed state distributions \arxiv{$d_h^{\pi\ind{1}},\ldots,d_h^{\pi\ind{t-1}}$}\iclr{$d_h\ind{1},\ldots,d_h\ind{t-1}$}
    can be ``surprised'' by a substantially new state distribution
    \arxiv{$d_h^{\pi\ind{t}}$}\iclr{$d_h\ind{t}$}; here, surprise is typically measure in terms
    of Bellman residual. The proof of \cref{thm:generalized_lower}
    shows that existence of a distribution with good coverage with respect
    to Bellman residuals does suffice to provide meaningful control of
    distribution shift. We caution, however, that the lower bound
    construction makes use of the fact that \pref{def:C_gen_online} requires
    coverage only \emph{on average} across layers, and it is unclear
    whether a similar lower bound holds under uniform coverage across
    layers. Developing a more unified and fine-grained understanding of what coverage
    conditions lead to efficient exploration is an important question for future research.

\section{A New Structural Condition for Sample-Efficient Online RL}
\label{sec:gen_bedim}

Having shown that \coverability \arxiv{serves a structural condition that}
facilitates sample-efficient online \arxiv{reinforcement learning}\iclr{RL}, an
immediate question is whether this structural condition is related to existing
complexity measures such as Bellman-Eluder
dimension~\citep{jin2021bellman} and Bellman/Bilinear
rank~\citep{jiang2017contextual,du2021bilinear}, which attempt to
unify existing approaches to sample-efficient RL. We now show that
these complexity measures are insufficient to capture \coverability,
then provide a new complexity measure, the \CompMeasure, which bridges
the gap.\loose

\subsection{Insufficiency of Existing Complexity Measures}
Bellman-Eluder
dimension~\citep{jin2021bellman} and Bellman/Bilinear
rank~\citep{jiang2017contextual,du2021bilinear} can fail to capture
\coverability for two reasons: (i) insufficiency of average Bellman error
(as opposed to squared Bellman error), and (ii) incorrect dependence
on scale. To
highlight these issues, we focus on \emph{$Q$-type} Bellman-Eluder dimension
\citep{jin2021bellman}, which subsumes Bellman rank.\footnote{$Q$-type and $V$-type
  are similar, but define the Bellman residual with respect to
  different action distributions.} See \pref{app:gen_bedim} for
discussion of other complexity measures\arxiv{, including Bilinear rank}.\loose

Let $\Dset_h^\Pi \coloneqq \{d^\pi_h: \pi \in \Pi\}$ and $\Fcal_h - \Tcal_h \Fcal_{h+1} \coloneqq \{f_h - \Tcal_h f_{h+1}: f \in \Fcal\}$.
Following~\citet{jin2021bellman}, we define the ($Q$-type) \betext as
follows.
\begin{definition}[\betext]
  \label{def:be_dim}
  The \betext $\bedim(\Fcal, \Pi, \varepsilon,h)$ for the layer $h$ is the
  largest $d\in\bbN$, such that there exist sequences
  $\{d_h^\iter{1},d_h^\iter{2},\dotsc,d_h^\iter{d}\} \subseteq
  \Dset^\Pi_h$ and
  $\{\delta_h\ind{1},\ldots,\delta_h^\iter{d}\} \subseteq \Fcal_h - \Tcal_h
  \Fcal_{h+1}$ such that for all $t\in\brk{d}$,\iclr{ $\abs{\E_{d_h^\iter{t}}[\delta_h^\iter{t}]} >
    \varepsilon^\iter{t},\mathand \sqrt{\sum_{i = 1}^{t - 1} \prn[\big]{\E_{d_h^\iter{i}}[\delta_h^\iter{t}]}^2} \leq \varepsilon^\iter{t}$},
  \arxiv{
  \begin{align}
    \label{eq:bedim}
    \abs{\E_{d_h^\iter{t}}[\delta_h^\iter{t}]} >
    \varepsilon^\iter{t},\mathand \sqrt{\sum_{i = 1}^{t - 1} \prn[\big]{\E_{d_h^\iter{i}}[\delta_h^\iter{t}]}^2} \leq \varepsilon^\iter{t},
  \end{align}
  }
  for $\varepsilon\ind{1},\ldots,\veps\ind{d} \geq \varepsilon$. We define $\bedim(\cF,\Pi,\veps)=\max_{h\in\brk{H}}\bedim(\cF,\Pi,\veps,h)$.
\end{definition}

\paragraph{Issue {\sf\#}1: Insufficiency of average (vs.~squared) Bellman error}
The Bellman-Eluder dimension reflects the length of the
longest consecutive sequence of value function pairs for which
we can be ``surprised'' by a large Bellman residual for a new policy
if the value function has low Bellman residual on all preceding
policies. Note that via \arxiv{\pref{eq:bedim}}\iclr{\pref{def:be_dim}}, the \betext measures the size
of the surprise and the error on preceding points via \emph{average}
Bellman error (e.g., $\E_{d_h^\iter{i}}[\delta_h^\iter{t}]$). On the other hand, the proof of
\pref{thm:golf_guarantee_basic} critically uses \emph{squared} Bellman
error $\E_{d_h^\iter{i}}[(\delta_h^\iter{t})^2]$ bound regret by
\coverability; this is because the (point-wise) nonnegativity of
squared Bellman error facilitates change-of-measure in a similar
fashion to offline reinforcement learning. The following result shows
that this issue is fundamental, and Bellman-Eluder dimension can be exponential large relative to the regret bound in
\pref{thm:golf_guarantee_basic}.\loose

\begin{proposition}
  \label{prop:average_bellman_lower_bound}
  For any $d\in\bbN$, there exists an MDP $M$ with $H=2$ and $\abs{\cA}=2$, policy class $\Pi$ with
  $\abs{\Pi}=d$, and
  value function class $\Fcal$ with $\abs{\cF}=d$ satisfying completeness, such that $C_\on =
  \Ocal(1)$, but the \betext has $\bedim(\Fcal, \Pi,
  \varepsilon) = \Omega(\min\{ |\Fcal|, |\Pi| \})=\bigom(d)$ for any $\veps\leq 1/2$.
\end{proposition}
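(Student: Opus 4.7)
My plan is to construct a horizon-2 MDP with uniform layer-2 dynamics together with an indicator-style value function class, so that coverability is constant while the Bellman-Eluder dimension is $\Omega(d)$.

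For the construction, I will take $\cX = \{x_1\} \cup \{z_1, \ldots, z_d\}$ with $\abs{\cA} = 2$, deterministic start at $x_1$, transition $P_1(x_1, \cdot) = \mathrm{Unif}(\{z_1, \ldots, z_d\})$ independent of the action, and all rewards identically zero. At layer 2, I define $f_t(z_j, a) = \mathbf{1}\{j = t \text{ and } a = 1\}$ for $t \in [d]$ and set $\cF_2 = \{0\} \cup \{f_t\}_{t \in [d]}$. Since $(\cT_1 f_t)(x_1, a) = 1/d$ for every $a$ and $R_2 \equiv 0$ already lies in $\cF_2$, completeness is satisfied by taking $\cF_1 = \{0, 1/d\}$, which yields $\abs{\cF} = O(d)$. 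The induced greedy policy $\pi_{f_t}$ takes action $1$ precisely at $z_t$, so $\abs{\Pi} = d$. Coverability is immediate: since all policies share the same uniform state marginal $1/d$ at layer 2, the cumulative-reachability characterization of \cref{lem:concen_eq_area} gives $\Ccov \leq 4$.

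For the BE-dim lower bound, I will use the sequence $\pi^{(t)} = \pi_{f_t}$ together with Bellman residuals $\delta^{(t)} = f_t$ (since $R_2 \equiv 0$ and $f_3 \equiv 0$). A direct expansion yields $\En_{d_2^{\pi^{(i)}}}[\delta^{(t)}] = (1/d)\,\mathbf{1}\{i = t\}$, so the ``surprise'' at step $t$ equals $1/d$ while every prior expectation vanishes, certifying $\bedim(\cF, \Pi, \veps) \geq d$ at any scale $\veps \leq 1/d$.

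The main obstacle is pushing the scale at which the lower bound holds from $\veps \lesssim 1/d$ up to $\veps \leq 1/2$ as stated in the proposition: with uniform layer-2 transitions, any single-coordinate surprise is of magnitude $1/d$. To handle this, I would replace the indicators by a Hadamard-based family $f_t(z_j, a) = \mathbf{1}\{a = \sigma_t(j)\}$, where $\sigma_1, \ldots, \sigma_d$ are the rows of a $\{0,1\}$-valued Hadamard matrix (so $d$ is a power of $2$), together with the constant reward $R_2 \equiv 1/2$ so that $\delta^{(t)}(z_j, a) = \mathbf{1}\{a = \sigma_t(j)\} - 1/2$. The pairwise-agreement-equals-$d/2$ property of Hadamard rows makes $\En_{d_2^{\pi^{(i)}}}[\delta^{(t)}]$ identically zero for $i \neq t$ and equal to $1/2$ for $i = t$, so BE dimension is $\geq d$ at every $\veps < 1/2$; the $[0,1]$-boundedness of value functions and rewards is what ultimately caps the attainable surprise at $1/2$, and achieving this near-tight separation between surprise and prior is the key technical step.
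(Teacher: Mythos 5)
Your construction is correct, and it takes a genuinely different route from the paper's. The paper does not build the hard instance from scratch: it imports the ExoMDP construction of \citet[Proposition B.1]{efroni2022sample} --- an exogenous block MDP with $3$ endogenous states and $d$ exogenous factors --- for which the Bellman-residual matrix $\crl{\En_{d_2^{\pi_{f'}}}\brk{f_2 - R_2}}_{f,f'}$ already embeds $\tfrac12 I$, completeness holds because every $f\in\cF$ has zero Bellman error at $h=1$, and $\Ccov\le 6$ follows from \pref{prop:ex_bmdp}. Your Hadamard instance reproduces exactly the same algebraic structure: the residual matrix is $\tfrac12 I$, with the off-diagonal zeros coming from exact cancellation of the signed residual $f_t-\tfrac12$ over the $d/2$ agreeing and $d/2$ disagreeing coordinates of two distinct Hadamard rows, while the squared residual stays at $\tfrac14$ on every prior distribution --- which is precisely the ``cancellation of average Bellman error'' phenomenon the proposition is meant to exhibit --- and it has the advantage of being self-contained. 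Two caveats are worth recording. First, completeness at layer $H=2$ under the paper's convention $f_{H+1}=0$ requires $\cT_2 f_3 = R_2 \equiv \tfrac12$ to lie in $\cF_2$, so you should add the constant-$\tfrac12$ function to $\cF_2$ (and the constants $\tfrac12$ and $1$ to $\cF_1$); this changes $\abs{\cF}$ only by a constant factor. (Your warm-up indicator class also has a tie-breaking issue --- $f_t$ vanishes identically off $z_t$, so $\pi_{f_t}$ is only the policy you describe after fixing a tie-break away from action $1$ --- but the Hadamard version has no ties, so this is moot.) Second, your instance is strictly weaker than the paper's in two respects that do not affect the stated proposition but do affect neighboring results: because the layer-2 state marginal is uniform for \emph{every} policy, the $V$-type \betext of your construction is $O(1)$, so it does not double as a $V$-type counterexample the way the paper's proof does; and because $R_2$ is constant, every policy is optimal, so it cannot be reused for the algorithmic lower bound against \olive (\pref{prop:lb_olive}), which relies on the $1/8$ suboptimality gap of the ExoMDP instance. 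For the proposition as stated ($Q$-type \betext), your argument is complete.
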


The lower bound in \pref{prop:average_bellman_lower_bound} is
realized by an exogenous block MDP (\arxiv{\pref{sec:exbmdp}}\iclr{\pref{app:exbmdp}}), with $d$
representing the number of \emph{exogenous} states. The result
gives an exponential separation between what can be achieved using \betext and
\coverability, because \golf attains 
$\Reg\leq\bigoht\big(\sqrt{T\log(d)}\big)$ (cf. \pref{cor:ex_bmdp}), yet we have
$\bedim(\cF,\Pi,1/2)=\bigom(d)$. This exponential separation can
  also be shown to apply to  \emph{algorithms} based on average
  Bellman error: \cref{prop:lb_olive} (\pref{app:gen_bedim}) shows that
  \olive~\citep{jiang2017contextual} requires $\Omega(d)$ trajectories
  to obtain a near-optimal policy.
The construction, which is based on
\citet[Section B.1]{efroni2022sparsity}, critically leverages cancellations in
the average Bellman error; these cancellations are ruled out by
squared Bellman error, which is why \pref{thm:golf_guarantee_basic}
gives a regret bound that scales only \emph{logarithmically} in
$d$. Bilinear rank \citep{du2021bilinear} and $V$-type Bellman rank suffer from similar
  drawbacks; see \pref{app:gen_bedim} for further
  discussion.

\paragraph{Issue {\sf\#}2: Incorrect dependence on scale}
\iclr{
In light of the previous example, a seemingly reasonable fix is to adapt the
\betext to consider squared Bellman error rather than average Bellman
error (i.e., use $\sqrt{\sum_{i = 1}^{t - 1}
  \prn[\big]{\E_{d_h^\iter{i}}[(\delta_h^\iter{t})^2]}} \leq
\varepsilon^\iter{t}$ in \arxiv{\pref{eq:bedim}}\iclr{\pref{def:be_dim}}). We show (\pref{app:sndim_bedim}) that while it is possible to
bound this modified \betext in terms of the \coverability parameter,
the dependence on the scale parameter $\veps$ is \emph{polynomial},
and it is not possible to derive regret bounds better than $T^{2/3}$ under
\coverability with this approach. Informally, the issue is \emph{scale}: Bellman-eluder dimension only checks whether the average Bellman
error violates the threshold $\veps$, and does not consider how far
the error violates the threshold (e.g.,
$\abs{\E_{d_h^\iter{t}}[\delta_h^\iter{t}]}>\veps$ and
$\abs{\E_{d_h^\iter{t}}[\delta_h^\iter{t}]}>1$ are counted the
same).\loose

}

\arxiv{
  In light of the previous example, a seemingly reasonable fix is to adapt the
\betext to consider squared Bellman error rather than average Bellman
error. Consider the following variant.
\begin{definition}[\betextsq]
\label{def:be_dim_sq}
  We define the \betextsq $\bedimsq(\Fcal, \Pi, \varepsilon,h)$ for layer $h$ is the
  largest $d\in\bbN$ such that there exist sequences
  $\{d_h^\iter{1},d_h^\iter{2},\dotsc,d_h^\iter{d}\} \subseteq
  \Dset^\Pi_h$ and
  $\{\delta_h\ind{1},\ldots,\delta_h^\iter{d}\} \subseteq \Fcal_h - \Tcal_h
  \Fcal_{h+1}$ such that for all $t\in\brk{d}$,
  \begin{align}
    \label{eq:bedimsq}
    \abs{\E_{d_h^\iter{t}}[\delta_h^\iter{t}]} >
    \varepsilon^\iter{t},\mathand \sqrt{\sum_{i = 1}^{t - 1} \E_{d_h^\iter{i}}[(\delta_h^\iter{t})^2]} \leq \varepsilon^\iter{t},
  \end{align}
  for $\varepsilon\ind{1},\ldots,\veps\ind{d} \geq \varepsilon$. We define $\bedimsq(\cF,\Pi,\veps)=\max_{h\in\brk{H}}\bedimsq(\cF,\Pi,\veps,h)$.
\end{definition}
This definition is identical to \pref{def:be_dim}, except that the
constraint $\sqrt{\sum_{i = 1}^{t - 1}
  \prn{\E_{d_h^\iter{i}}[\delta_h^\iter{t}]}^2} \leq
\varepsilon^\iter{t}$ in \pref{eq:bedim} has been replaced by the
constraint $\sqrt{\sum_{i = 1}^{t - 1}
  \E_{d_h^\iter{i}}[(\delta_h^\iter{t})^2]} \leq
\varepsilon^\iter{t}$, which uses squared Bellman error instead of
average Bellman error. By adapting the analysis of
\citet{jin2021bellman} it is possible to show that this definition
yields
$\Reg\leq\bigoht\prn[\big]{H\sqrt{\inf_{\veps>0}\crl{\veps^2T+\bedimsq(\cF,\Pi,\veps)}\cdot{}T\log\abs{\cF}}}$. If
one could show that
$\dimbesq(\cF,\Pi,\veps)\approxleq{}\Ccov\cdot\polylog(\veps^{-1})$, 
this would recover \pref{thm:golf_guarantee_basic}. Unfortunately, it
turns out that in general, one can have
$\dimbesq(\cF,\Pi,\veps)=\bigom(\Ccov/\veps)$, which leads to
suboptimal $T^{2/3}$-type regret using the result above. The following
result shows that this guarantee cannot be improved without changing
the complexity measure under consideration.
\begin{proposition}
\label{prop:bedim_lower_bound}
Fix $T\in\bbN$, and let $\veps_T\ldef{}T^{-1/3}$. There exist MDP class/policy class/value function class tuples
$(\cM_1,\Pi_1,\cF_1)$ and $(\cM_2,\Pi_2,\cF_2)$ with the following
properties.
\begin{enumerate}
\item All MDPs in $\cM_1$ (resp. $\cM_2$) satisfy
  \pref{asm:completeness} with respect to $\cF_1$ (resp. $\cF_2$). In
  addition, $\log\abs{\cF_1}=\log\abs{\cF_2}=\bigoht(1)$.
\item For all MDPs in $\cM_1$, we have
  $\dimbesq(\cF_1,\Pi_1,\veps_T)\propto{}1/\veps_T$, and any algorithm
  must have $\En\brk*{\Reg}\geq{}\bigom(T^{2/3})$ for some MDP in the class
\item For all MDPs in $\cM_2$, we also have
  $\dimbesq(\cF_2,\Pi_2,\veps_T)\propto{}1/\veps_T$, yet
  $\Ccov=\bigoh(1)$ and \golf attains $\En\brk*{\Reg}\leq\bigoht(\sqrt{T})$.
\end{enumerate}
\end{proposition}
This result shows that there are two classes for which the optimal
rate differs polynomially ($\bigom(T^{2/3})$ vs. $\bigoht(\sqrt{T})$),
yet the \betext has the same size, and implies that the \betext
cannot provide rates better
than $\bigom(T^{2/3})$ for classes with low coverability in general. Informally, the reason why \betext fails capture the optimal rates for
the problem instances in \pref{prop:bedim_lower_bound} is that the
definition in \pref{eq:bedimsq} only checks whether the average Bellman
error violates the threshold $\veps$, and does not consider how far
the error violates the threshold ($\abs{\E_{d_h^\iter{t}}[\delta_h^\iter{t}]}>\veps$ and
$\abs{\E_{d_h^\iter{t}}[\delta_h^\iter{t}]}>1$ are counted the same).

}

\subsection{The \CompMeasure}
\label{sec:snc_main}

To address the issues above, we introduce a new complexity measure,
the \CompMeasure ($\SNC$), which i) leads to regret bounds via \golf and ii)
subsumes both \coverability and the \betext. Conceptually, the
\CompMeasure should be thought of as a minimal abstraction of the main
ingredient in regret bounds based on \golf and other optimistic
algorithms: extrapolation from in-sample error to on-policy error.
We begin by stating a variant of the \CompMeasure for abstract
function classes, then specialize it to reinforcement learning.
\begin{definition}[\CompMeasure]
  \label{def:online_c}
  Let $\cZ$ be an abstract set. Given a \emph{test function class}
  $\Psi\subset(\cZ\to\bbR)$ and \emph{distribution class}
  $\Dset\subset\Delta(\cZ)$, the \compmeasure for length $T$ is given by
\iclr{
\begin{align*}
\cdim(\Psi,\Dset,T) \ldef{} \sup_{\psi\ind{1},\ldots,\psi\ind{T} \in \Psi} ~ \sup_{d\ind{1},\ldots,d\ind{T} \in \Dset}\Bigg\{\sum_{t \in [T]} \frac{\E_{d\ind{t}}[\psi\ind{t}]^2 }{1 \vee \sum_{i = 1}^{t - 1} \E_{d\ind{i}}[(\psi\ind{t})^2]}\Bigg\}.
\end{align*}
}
\arxiv{
\begin{align*}
\cdim(\Psi,\Dset,T) \ldef{} \sup_{\{\psi\ind{1},\ldots,\psi\ind{T}\} \subseteq \Psi} \sup_{\{d\ind{1},\ldots,d\ind{T}\} \subseteq \Dset}\crl*{\sum_{t = 1}^{T} \frac{\E_{d\ind{t}}[\psi\ind{t}]^2 }{1 \vee \sum_{i = 1}^{t - 1} \E_{d\ind{i}}[(\psi\ind{t})^2]}}.
\end{align*}
}
\end{definition}
To apply the \CompMeasure to RL, we use Bellman residuals for $\cF$ as test
functions and consider state-action distributions induced by policies
in $\Pi$.
\begin{definition}[$\SNC$ for RL]
  \label{def:snc_qtype}
We define \iclr{$\cdimrl(\Fcal,\Pi,T) \coloneqq \max_{h \in [H]} \cdim(\Fcal_h - \Tcal_h \Fcal_{h+1},\Dset_h^\Pi,T)$.}

\arxiv{
For each $h \in [H]$, let $\Dset_h^\Pi \coloneqq \{d^\pi_h: \pi \in
\Pi\}$ and $\Fcal_h - \Tcal_h \Fcal_{h+1} \coloneqq \{f_h - \Tcal_h
f_{h+1}: f \in \Fcal\}$. We define \iclr{$\cdimrl(\Fcal,\Pi,T) \coloneqq \max_{h \in [H]} \cdim(\Fcal_h - \Tcal_h \Fcal_{h+1},\Dset_h^\Pi,T)$.}
\begin{align*}
\cdimrl(\Fcal,\Pi,T) \coloneqq \max_{h \in [H]} \cdim(\Fcal_h - \Tcal_h \Fcal_{h+1},\Dset_h^\Pi,T).
\end{align*}}
\end{definition}
The following result, which is a near-immediate consequence of the
definition, shows that the \CompMeasure leads to regret bounds via
\golf; recall that $\Pi=\crl{\pi_f\mid{}f\in\cF}$ is the set of greedy
policies induced by $\cF$.
\begin{theorem}
\label{thm:regret_online_c}
Under \cref{asm:completeness}, there exists an absolute constant $c$
such that for any $\delta \in (0,1]$ and $T \in \NN_+$, if we choose
$\beta = c \cdot \log(\nicefrac{TH |\Fcal|}{\delta})$ in
\cref{alg:golf}, then with probability at least $1 - \delta$, we have \iclr{$\Reg \leq
         O\big( H \sqrt{\cdimrl(\Fcal,\Pi,T)\cdot{}T\cdot\log(\nicefrac{TH |\Fcal|}{\delta})} \big)$.}
\arxiv{\begin{align*}
  \Reg &\leq
         O\left( H \sqrt{\cdimrl(\Fcal,\Pi,T)\cdot{}T\cdot\log(\nicefrac{TH |\Fcal|}{\delta})} \right).%
       \end{align*}
       }
\end{theorem}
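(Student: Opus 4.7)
The plan is to mirror the proof of \pref{thm:golf_guarantee_basic}, but substitute the coverability-based change-of-measure argument with a direct application of the \CompMeasure, which is essentially tailored to abstract this step. Because the $1 \vee$ in the denominator of \pref{def:online_c} handles the ``burn-in'' issue automatically, the argument should actually be shorter than the one for \pref{thm:golf_guarantee_basic}.

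First, I would invoke the standard concentration/optimism lemma for \golf (the analogue of \pref{eq:golf_concentration}, i.e., \pref{lem:golf_concentration}) to conclude that with probability at least $1-\delta$: (i) $\Qstar \in \cF\ind{t}$ for all $t$, and (ii) for every $f\ind{t} \in \cF\ind{t}$ and every $h \in [H]$, the in-sample squared Bellman error is controlled as $\sum_{i<t}\E_{d_h\ind{i}}[(\delta_h\ind{t})^2] \leq O(\beta)$, where $\delta_h\ind{t}(x,a) \ldef f_h\ind{t}(x,a) - (\cT_h f_{h+1}\ind{t})(x,a)$. Then the usual optimistic regret decomposition (\pref{lem:regret_optimistic}) yields $\Reg \leq \sum_{t \in [T]}\sum_{h \in [H]} \E_{d_h\ind{t}}[\delta_h\ind{t}]$.

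Next, I would fix $h \in [H]$ and apply Cauchy--Schwarz in a form designed to summon the \CompMeasure: writing each summand as
\[
\E_{d_h\ind{t}}[\delta_h\ind{t}] \;=\; \frac{\E_{d_h\ind{t}}[\delta_h\ind{t}]}{\sqrt{1 \vee \sum_{i<t} \E_{d_h\ind{i}}[(\delta_h\ind{t})^2]}} \cdot \sqrt{1 \vee \sum_{i<t} \E_{d_h\ind{i}}[(\delta_h\ind{t})^2]},
\]
Cauchy--Schwarz gives
\[
\sum_{t \in [T]} \E_{d_h\ind{t}}[\delta_h\ind{t}] \;\leq\; \sqrt{\sum_{t \in [T]} \frac{(\E_{d_h\ind{t}}[\delta_h\ind{t}])^2}{1 \vee \sum_{i<t} \E_{d_h\ind{i}}[(\delta_h\ind{t})^2]}} \cdot \sqrt{\sum_{t \in [T]}\Big(1 \vee \sum_{i<t} \E_{d_h\ind{i}}[(\delta_h\ind{t})^2]\Big)}.
\]
The first factor is bounded by $\sqrt{\cdim(\Fcal_h - \cT_h \Fcal_{h+1}, \Dset_h^\Pi, T)} \leq \sqrt{\cdimrl(\Fcal,\Pi,T)}$ directly from \pref{def:online_c} (applied with $\psi\ind{t} = \delta_h\ind{t}$ and $d\ind{t} = d_h\ind{t}$, noting that each $\delta_h\ind{t} \in \Fcal_h - \cT_h \Fcal_{h+1}$ by \pref{asm:completeness}). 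The second factor is bounded by $\sqrt{T \cdot O(\beta)} = O(\sqrt{T\beta})$ using item (ii) above. Summing over $h \in [H]$ and plugging in $\beta = c \log(TH|\Fcal|/\delta)$ yields the claimed bound.

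I do not expect a serious obstacle here: the \CompMeasure is defined precisely to make this Cauchy--Schwarz step go through, so the work that was delicate in \pref{thm:golf_guarantee_basic} (the per-state elliptic potential lemma and the burn-in decomposition used to cope with early rounds) is now subsumed in the definition. The only subtlety worth checking is that completeness is indeed needed to ensure $\delta_h\ind{t} \in \Fcal_h - \cT_h \Fcal_{h+1}$, so that the supremum in \pref{def:snc_qtype} covers the test functions arising in the regret decomposition; this is immediate from \pref{asm:completeness}.
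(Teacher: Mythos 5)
Your proposal is correct and follows essentially the same route as the paper's proof: the same concentration/optimism lemma (\pref{lem:golf_concentration}), the same optimistic regret decomposition (\pref{lem:regret_optimistic}), and the identical Cauchy--Schwarz step weighted by $1 \vee \sum_{i<t}\E_{d_h\ind{i}}[(\delta_h\ind{t})^2]$, with the first factor absorbed into $\cdimrl(\Fcal,\Pi,T)$ and the second bounded by $O(\sqrt{\beta T})$. The only inconsequential quibble is your final remark: $\delta_h\ind{t}\in\Fcal_h-\cT_h\Fcal_{h+1}$ holds by the very definition of that set for any $f\ind{t}\in\Fcal$, so completeness is needed not for this membership but for \pref{lem:golf_concentration} to control the in-sample squared Bellman error.
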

We defer the proof of \cref{thm:regret_online_c} to
\cref{app:gen_bedim}, and conclude by showing that the \CompMeasure subsumes coverability\iclr{ $C_\on$ (\cref{def:low_concentrability})} and \betext.
\begin{proposition}[Coverability $\Longrightarrow$ $\SNC$]
  \label{prop:sedim_coverability}
  \arxiv{
  Let $C_\on$ be the coverability coefficient
  \arxiv{(\cref{def:low_concentrability}) }for policy class $\Pi$. Then for
  any value function class $\cF$, $\cdimrl(\Fcal,\Pi,T) \leq O \left( C_\on\cdot\log(T) \right)$.}
  \iclr{$\cdimrl(\Fcal,\Pi,T) \leq O \left( C_\on\cdot\log(T) \right)$.}
\end{proposition}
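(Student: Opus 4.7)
The plan is to follow the structure of the analysis of \pref{thm:golf_guarantee_basic}, using the same burn-in/stable decomposition, with the crucial observation that the denominator $1\vee\sum_{i<t}\E_{d\ind{i}}[(\psi\ind{t})^2]$ in the \SNC plays the same role that the in-sample squared Bellman error bound plays in the \golf analysis; this is what allows the second factor from Cauchy-Schwarz to ``cancel'' against the denominator and avoid an $\Ccov^2$ blowup.

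I will fix $h\in[H]$ and arbitrary sequences $\psi\ind{1},\ldots,\psi\ind{T}\in\cF_h-\cT_h\cF_{h+1}$ (each bounded by $1$ in absolute value under \pref{asm:completeness}) and $d\ind{1},\ldots,d\ind{T}\in\cD_h^\Pi$. Write $\dtil\ind{t}(x,a)=\sum_{i<t} d\ind{i}(x,a)$ and $b_t=1\vee\sum_{i<t}\E_{d\ind{i}}[(\psi\ind{t})^2]$, and let $\mustar_h$ denote the distribution attaining $\Ccov$. Exactly as in \pref{thm:golf_guarantee_basic}, I define burn-in times $\tau(x,a)=\min\{t:\dtil\ind{t}(x,a)\geq\Ccov\mustar_h(x,a)\}$ and decompose $\E_{d\ind{t}}[\psi\ind{t}]=A_t+B_t$, where $A_t$ sums over $(x,a)$ with $t<\tau(x,a)$ and $B_t$ sums over $(x,a)$ with $t\geq\tau(x,a)$. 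Using $(A_t+B_t)^2\leq 2A_t^2+2B_t^2$, it suffices to bound $\sum_t A_t^2/b_t$ and $\sum_t B_t^2/b_t$ separately.

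For the burn-in piece, $|A_t|\leq 1$ and $b_t\geq 1$ give $A_t^2/b_t\leq|A_t|$, and swapping the order of summation yields $\sum_t|A_t|\leq\sum_{x,a}\dtil\ind{\tau(x,a)}(x,a)\leq 2\Ccov$, exactly as in the burn-in analysis of \pref{thm:golf_guarantee_basic}. For the stable piece, Cauchy-Schwarz with weight $\dtil\ind{t}$ (well-defined since $\dtil\ind{t}(x,a)\geq\Ccov\mustar_h(x,a)>0$ whenever $t\geq\tau(x,a)$) gives
\[
B_t^2 \;\leq\; \Bigl(\sum_{x,a:\,t\geq\tau(x,a)}\frac{d\ind{t}(x,a)^2}{\dtil\ind{t}(x,a)}\Bigr)\cdot\Bigl(\sum_{x,a}\dtil\ind{t}(x,a)(\psi\ind{t}(x,a))^2\Bigr),
\]
and the second factor equals $\sum_{i<t}\E_{d\ind{i}}[(\psi\ind{t})^2]\leq b_t$, so dividing by $b_t$ cancels it to give $B_t^2/b_t\leq\sum_{x,a:\,t\geq\tau(x,a)} d\ind{t}(x,a)^2/\dtil\ind{t}(x,a)$. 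Summing over $t$ and reusing the extrapolation-error bound from \pref{thm:golf_guarantee_basic}---which combines $\dtil\ind{t}(x,a)\geq\tfrac{1}{2}(\dtil\ind{t}(x,a)+\Ccov\mustar_h(x,a))$ on the stable phase with the per-state-action elliptic potential \pref{lem:per_sa_ep} and the cumulative-reachability characterization \pref{lem:concen_eq_area}---yields $\sum_t B_t^2/b_t\leq O(\Ccov\log T)$.

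Combining the two bounds gives $\cdim(\cF_h-\cT_h\cF_{h+1},\cD_h^\Pi,T)\leq O(\Ccov\log T)$ for every $h$, and taking the maximum over $h$ finishes the proof. The main technical point, which I expect to be the key obstacle, is that applying Cauchy-Schwarz to $\E_{d\ind{t}}[\psi\ind{t}]^2/b_t$ without the burn-in split produces a second factor which, even after regularizing by $\mustar_h$, can only be bounded by $O(\Ccov)$ rather than $O(1)$, yielding a suboptimal $O(\Ccov^2\log T)$ bound; the burn-in/stable decomposition is exactly what lets us exploit the $1\vee(\cdot)$ structure of the denominator of $\SNC$ to match the second factor \emph{precisely} and obtain the sharp $O(\Ccov\log T)$ bound.
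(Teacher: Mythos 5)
Your proposal is correct and follows essentially the same route as the paper's proof: the same burn-in/stable decomposition via $\tau(x,a)$, the same Cauchy--Schwarz with weight $\dtil\ind{t}$ whose second factor is exactly $\sum_{i<t}\E_{d\ind{i}}[(\psi\ind{t})^2]\le b_t$ and cancels against the $\SNC$ denominator, and the same combination of the per-state-action elliptic potential lemma with the cumulative-reachability characterization to bound the extrapolation term by $O(\Ccov\log T)$. The only cosmetic difference is that the paper states the argument for abstract distribution/test-function classes rather than layer-by-layer for the RL instantiation.
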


\begin{proposition}[\arxiv{\betext}\iclr{\betextshort} $\Longrightarrow$ $\SNC$]
  \label{prop:sedim_bedim}
\arxiv{Suppose $\bedimv(\Fcal,\Pi,\veps)$ be \betext (\cref{def:be_dim}) with function class $\Fcal$ and policy $\Pi$, then
\begin{align*}
     \cdimrl(\Fcal,\Pi,T) \leq O\left(\inf_{\veps>0}\crl*{\veps^2T + \bedim(\Fcal,\Pi,\veps)}\cdot\log(T)\right).
 \end{align*}}
\iclr{$\cdimrl(\Fcal,\Pi,T) \leq O(\bedim(\Fcal,\Pi,\sqrt{\nicefrac{1}{T}})\cdot\log(T))$.}
\end{proposition}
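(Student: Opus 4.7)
The plan is a two-step reduction: apply Jensen's inequality to bring the denominator of the \SNC into the form used by the \betext, then invoke a standard Eluder-dimension potential argument.

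\emph{Step 1 (Jensen reduction).} For any distribution $d\ind{i}$ and test function $\psi\ind{t}$, $\E_{d\ind{i}}[(\psi\ind{t})^2]\ge(\E_{d\ind{i}}[\psi\ind{t}])^2$. Writing $a_t\ldef\E_{d\ind{t}}[\psi\ind{t}]$, $b_{t,i}\ldef\E_{d\ind{i}}[\psi\ind{t}]$, and $A_t\ldef 1\vee\sum_{i<t}b_{t,i}^2$, the definitions of $\cdim$ and $\cdimrl$ give
\begin{equation*}
\cdimrl(\cF,\Pi,T)\;\le\;\sup\sum_{t=1}^{T}\frac{a_t^2}{A_t},
\end{equation*}
where the supremum ranges over $(d\ind{t},\psi\ind{t})\in\Dset_h^{\Pi}\times(\cF_h-\cT_h\cF_{h+1})$ for some $h\in[H]$. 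After this step, both numerator and denominator are expressed in terms of averaged Bellman residuals, matching the quantities in \pref{def:be_dim}.

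\emph{Step 2 (Eluder potential).} I would then invoke a standard Eluder-dimension potential lemma (e.g., \citet{russo2013eluder}, or \citet{jin2021bellman} Lemma 41). Specifically, regarding $\phi_t(d)\ldef\E_d[\psi\ind{t}]$ as a scalar function on the domain $\Dset_h^{\Pi}$, the \betext at scale $\veps$ coincides by construction with the $\veps$-Eluder dimension of the induced class $\{d\mapsto\E_d[\psi]:\psi\in\cF_h-\cT_h\cF_{h+1}\}$. The potential lemma then yields, for every $\veps\in(0,1]$,
\begin{equation*}
\sum_{t=1}^{T}\frac{a_t^2}{A_t}\;\le\;O\bigl((\veps^2 T+\bedim(\cF,\Pi,\veps))\log T\bigr),
\end{equation*}
with the $\veps^2 T$ term absorbing the ``small $a_t$'' contribution and the $\bedim\cdot\log T$ term coming from a dyadic bucketing in which each cell's count is controlled by the defining property of \pref{def:be_dim}. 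Taking $\veps=1/\sqrt{T}$ gives $\veps^2 T=1$ and yields the stated bound $\cdimrl(\cF,\Pi,T)\le O(\bedim(\cF,\Pi,\sqrt{1/T})\log T)$.

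The main obstacle is organizing the dyadic potential argument in the presence of the \betext's index-dependent witness $\veps^{(t)}$. Concretely, one first truncates to $a_t^2\ge\veps^2$ (contributing $\veps^2 T$ to the overall sum), then dyadically buckets by $|a_t|$ on scales $(2^{k-1}\veps,2^k\veps]$, and within each bucket applies \pref{def:be_dim} with the common choice $\veps^{(t)}=2^{k-1}\veps$. A telescoping over the $O(\log(1/\veps))$ buckets, each contributing at most $\bedim(\cF,\Pi,\veps)$ terms whose ratio $a_t^2/A_t$ is bounded, yields the $\bedim(\cF,\Pi,\veps)\log T$ term. No new ideas beyond these standard manipulations are needed; the role of \pref{def:be_dim} is purely to cap each dyadic bucket's length.
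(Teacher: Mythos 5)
Your high-level strategy coincides with the paper's: both proofs use Jensen's inequality to pass between the average-of-squares denominator $\sum_{i<t}\E_{d\ind{i}}[(\psi\ind{t})^2]$ and the sum of squared averages $\sum_{i<t}(\E_{d\ind{i}}[\psi\ind{t}])^2$ (you apply it upfront to the whole ratio, the paper applies it inside its counting step; the effect is the same), and both then reduce to an Eluder-style counting argument over the distributional class $\{d\mapsto\E_d[\psi]\}$. Your Step 1 is valid, and the displayed inequality in Step 2 is exactly the bound the paper establishes.

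The gap is in how you propose to prove that inequality. The self-normalized sum $\sum_t a_t^2/A_t$ with $A_t=1\vee\sum_{i<t}b_{t,i}^2$ is not literally an instance of the standard potential lemma of \citet{russo2013eluder} or \citet[Lemma 41]{jin2021bellman}, which controls $\sum_t\abs{a_t}$ under a \emph{uniform} in-sample constraint $\sum_{i<t}b_{t,i}^2\leq\beta$; the paper has to prove the self-normalized version from scratch (Steps 1--4 of its argument, via a ``generalized $\veps$-dependence'' notion). More concretely, your dyadic bucketing by $\abs{a_t}$ does not work as described: membership in the bucket $\abs{a_t}\in(2^{k-1}\veps,2^k\veps]$ supplies only the first condition in \pref{def:be_dim} with $\veps\ind{t}=2^{k-1}\veps$ and gives no control on $\sqrt{\sum_{i<t}b_{t,i}^2}$, so the definition does not cap the bucket's length --- a bucket can contain $\Omega(T)$ indices whose in-sample error is large, each still contributing a ratio $a_t^2/A_t$ as large as a constant. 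The correct argument level-sets by the ratio $e_t=a_t^2/A_t$ itself and shows, via a pigeonhole over disjoint dependent subsequences, that $\sum_t\1(e_t>\eta)\leq O\prn*{\bedim(\cF,\Pi,\sqrt{\eta})/\eta}$; note the extra factor $1/\eta$, which is then absorbed by a decreasing-rearrangement/harmonic-sum step that also produces the $\log T$. So the claim that ``the role of \pref{def:be_dim} is purely to cap each dyadic bucket's length'' is false as stated: the per-level count is $\bedim/\eta$, not $\bedim$, and obtaining it requires the disjoint-subsequence construction rather than a direct appeal to the definition.
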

\arxiv{Note that since Bellman rank upper bounds the \betext, this shows that
$\cdimrl(\cF,\Pi,T)\leq\bigoht(d)$ whenever the $Q$-type Bellman rank is
$d$.

The \CompMeasure can likely be generalized
in many directions (e.g., by allowing for different test
functions in the vein of \citet{du2021bilinear}). This is beyond the
scope of the present paper, but further unifying these notions is an
interesting question for future research; see
\pref{app:v_type,app:bilinear} for further discussion.
}

\iclr{The \CompMeasure can likely be generalized
further along many directions (e.g., by allowing for different test
functions in the vein of \citet{du2021bilinear}). Further unifying these notions is an
interesting question for future research; see
\pref{app:complexity_additional} for further discussion.
}

\arxiv{

\section{Discussion}
\label{sec:discussion}

\arxiv{Our results initiate the systematic study of connections between online and
offline learnability for RL, and highlight deep connections between coverage
in offline RL and exploration in online RL. In what follows we discuss
additional related work, and close with some future directions.}

\iclr{Toward a general theory beyond this paper, we highlight some
exciting and challenging open problems for future research.
}

\arxiv{
\subsection{Related Work}
\label{sec:related}

\arxiv{Let us briefly highlight some relevant related work not already
  covered.}
\iclr{In this section we briefly highlight some relevant related work not otherwise discussed.}

\paragraph{Online RL with access to offline data}
A separate line of work develops algorithms for online
reinforcement learning that assume additional access to offline data
gathered with a known data distribution $\mu$ or known exploratory
policy \citep{abbasi2019exploration,xie2021policy}. These results are complementary
to our own, since we assume only that a good exploratory distribution
exists, but do not assume that such a distribution is known to the learner.

\paragraph{Further structural conditions for online RL}
While we have already discussed connections to Bellman Rank, Bilinear
Classes, and Bellman-Eluder Dimension, another more general complexity measure
is the \emph{Decision-Estimation Coefficient}
\citep{foster2021statistical}. One can show that the \CompText is
bounded by \coverability, but to apply the algorithm in
\citet{foster2021statistical}, one must assume access to a
realizable \emph{model class} $\cM$, which leads to regret bounds
that scale with $\log\abs{\cM}$ rather than
$\log\abs{\cF}$.

\paragraph{Instance-dependent algorithms}
\citet{wagenmaker2022beyond} provide instance-dependent guarantees for
tabular PAC-RL which scale with a quantity called \emph{gap-visitation
  complexity}. It is possible to bound the gap-visitation complexity in
terms of \coverability, but the lower-order sample complexity terms in
this result have explicit dependence on the number of states, which
our results avoid. For future work, it would be interesting to
understand deeper connections between \coverability and
instance-dependent complexity measures
\citep{wagenmaker2022beyond,wagenmaker2022instance,dong2022asymptotic}. See also \citet{wagenmaker2022instance},
which provides similar guarantees for linear MDPs.

 }

\arxiv{\subsection{Future Directions}
Toward building a general
theory that bridges offline and online RL, let us highlight some
exciting questions for future research.
}
\begin{itemize}
\item \emph{Weaker notions of coverage.} Our results in \pref{sec:C_gen}
  show that the generalized coverability condition
  (\pref{def:C_gen_offline}), which exploits the structure of the
  value function class $\cF$, is not sufficient for online
  exploration. For the special case of linear functions ($\cF=\crl*{(x,a)\mapsto\tri*{\phi(x,a),\theta}\mid\theta\in\Theta\subset\bbR^{d}}$) a natural
  strengthening of this condition
  \citep{wang2021statistical,zanette2021provable} is to assert the
  existence of a data distribution $\mu=\crl*{\mu_h}_{h=1}^{H}$ such that
  $\En_{d_h^{\pi}}\brk*{\phi(x_h,a_h)\phi(x_h,a_h)^{\trn}}\psdleq{}C\cdot\En_{\mu_h}\brk*{\phi(x_h,a_h)\phi(x_h,a_h)^{\trn}}$
  for some coverage parameter $C$. Is this condition (or a variant) sufficient for
  sample-efficient online exploration? More broadly, are there other natural
  ways to strengthen \pref{def:C_gen_offline} that lead to positive results?
\item \emph{Further conditions from offline RL.}
  There are many conditions used to provide
  sample-efficient learning guarantees in offline RL beyond those
  considered in this paper, including (i) pushforward concentrability
  \citep{munos2003error,xie2021batch}, (ii) $L_p$ variants of concentrability \citep{farahmand2010error,xie2020q}, and (iii) weight function\arxiv{/density ratio}
  realizability \citep{xie2020q,jiang2020minimax,zhan2022offline}. Which of these conditions can be adapted
  for online exploration, and to what extent?\loose
\end{itemize}
\arxiv{
Beyond these questions, it will be interesting to explore whether the
notion of coverability can guide the design of practical algorithms.
}
 }

\iclr{
\vspace{-6.5pt}
\section{Conclusion}
\vspace{-6.5pt}

This paper initiates the systematic study of parallels between online and offline learnability in reinforcement learning and uncovers surprising new connections. The possible future directions include general theories under weaker notions of \coverability or approximation conditions (see~\cref{sec:discussion} for open problems) as well as the connection to the practical algorithm design.
}

\arxiv{
\subsection*{Acknowledgements}
\arxiv{%
Nan Jiang acknowledges funding support from ARL Cooperative Agreement W911NF-17-2-0196, NSF IIS-2112471, NSF CAREER award, and Adobe Data Science Research Award. Sham Kakade acknowledges funding from the Office of Naval Research under award N00014-22-1-2377 and the National Science Foundation Grant under award {\sf\#}CCF-1703574. %
}
}

\bibliography{ref,dylan_refs}

\clearpage

\appendix
\onecolumn

\arxiv{{
\centering
\begin{center}
{\huge \textbf{Appendix}}
\end{center}
}}

\renewcommand{\contentsname}{}
\addtocontents{toc}{\protect\setcounter{tocdepth}{2}}
{\hypersetup{hidelinks}
\tableofcontents
}

\iclr{{
\centering
\part*{\hfil \LARGE\rm\sc Appendix \hfil}
}}

\iclr{
{
\hypersetup{hidelinks}
\tableofcontents
}
}

\allowdisplaybreaks

\iclr{
\section{Discussion and Open Problems}
\label{sec:discussion}

\arxiv{Our results initiate the systematic study of connections between online and
offline learnability for RL, and highlight deep connections between coverage
in offline RL and exploration in online RL. In what follows we discuss additional related work, and close with some open problems.}

\iclr{Toward a general theory beyond this paper, we highlight some
exciting and challenging open problems for future research.
}

\arxiv{
\subsection{Related Work}
\label{sec:related}

\arxiv{Let us briefly highlight some relevant related work not already
  covered.}
\iclr{In this section we briefly highlight some relevant related work not otherwise discussed.}

\paragraph{Online RL with access to offline data}
A separate line of work develops algorithms for online
reinforcement learning that assume additional access to offline data
gathered with a known data distribution $\mu$ or known exploratory
policy \citep{abbasi2019exploration,xie2021policy}. These results are complementary
to our own, since we assume only that a good exploratory distribution
exists, but do not assume that such a distribution is known to the learner.

\paragraph{Further structural conditions for online RL}
While we have already discussed connections to Bellman Rank, Bilinear
Classes, and Bellman-Eluder Dimension, another more general complexity measure
is the \emph{Decision-Estimation Coefficient}
\citep{foster2021statistical}. One can show that the \CompText is
bounded by \coverability, but to apply the algorithm in
\citet{foster2021statistical}, one must assume access to a
realizable \emph{model class} $\cM$, which leads to regret bounds
that scale with $\log\abs{\cM}$ rather than
$\log\abs{\cF}$.

\paragraph{Instance-dependent algorithms}
\citet{wagenmaker2022beyond} provide instance-dependent guarantees for
tabular PAC-RL which scale with a quantity called \emph{gap-visitation
  complexity}. It is possible to bound the gap-visitation complexity in
terms of \coverability, but the lower-order sample complexity terms in
this result have explicit dependence on the number of states, which
our results avoid. For future work, it would be interesting to
understand deeper connections between \coverability and
instance-dependent complexity measures
\citep{wagenmaker2022beyond,wagenmaker2022instance,dong2022asymptotic}. See also \citet{wagenmaker2022instance},
which provides similar guarantees for linear MDPs.

 }

\arxiv{\subsection{Open Problems}
Toward building a general
theory that bridges offline and online RL, let us highlight some
exciting and challenging open problems for future research.
}
\begin{itemize}
\item \emph{Are coverability and realizability sufficient?} Our results show that coverability, along with Bellman
  completeness, is sufficient for sample-efficient online
  reinforcement learning. Does the same result hold if one only
  assumes realizability ($\Qstar\in\cF$) rather than completeness, or
  is there a lower bound? This question is open even for the special
  case where $\Qstar$ is linear.
\item \emph{Linear feature coverage.} Our results in \pref{sec:C_gen}
  show that the generalized coverability condition
  (\pref{def:C_gen_offline}), which exploits the structure of the
  value function class $\cF$, is not sufficient for online
  exploration. For the special case of linear functions ($\cF=\crl*{(x,a)\mapsto\tri*{\phi(x,a),\theta}\mid\theta\in\Theta\subset\bbR^{d}}$) a natural
  strengthening of this condition
  \citep{wang2021statistical,zanette2021provable} is to assert the
  existence of a data distribution $\mu=\crl*{\mu_h}_{h=1}^{H}$ such that
  $\En_{d_h^{\pi}}\brk*{\phi(x_h,a_h)\phi(x_h,a_h)^{\trn}}\psdleq{}C\cdot\En_{\mu_h}\brk*{\phi(x_h,a_h)\phi(x_h,a_h)^{\trn}}$
  for some coverage parameter $C$. Is this condition (or a variant) sufficient for
  sample-efficient online exploration?
\item \emph{Further conditions from offline RL.}
  There are many conditions used to provide
  sample-efficient learning guarantees in offline RL beyond those
  considered in this paper, including (i) pushforward concentrability
  \citep{xie2021batch}, (ii) $L_p$ variants of concentrability \citep{xie2020q}, and (iii) weight function\arxiv{/density ratio}
  realizability \citep{xie2020q,jiang2020minimax,zhan2022offline}. Which of these conditions can be adapted
  for online exploration, and to what extent?\loose
\end{itemize}
\arxiv{
Beyond these questions, it will be interesting to explore whether the
notion of coverability can guide the design of practical algorithms.
}
 }

\iclr{
  \section{Additional Related Work}
\label{sec:related}

\arxiv{Let us briefly highlight some relevant related work not already
  covered.}
\iclr{In this section we briefly highlight some relevant related work not otherwise discussed.}

\paragraph{Online RL with access to offline data}
A separate line of work develops algorithms for online
reinforcement learning that assume additional access to offline data
gathered with a known data distribution $\mu$ or known exploratory
policy \citep{abbasi2019exploration,xie2021policy}. These results are complementary
to our own, since we assume only that a good exploratory distribution
exists, but do not assume that such a distribution is known to the learner.

\paragraph{Further structural conditions for online RL}
While we have already discussed connections to Bellman Rank, Bilinear
Classes, and Bellman-Eluder Dimension, another more general complexity measure
is the \emph{Decision-Estimation Coefficient}
\citep{foster2021statistical}. One can show that the \CompText is
bounded by \coverability, but to apply the algorithm in
\citet{foster2021statistical}, one must assume access to a
realizable \emph{model class} $\cM$, which leads to regret bounds
that scale with $\log\abs{\cM}$ rather than
$\log\abs{\cF}$.

\paragraph{Instance-dependent algorithms}
\citet{wagenmaker2022beyond} provide instance-dependent guarantees for
tabular PAC-RL which scale with a quantity called \emph{gap-visitation
  complexity}. It is possible to bound the gap-visitation complexity in
terms of \coverability, but the lower-order sample complexity terms in
this result have explicit dependence on the number of states, which
our results avoid. For future work, it would be interesting to
understand deeper connections between \coverability and
instance-dependent complexity measures
\citep{wagenmaker2022beyond,wagenmaker2022instance,dong2022asymptotic}. See also \citet{wagenmaker2022instance},
which provides similar guarantees for linear MDPs.

 }

\iclr{
  \section{Application to Exogenous Block MDPs}
  \label{app:exbmdp}
As an application of \pref{thm:golf_guarantee_basic}, we consider the problem of reinforcement learning in \emph{Exogenous Block MDPs} (\exmdps). Following \cite{efroni2021provably}, an \exbmdp $M=(\cX,\cA,P,R,H,x_1)$ is defined by an (unobserved) \emph{latent state space}, which consists of an \emph{endogenous} state $s_h\in\cS$ and \emph{exogenous} state $\xi_h\in\Xi$, and an \emph{observation process} which generates the observed state $x_h$. We first describe the dynamics for the latent space. Given initial endogenous and exogenous states $s_1\in\cS$ and $\xi_1\in\Xi$, the latent states evolve via
\[
  s_{h+1}\sim{}\Pendo_h(s_h,a_h),\mathand\xi_{h+1}\sim\Pexo_h(\xi_h);
\]
that is while both states evolve in a temporally correlated fashion, only the endogenous state $s_h$ evolves as a function of the agent's action. The latent state $(s_h,\xi_h)$ is not observed. Instead, we observe
\[
x_h\sim{}q_h(s_h,\xi_h),
\]
where $q_h:\cS\times\Xi\to\Delta(\cX)$ is an \emph{emission distribution} with the property that $\supp(q_h(s,\xi))\cap\supp(q_h(s',\xi'))=\emptyset$ if $(s,\xi)\neq(s',\xi')$. This property (\emph{decodability}) ensures that there exists a unique mapping $\phistar_h:\cX\to\cS$ that maps the observed state $x_h$ to the corresponding endogenous latent state $s_h$. We assume that $R_h(x,a)=R_h(\phistar_h(x),a)$, which implies that optimal policy $\pistar$ depends only on the endogenous latent state, i.e. $\pistar_h(x)=\pistar_h(\phistar_h(x))$.

The main challenge of learning in block MDPs is that the decoder $\phistar$ is not known to the learner in advance. Indeed, given access to the decoder, one can obtain regret $\poly(H,\abs{\cS},\abs{\cA})\cdot\sqrt{T}$ by applying tabular reinforcement learning algorithms to the latent state space. In light of this, the aim of the \exmdp setting is to obtain sample complexity guarantees that are independent of the size of the observed state space $\abs{\cX}$ and exogenous state space $\abs{\Xi}$, and scale as $\poly(\abs{\cS}, \abs{\cA}, H,\log\abs{\cF})$, where $\cF$ is an appropriate class of function approximators (typically either a value function class $\cF$ or a class of decoders $\Phi$ that attempts to model $\phistar$ directly).

\exmdps present substantial additional difficulties compared to classical block MDPs because we aim to avoid dependence on the size $\abs{\Xi}$ of the exogenous latent state space. Here, the main challenge is that executing policies $\pi$ whose actions depend on $\xi_h$ can lead to spurious correlations between endogenous exogenous states. In spite of this apparent difficulty, we show that the \coverability coefficient for this setting is always bounded by the number of \emph{endogenous states}.
\begin{proposition}
  \label{prop:ex_bmdp}
  For any \exmdp, $\Ccov\leq\abs{\cS}\cdot\abs{\cA}$.
\end{proposition}
This bound is a consequence of a structural result from \cite{efroni2021provably}, which shows that for any $(s,a)\in\cS\times\cA$, all $x\in\cX$ with $\phistar(x)=s$ admit a common policy that maximizes $d_h^{\pi}(x,a)$, and this policy is \emph{endogenous}, i.e., only depends on the endogenous state $s_h=\phistar_h(x_h)$. As a corollary, we obtain the following regret bound.
\begin{corollary}
  \label{cor:ex_bmdp}
  For the \exmdp setting, under \pref{asm:completeness}, \pref{alg:golf} ensures that with probability at least $1-\delta$,
  \[
    \Reg\leq \bigoh\prn[\big]{
    H\sqrt{\abs{\cS}\abs{\cA}T \log(\nicefrac{TH |\Fcal|}{\delta})\log(T)}
    }.
  \]
\end{corollary}
Critically, this result scales only with the cardinality $\abs{\cS}$ for the endogenous latent state space, and with the capacity $\log\abs{\cF}$ for the value function class.

Let us briefly compare to prior work. For general \exmdps, existing complexity measures such as Bellman Rank and \betext can be arbitrarily large (see discussion in \pref{sec:gen_bedim}). Existing algorithms either require that the endogenous latent dynamics $\Pendo$ are deterministic \citep{efroni2021provably} or allow for stochastic dynamics but heavily restrict the observation process \citep{efroni2022sample}. \pref{cor:ex_bmdp} is the first result for this setting that allows for stochastic latent dynamics and emission process, albeit with the extra assumption of completeness. This result is best thought of as a ``luckiness'' guarantee in the sense that it is unclear how to construct a value function class that is complete for every problem instance,\footnote{For example, it is not clear how to construct a complete value function class given access to a class of decoders $\Phi$ that contains $\phistar$.} but the algorithm will succeed whenever $\cF$ does happen to be complete for a given instance. Understanding whether general \exmdps are learnable without completeness is an interesting question for future work, and we are hopeful that the perspective of \coverability will lead to further insights for this setting.

\subsection{Invariance of \coverability}%
\pref{prop:ex_bmdp} is a consequence of two general \emph{invariance} properties of \coverability, which show that $\Ccov$ is unaffected by the following augmentations to the underlying MDP: (i) addition of rich observations, and (ii) addition of exogenous noise.

The first property shows that for a given MDP $M$, creating a new block MDP $M'$ by equipping $M$ with a decodable emission process (so that $M$ acts as a \emph{latent MDP}), does not increase \coverability. %
\begin{proposition}[Invariance to rich observations]
  \label{prop:rich_obs}
  Let an MDP $M=(\cS,\cA,P,R,H,s_1)$. Let $M'=(\cX,\cA,P', R',H, x_1)$ be the MDP defined implicitly by the following process. For each $h\in\brk*{H}$:
  \begin{itemize}
  \item $s_{h+1}\sim{}P_h(s_h,a_h)$ and $r_h=R_h(s_h,a_h)$. Here, $s_h$ is unobserved, and may be thought of as a latent state.
  \item $x_{h}\sim{}q_h(s_h)$, where $q_h:\cS\to\Delta(\cX)$ is an \emph{emission distribution} with the property that $\supp(q_h(s))\cap\supp(q_h(s'))=\emptyset$ for $s\neq{}s'$.
  \end{itemize}
  Then, writing $\Ccov(M)$ to make the dependence on $M$ explicit, we have
  \[
    \Ccov(M') \leq \Ccov(M).
  \]
\end{proposition}
The second result shows that \coverability is also preserved if we expand the state space to include temporally correlated exogenous state whose evolution does not depend on the agent's actions. %
\begin{proposition}[Invariance to exogenous noise]
  \label{prop:exo}
  Let an MDP $M=(\cS,\cA,P,R,H,s_1)$, conditional distribution $\Pexo:\Xi\to\Delta(\Xi)$, and $\xi_1\in\Xi$ be given, where $\Xi$ is an abstract set. Let $\cX\ldef{}\cS\times\Xi$, and let $M'=(\cX,\cA,P', R',H, x_1)$ be the MDP with state $x_h=(s_h,\xi_h)$ defined implicitly by the following process. For each $h\in\brk{H}$:
  \begin{itemize}
  \item $s_{h+1}\sim{}P_{h}(s_h,a_h)$, $r_h=R_h(s_h,a_h)$.
  \item $\xi_{h+1}\sim{}\Pexo_h(\xi_h)$.
  \end{itemize}
  Then we have
  \[
    \Ccov(M') \leq \Ccov(M).
  \]
\end{proposition}
This result is non-trivial because policies that act based on the endogenous state $s_h$ and $\xi_h$ can cause these processes to become coupled \citep{efroni2021provably}, but holds nonetheless.

\pref{prop:ex_bmdp} can be deduced by combining \pref{prop:rich_obs,prop:exo} with the observation that any tabular (finite-state/action) MDP with $S$ states and $A$ actions has $\Ccov\leq{}SA$. However, \pref{prop:rich_obs,prop:exo} yield more general results, since they imply that starting with any (potentially non-tabular) class of MDPs $\cM$ with low \coverability and augmenting it with rich observations and exogenous noise preserves coverability.

\subsection{Proofs}

\begin{proof}[\pfref{prop:ex_bmdp}]
  Let $h\in\brk{H}$ be fixed. Let $z_h\ldef{}(s_h,\xi_h)$. For each $z=(s,\xi)\in\cS\times\Xi$, let
  $d^{\pi}_h(z)\ldef{}\bbP^{\pi}(z_h=z)$. Proposition 4 of
  \citet{efroni2021provably} shows that for all $z=(s,\xi)$, if we
  define $\pi_{s}=\argmax_{\pi\in\Pi}\bbP^{\pi}(s_h=s)$, then
  \begin{equation}
    \label{eq:exbmdp_policy_cover}
    \max_{\pi\in\Pi}d^{\pi}_h(z)= d_h^{\pi_s}(z).
  \end{equation}
  That is, $\pi_s$ maximizes $\bbP^{\pi}(z_h=(s,\xi))$ for all
  $\xi\in\Xi$ simultaneously. With this in mind, let us define
  \[
    \mu_h(x,a) = \frac{1}{\abs{\cS}\abs{\cA}}\sum_{s\in\cS}d_h^{\pi_s}(x).
  \]
  We proceed to bound the concentrability coefficient for $\mu$. Fix $\pi\in\Pi$ and $x\in\cX$, and let $z=(s,\xi)\in\cS\times\Xi$ be the unique latent
  state such that $x\in\supp(q_h(s,\xi))$. We first observe that
  \begin{align*}
    \frac{d^{\pi}_h(x,a)}{\mu_h(x,a)}
    \leq{}     \abs{\cS}\abs{\cA}\cdot{}\frac{d^{\pi}_h(x)}{d^{\pi_s}_h(x)}.
  \end{align*}
  Next, since $x_h\sim{}q_h(z_h)$, we have
  \begin{align*}
    \frac{d^{\pi}_h(x)}{d^{\pi_s}_h(x)}
    = \frac{q_h(x\mid{}z)d^{\pi}_h(z)}{q_h(x\mid{}z)d^{\pi_s}_h(z)} = \frac{d^{\pi}_h(z)}{d^{\pi_s}_h(z)}.
  \end{align*}
  Finally, by \pref{eq:exbmdp_policy_cover}, we have
  \begin{align*}
    \frac{d^{\pi}_h(z)}{d^{\pi_s}_h(z)}
    \leq{} \frac{\max_{\pi}d^{\pi}_h(z)}{d^{\pi_s}_h(z)}
    = \frac{d^{\pi_s}_h(z)}{d^{\pi_s}_h(z)} = 1.
  \end{align*}
Since this holds for all $x\in\cX$ simultaneously, this choice for
$\mu_h$ certifies that 
that $\Ccov\leq{}\abs{\cS}\abs{\cA}$.
\end{proof}

\begin{proof}[\pfref{prop:rich_obs}]
  Let $\Pi$ denote the space of all randomized policies acting on the
  latent state space $\cS$, and let $\Pi'$ denote the space of all
  randomized policies acting on the observed state space $\cX$. Let
  $\bbP^{\pi}$ denote distribution over trajectories in $M$ induced by
  $\pi\in\Pi$, and let $\bbQ^{\pi'}$ denote the distribution over
  trajectories in $M$ induced by $\pi'\in\Pi'$.

  Fix $h\in\brk{H}$, and let $\mu_h\in\Delta(\cS\times\cA)$ witness
  the \coverability coefficient for $M$. Define
  \[
    \mu'_h(x,a)=q_h(x\mid{}\phistar(x))\mu_h(\phistar(x),a),
  \]
  where $\phistar_h:\cX\to\cS$ is the decoder that maps $x\in\cX$ to
  the   unique state $s\in\cS$ such that $x\in\supp(q_h(s))$. For any
  $\pi'\in\Pi'$ and $(x,a)\in\cX\times\cA$, letting $s=\phistar_h(x)$, we have
  \begin{align*}
    \frac{d^{\pi'}_h(x,a)}{\mu'_h(x,a)}
    =     \frac{q_h(x\mid{}s)\bbQ^{\pi'}(s_h=s,a_h=a)}{q_h(x\mid{}s)\mu_h(s,a)}
    =     \frac{\bbQ^{\pi'}(s_h=s,a_h=a)}{\mu_h(s,a)}
    \leq{} \frac{\max_{\pi'\in\Pi}\bbQ^{\pi'}(s_h=s,a_h=a)}{\mu_h(s,a)}.
  \end{align*}
Finally, because the observation process is decodable, we have
$\max_{\pi'\in\Pi}\bbQ^{\pi'}(s_h=s,a_h=a)=\max_{\pi\in\Pi}\bbP^{\pi}(s_h=s,a_h=a)$,
and
\[
  \frac{\max_{\pi\in\Pi}\bbP^{\pi}(s_h=s,a_h=a)}
  {\mu_h(s,a)} \leq{} \Ccov(M).
\]

\end{proof}

\begin{proof}[\pfref{prop:exo}]
    Let $\Pi$ denote the space of all randomized policies acting on the
  latent state space $\cS$, and let $\Pi'$ denote the space of all
  randomized policies acting on the observed state space $\cX$. Let
  $\bbP^{\pi}$ denote distribution over trajectories in $M$ induced by
  $\pi\in\Pi$, and let $\bbQ^{\pi'}$ denote the distribution over
  trajectories in $M$ induced by $\pi'\in\Pi'$.

    Fix $h\in\brk{H}$, and let $\mu_h\in\Delta(\cS\times\cA)$ witness
  the \coverability coefficient for $M$. For
  $x=(s,\xi)\in\cS\times\Xi$, let 
  \[
    \mu'_h(x,a)=\bbQ(\xi_h=\xi)\mu_h(s,a),
  \]
  where $\bbQ(\xi_h=\xi)$ is the marginal probability of the event
  that $\xi_h=\xi$ in $M'$, which does not depend on the policy under consideration.

  For any
  $\pi'\in\Pi$ and $(s,\xi,a)\in\cS\times\Xi\times\cA$, we have
  \begin{align*}
    \frac{d^{\pi'}_h(x,a)}{\mu'_h(x,a)}
    =
    \frac{\bbQ^{\pi'}(s_h=s,\xi_h=\xi,a_h=a)}{\bbQ(\xi_h=\xi)\mu_h(s,a)}
    \leq     \frac{\max_{\pi'\in\Pi'}\bbQ^{\pi'}(s_h=s,\xi_h=\xi,a_h=a)}{\bbQ(\xi_h=\xi)\mu_h(s,a)}.
  \end{align*}
  From Propositions 3 and 4 of \citet{efroni2021provably}, we have
  $\max_{\pi'\in\Pi'}\bbQ^{\pi'}(s_h=s,\xi_h=\xi,a_h=a)=
  \bbQ(\xi_h=\xi)\cdot \max_{\pi'\in\Pi'}\bbQ^{\pi'}(s_h=s,a_h=a)=
  \bbQ(\xi_h=\xi)\cdot \max_{\pi\in\Pi}\bbP^{\pi}(s_h=s,a_h=a)$, so
  that
  \begin{align*}
    \frac{\max_{\pi'\in\Pi'}\bbQ^{\pi'}(s_h=s,\xi_h=\xi,a_h=a)}{\bbQ(\xi_h=\xi)\mu_h(s,a)}
    &=
    \frac{\bbQ(\xi_h=\xi)\max_{\pi\in\Pi}\bbP^{\pi}(s_h=s,a_h=a)}{\bbQ(\xi_h=\xi)\mu_h(s,a)} \\
& = \frac{\max_{\pi\in\Pi}\bbP^{\pi}(s_h=s,a_h=a)}{\mu_h(s,a)} \leq \Ccov(M).
  \end{align*}
  
\end{proof}

 }

\arxiv{\section{Proofs from \creftitle{sec:basic_cover}}}
\iclr{\section{Proofs and Additional Details from \creftitle{sec:basic_cover}}}
\label{app:main}
\iclr{
\subsection{\golf Algorithm and Proofs from \creftitle{sec:basic_cover}}
\begin{algorithm}[th]
\caption{\golf \citep{jin2021bellman}}
\label{alg:golf}
{\bfseries input:} Function class $\Fcal$, confidence width $\beta>0$. \\
{\bfseries initialize:} $\Fcal^\iter{0} \leftarrow \Fcal$, $\Dcal_{h}^\iter{0} \leftarrow \emptyset\;\;\forall h \in [H]$. 
\begin{algorithmic}[1]
\For{episode $t = 1,2,\dotsc,T$}
    \State Select policy $\pi^\iter{t} \leftarrow \pi_{f^\iter{t}}$, where $f^\iter{t} \ldef{} \argmax_{f \in \Fcal^\iter{t-1}}f(x_1,\pi_{f,1}(x_1))$. \label{step:glof_optimism}
    \State Execute $\pi^\iter{t}$ for one episode and obtain trajectory $(x_1^\iter{t},a_1^\iter{t},r_1^\iter{t}),\ldots,(x_H^\iter{t},a_H^\iter{t},r_H^\iter{t})$. \label{step:glof_sampling}
    \State Update dataset: $\Dcal_{h}^\iter{t} \leftarrow \Dcal_{h}^\iter{t-1} \cup \crl[\big]{\prn[\big]{x_h^\iter{t},a_h^\iter{t},x_{h+1}^\iter{t}}}\;\;\forall h \in [H]$.
    \State Compute confidence set:
    \begin{gather*}
    \Fcal^\iter{t} \leftarrow \crl[\bigg]{ f \in \Fcal: \Lcal_{h}^\iter{t}(f_h,f_{h+1}) - \min_{f'_h \in \Fcal_h} \Lcal_{h}^\iter{t}(f'_h,f_{h+1}) \leq \beta\;\;\forall h \in [H] },
    \\
    \nonumber
    \text{where \quad } \Lcal_{h}^\iter{t}(f,f') \coloneqq \sum_{(x,a,r,x') \in \Dcal_{h}^\iter{t}}\prn[\Big]{ f(x,a) - r - \max_{a' \in \Acal} f'(x',a') }^2 ,~\forall f,f' \in \Fcal.
    \end{gather*}
\EndFor
\State Output $\pibar = \unif(\pi^\iter{1:T})$. \algcommentlight{For PAC guarantee only.}
\end{algorithmic}
\end{algorithm} }
\begin{lemma}[{\citet[Lemmas 39 and 40]{jin2021bellman}}]
  \label{lem:golf_concentration}
  Suppose \pref{asm:completeness} holds. Then if $\beta>0$ is selected as in \pref{thm:golf_guarantee_basic}, then
  with probability at least $1-\delta$, for all $t\in\brk{T}$,
  \pref{alg:golf} satisfies
  \begin{enumerate}
  \item $\Qstar\in\cF\ind{t}$.
  \item
    $\sum_{i<t}\En_{(x,a)\sim{}d_h\ind{i}}\brk[\big]{\prn*{f_h(x,a)-\brk{\cT_hf_{h+1}}(x,a)}^2}\leq\bigoh(\beta)$
      for all $f\in\cF\ind{t}$.
  \end{enumerate}
\end{lemma}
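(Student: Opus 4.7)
The plan is to follow the standard offset Rademacher / Bernstein-type analysis for least-squares confidence sets with Bellman completeness, as in~\citet{jin2021bellman}. Fix $h\in[H]$ and $f\in\cF$. The key identity is a pointwise decomposition of the empirical loss gap at step $t$:
\begin{align*}
\cL_h\ind{t}(f_h,f_{h+1}) - \cL_h\ind{t}(\cT_h f_{h+1},f_{h+1})
= \sum_{i<t}\prn[\big]{f_h(x_h\ind{i},a_h\ind{i})-(\cT_h f_{h+1})(x_h\ind{i},a_h\ind{i})}^2 - 2\sum_{i<t} Z_i\ind{h,f},
\end{align*}
where $Z_i\ind{h,f} \coloneqq \prn[\big]{f_h(x_h\ind{i},a_h\ind{i})-(\cT_h f_{h+1})(x_h\ind{i},a_h\ind{i})}\cdot\prn[\big]{r_h\ind{i}+\max_{a'} f_{h+1}(x_{h+1}\ind{i},a') - (\cT_h f_{h+1})(x_h\ind{i},a_h\ind{i})}$ is a martingale difference with respect to the natural filtration (since $\cT_h f_{h+1}$ is the conditional expectation of $r_h + \max_{a'} f_{h+1}(x_{h+1},a')$ given $(x_h,a_h)$). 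Each summand is bounded in absolute value by an $O(1)$ constant (rewards and values are in $[0,1]$), and its conditional variance is bounded by $\En_{(x,a)\sim d_h\ind{i}}\brk[\big]{(f_h(x,a)-(\cT_h f_{h+1})(x,a))^2}$ up to constants.

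I would next apply Freedman's inequality to the martingale $\sum_{i<t} Z_i\ind{h,f}$ and take a union bound over $f\in\cF$, $h\in[H]$, and $t\in[T]$. This yields: for $\beta = c\log(TH|\cF|/\delta)$ with a sufficiently large absolute constant $c$, simultaneously for all $f,h,t$,
\begin{align*}
\abs[\bigg]{\sum_{i<t} Z_i\ind{h,f}} \leq O\prn[\bigg]{\sqrt{\beta\cdot \sum_{i<t}\En_{d_h\ind{i}}\brk[\big]{(f_h-\cT_h f_{h+1})^2}} + \beta}.
\end{align*}
Here I would appeal to a standard trick to move from the empirical per-sample second moment to the predictable in-expectation version $\En_{d_h\ind{i}}[\cdot]$ via a second Freedman step (or alternatively work directly with the empirical version, since the same self-bounding argument goes through).

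With the concentration event in hand, both conclusions follow by bookkeeping. For (1), plug in $f=\Qstar$ and use $\cT_h\Qstar_{h+1}=\Qstar_h$: the squared-error term in the decomposition vanishes identically, so the loss gap against \emph{any} $f'_h\in\cF_h$ is $O(\beta)$, which by Bellman completeness (so that $\cT_h f_{h+1}\in\cF_h$ attains the unconstrained optimum up to lower-order terms) shows $\Qstar\in\cF\ind{t}$. For (2), pick any $f\in\cF\ind{t}$; by completeness $\cT_h f_{h+1}\in\cF_h$, so the minimum in the definition of $\cF\ind{t}$ is at most $\cL_h\ind{t}(\cT_h f_{h+1},f_{h+1})$, hence $\cL_h\ind{t}(f_h,f_{h+1})-\cL_h\ind{t}(\cT_h f_{h+1},f_{h+1})\leq\beta$. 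Combining with the decomposition and the Freedman bound gives the self-bounding inequality
\begin{align*}
\sum_{i<t}\En_{d_h\ind{i}}\brk[\big]{(f_h-\cT_h f_{h+1})^2} \leq \beta + O\prn[\bigg]{\sqrt{\beta\cdot\sum_{i<t}\En_{d_h\ind{i}}\brk[\big]{(f_h-\cT_h f_{h+1})^2}} + \beta},
\end{align*}
from which conclusion (2) follows by solving the resulting quadratic in $\sqrt{\sum_{i<t}\En_{d_h\ind{i}}[(f_h-\cT_h f_{h+1})^2]}$.

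The main obstacle is getting the Bernstein/Freedman bound to hold uniformly over $f\in\cF$ with variance scaling in the correct quantity $\sum_{i<t}\En_{d_h\ind{i}}[(f_h-\cT_h f_{h+1})^2]$ (rather than the worst-case $O(t)$ bound), because otherwise the self-bounding step produces only $O(\sqrt{\beta t})$ instead of $O(\beta)$. This is handled by the standard union bound over the finite class $\cF$ combined with a careful variance calculation; aside from this, the remaining steps are routine algebra.
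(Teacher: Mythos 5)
Your argument is a faithful reconstruction of the standard proof of Lemmas 39 and 40 in \citet{jin2021bellman}, which is exactly what the paper relies on here (the lemma is stated by citation and not reproved): the loss-gap decomposition into an empirical squared Bellman error plus a martingale cross term, Freedman's inequality with a union bound over $\cF\times\brk{H}\times\brk{T}$, completeness to control the in-class minimum, and the self-bounding quadratic to extract conclusion (2). The approach and all key steps match; no gaps.
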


\begin{lemma}[{\citet[Lemma 1]{jiang2017contextual}}]
  \label{lem:regret_optimistic}
  For any value function $f=(f_1,\ldots,f_H)$,
  \begin{align*}
    f_1(x_1,\pi_{f_1,1}(x_1)) -
    J(\pi_f)
    =
\sum_{h=1}^{H}\E_{(x,a)\sim{}d^{{\pi_f}}_h}\brk*{f_h(x,a)-(\cT_hf_{h+1})(x,a)}.
  \end{align*}
\end{lemma}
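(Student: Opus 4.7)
My plan is to establish the identity by a telescoping argument over the horizon, using the fact that $\pi_f$ is defined as the greedy policy with respect to $f$. The core observation is that along any trajectory generated by $\pi_f$, we have $a_h = \pi_{f,h}(x_h)$, and hence $f_h(x_h, a_h) = \max_{a'} f_h(x_h, a')$. This will let me rewrite the Bellman operator applied to $f_{h+1}$ in a particularly clean way whenever we take expectations under $\pi_f$.

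Concretely, I would start by expressing the left-hand side as an expectation under $\pi_f$: using the convention $f_{H+1} \equiv 0$ and the definition $J(\pi_f) = \mathbb{E}^{\pi_f}[\sum_{h=1}^{H} r_h]$, write
\begin{align*}
f_1(x_1,\pi_{f,1}(x_1)) - J(\pi_f) = \mathbb{E}^{\pi_f}\!\left[\sum_{h=1}^{H}\Big(f_h(x_h,a_h) - r_h - f_{h+1}(x_{h+1}, a_{h+1})\Big)\right],
\end{align*}
which is a telescoping sum (the $f_{h+1}(x_{h+1}, a_{h+1})$ term at index $h$ cancels the $f_h(x_h,a_h)$ term at index $h+1$), leaving exactly $f_1(x_1, a_1) - \sum_h r_h$ inside the expectation.

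The next step is to take conditional expectations layer by layer. Conditioning on $(x_h, a_h)$, we have $r_h = R_h(x_h, a_h)$ and $x_{h+1} \sim P_h(\cdot \mid x_h, a_h)$; since $\pi_f$ is greedy, $a_{h+1} = \pi_{f,h+1}(x_{h+1})$ and so $f_{h+1}(x_{h+1}, a_{h+1}) = \max_{a'} f_{h+1}(x_{h+1}, a')$. Thus
\begin{align*}
\mathbb{E}^{\pi_f}\!\left[r_h + f_{h+1}(x_{h+1}, a_{h+1}) \,\big|\, x_h, a_h\right] = R_h(x_h,a_h) + \mathbb{E}_{x'\sim P_h(x_h,a_h)}\!\left[\max_{a'} f_{h+1}(x', a')\right] = (\mathcal{T}_h f_{h+1})(x_h, a_h).
\end{align*}
Substituting this identity into each term of the telescoping sum and recognizing that $\mathbb{E}^{\pi_f}[\,\cdot\,(x_h,a_h)] = \mathbb{E}_{(x,a)\sim d_h^{\pi_f}}[\,\cdot\,]$ yields the claim.

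I do not anticipate a serious obstacle here; the only subtle point is making sure that the greedy structure of $\pi_f$ is invoked at the right moment so that the $\max_{a'}$ inside $\mathcal{T}_h$ agrees with $f_{h+1}(x_{h+1}, \pi_{f,h+1}(x_{h+1}))$. The boundary convention $f_{H+1} \equiv 0$ (stated in \pref{sec:basic_cover}) is what makes the telescoping close cleanly at the final step.
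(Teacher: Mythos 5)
Your proof is correct. The paper does not reproduce an argument for this lemma---it simply cites \citet[Lemma 1]{jiang2017contextual}---and your telescoping decomposition with the per-layer conditional expectation step (using that $\pi_f$ is greedy so that $f_{h+1}(x_{h+1},a_{h+1})=\max_{a'}f_{h+1}(x_{h+1},a')$, and that $f_{H+1}\equiv 0$ closes the telescope) is exactly the standard proof of that cited result.
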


\iclr{
\begin{lemma}[Equivalence of \coverability and \reachability]
\label{lem:concen_eq_area}
The following definition is equivalent to
\pref{def:low_concentrability}:
\begin{align*}
\Ccov \ldef{} \max_{h\in\brk{H}}\sum_{(x,a) \in \Xcal \times \Acal}\sup_{\pi \in \Pi} d_h^\pi(x,a).
\end{align*}
\end{lemma}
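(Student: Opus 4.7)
The plan is to prove the equality layer-by-layer, using the fact that in \pref{def:low_concentrability} the distributions $\mu_1,\ldots,\mu_H$ can be chosen independently. Since each $\mu_h$ only appears in the ratio for layer $h$, I would first observe
\[
\Ccov \;=\; \inf_{\mu_1,\ldots,\mu_H}\max_{h\in[H]} \sup_{\pi\in\Pi}\nrm*{\frac{d_h^\pi}{\mu_h}}_\infty \;=\; \max_{h\in[H]} \inf_{\mu_h\in\Delta(\cX\times\cA)} \sup_{\pi\in\Pi}\nrm*{\frac{d_h^\pi}{\mu_h}}_\infty,
\]
which reduces the problem to showing, for each fixed $h$, that the inner inf-sup equals $C_h \ldef \sum_{(x,a)}\sup_{\pi\in\Pi}d_h^\pi(x,a)$.

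For the direction $\inf_{\mu_h}\sup_\pi \nrm{d_h^\pi/\mu_h}_\infty \geq C_h$, I would argue: if the left-hand side is bounded by some constant $c$, then by definition $d_h^\pi(x,a) \leq c\,\mu_h(x,a)$ for every $\pi$ and $(x,a)$. Taking the supremum over $\pi$ on the left gives $M_h(x,a) \ldef \sup_{\pi}d_h^\pi(x,a) \leq c\,\mu_h(x,a)$, and summing over $(x,a)\in\cX\times\cA$ yields $C_h \leq c\sum_{(x,a)}\mu_h(x,a) = c$, as desired. For the reverse direction, I would exhibit an explicit distribution achieving the bound: take $\mu_h^\star(x,a) \ldef M_h(x,a)/C_h$ (assuming $C_h < \infty$; otherwise both sides are $\infty$ and the claim is trivial). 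This is a valid probability distribution, and for any $\pi\in\Pi$ and $(x,a)$,
\[
\frac{d_h^\pi(x,a)}{\mu_h^\star(x,a)} \;=\; \frac{C_h\cdot d_h^\pi(x,a)}{M_h(x,a)} \;\leq\; C_h,
\]
so $\sup_\pi\nrm{d_h^\pi/\mu_h^\star}_\infty \leq C_h$, giving $\inf_{\mu_h}\sup_\pi \nrm{d_h^\pi/\mu_h}_\infty \leq C_h$.

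Combining these two directions yields $\inf_{\mu_h}\sup_\pi \nrm{d_h^\pi/\mu_h}_\infty = C_h$ for each layer, and hence $\Ccov = \max_{h\in[H]} C_h$, which is exactly the claimed expression. The only mild subtlety is handling $(x,a)$ pairs with $M_h(x,a) = 0$ (where the ratio $d_h^\pi(x,a)/\mu_h^\star(x,a)$ is $0/0$) and ensuring the infimum is well-defined when $C_h = \infty$; both are easy to dispatch by adopting the standard convention $0/0 = 0$ and noting the claim is vacuous in the infinite case. No single step is a real obstacle, the main content is just recognizing the decoupling across $h$ and then constructing the explicit optimizer $\mu_h^\star \propto M_h$.
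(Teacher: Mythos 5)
Your proposal is correct and follows essentially the same route as the paper's proof: one direction integrates the pointwise bound $\sup_\pi d_h^\pi(x,a)\le c\,\mu_h(x,a)$ against a (near-)optimal $\mu_h$, and the other exhibits the explicit witness $\mu_h^\star\propto\sup_\pi d_h^\pi(\cdot,\cdot)$. The only differences are presentational — you make the layer-by-layer decoupling of the infimum explicit and note the $0/0$ and $C_h=\infty$ edge cases, both of which the paper handles implicitly.
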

}

\begin{proof}[\cpfname{lem:concen_eq_area}]
We relate \coverability and cumulative reachability for each choice for $h\in\brk{H}$.

{\em \Coverability bounds \reachability.}
It follows immediately from
the definition of \coverability that if
$\mu_h\in\Delta(\cX\times\cA)$ realizes the value of $\Ccov$, then
\begin{align*}
\sum_{(x,a) \in \Xcal \times \Acal} \max_{\pi \in \Pi} d_h^\pi(x,a) = &~ \sum_{(x,a) \in \Xcal \times \Acal} \frac{\max_{\pi \in \Pi} d_h^\pi(x,a)}{\mu_h(x,a)} \mu_h(x,a)
\\
\leq &~ \sum_{(x,a) \in \Xcal \times \Acal} C_\on \cdot \mu_h(x,a)
\tag{by \cref{def:low_concentrability}}
\\
= &~ C_\on.
\end{align*}

{\em Cumulative reachability bounds \coverability.}
Define $\mu_h(x,a) \propto \max_{\pi \in \Pi} d_h^\pi(x,a)$. Then for
any $\pi \in \Pi$ and any $(x,a) \in \Xcal \times \Acal$, we have
\begin{align*}
\frac{d_h^\pi(x,a)}{\mu_h(x,a)} = &~ \frac{d_h^\pi(x,a)}{\nicefrac{\max_{\pi'' \in \Pi} d_h^{\pi''}(x,a)}{\sum_{(x',a') \in \Xcal \times \Acal}\max_{\pi' \in \Pi} d_h^{\pi'}(x',a')}}
\\
\leq &~ \sum_{(x',a') \in \Xcal \times \Acal}\max_{\pi' \in \Pi} d_h^{\pi'}(x',a').
\end{align*}
This completes the proof.
\end{proof}

\iclr{
\begin{proof}[\pfref{thm:golf_guarantee_basic}]
\arxiv{Equipped with \pref{lem:concen_eq_area}, we proceed with the proof of \pref{thm:golf_guarantee_basic}.}
\iclr{Equipped with \pref{lem:concen_eq_area}, we prove \pref{thm:golf_guarantee_basic}.}

\paragraph{Preliminaries}
For each $t$, we define $\delta^\iter{t}_h(\cdot,\cdot) \coloneqq f_h^\iter{t}(\cdot,\cdot) - (\Tcal_h f_{h+1}^\iter{t})(\cdot,\cdot)$, which may be viewed as a ``test function'' at level $h$ induced by $f\ind{t} \in \Fcal$. We adopt the shorthand $d_h^\iter{t}\equiv{}d_h^{\pi^\iter{t}}$, and we define
\begin{align}
\label{eq:def_dbar}
  &\dtilde_h^\iter{t} (x,a) \coloneqq~ \sum_{i = 1}^{t - 1} d_h^\iter{i} (x,a),\mathand
  \mu^\star_h \coloneqq ~ \argmin_{\mu_h\in\Delta(\cX\times\cA)} \sup_{\pi\in\Pi}\, \nrm*{\frac{d_h^\pi}{\mu_h}}_{\infty}.
\end{align}
That is, $\dtilde_h^\iter{t}$ unnormalized average of all state visitations encountered prior to step $t$, and $\mu^\star_h$ is the distribution that attains the value of $\Ccov$ for layer $h$.\footnote{If the minimum in \pref{eq:def_dbar} is not obtained, we can repeat the argument that follows for each element of a limit sequence attaining the infimum.} Throughout the proof, we perform a slight abuse of notation and write $\E_{\dtilde_h^\iter{t}}[f] \coloneqq \sum_{i = 1}^{t - 1} \E_{d_h^\iter{i}}[f]$ for any function $f:\Xcal \times \Acal \to \RR$.

\paragraph{Regret decomposition}
As a consequence of completeness (\pref{asm:completeness}) and the construction of $\cF\ind{t}$, a standard concentration argument (\pref{lem:golf_concentration}\arxiv{ in \pref{app:main}}) guarantees that with probability at least $1-\delta$, for all $t\in\brk{T}$:
\begin{align}
\mathrm{(i)}\;\;\Qstar\in\cF\ind{t},\mathand\mathrm{(ii)}\;\;\sum_{x,a} \dtilde_h^\iter{t}(x,a) \left(\delta_h^\iter{t}(x,a)\right)^2\leq\bigoh(\beta).\label{eq:golf_concentration}
\end{align}
We condition on this event going forward. Since $\Qstar\in\cF\ind{t}$, we are guaranteed that $f\ind{t}$ is optimistic (i.e., $f_1\ind{t}(x_1,\pi_{f\ind{t},1}(x_1))\geq{}\Qstar_1(x_1,\pi_{\fstar,1}(x_1))$), and a regret decomposition for optimistic algorithms (\pref{lem:regret_optimistic}\arxiv{ in \pref{app:main}}) allows us to relate regret to the average Bellman error under the learner's sequence of policies:
\begin{align*}
  \Reg \leq{} \sum_{t = 1}^{T} \left( f_1^\iter{t}(x_1,\pi_{f\ind{t}_1,1}(x_1)) -J(\pi\ind{t}) \right) 
  = \sum_{t = 1}^{T} \sum_{h=1}^{H}\E_{(x,a)\sim{}d_h^\iter{t}}\big[\underbrace{f_h\ind{t}(x,a)-(\cT_hf_{h+1}\ind{t})(x,a)}_{\rdef\delta_h^\iter{t}(x,a)}\big].
\end{align*}
To proceed, we use a change of measure argument to relate the on-policy \emph{average} Bellman error $\E_{(x,a)\sim{}d_h^\iter{t}}[\delta_h^\iter{t}(x,a)]$ appearing above to the in-sample \emph{squared} Bellman error $\E_{(x,a)\sim{}\dtilde_h^\iter{t}}[\delta_h^\iter{t}(x,a)^2]$; the latter is small as a consequence of \pref{eq:golf_concentration}. Unfortunately, naive attempts at applying change-of-measure fail because during the initial rounds of exploration, the on-policy and in-sample visitation probabilities can be very different, making it impossible to relate the two quantities (i.e., any natural notion of extrapolation error will be arbitrarily large).

To address this issue, we introduce the notion of a ``burn-in'' phase for each state-action pair $(x,a)\in\cX\times\cA$ by defining
\[
\tau_h(x,a) = \min\left\{t \mid \dtilde_h^\iter{t}(x,a) \geq \Ccov\cdot\mu^{\star}_h(x,a)\right\},
\] %
which captures the earliest time at which $(x,a)$ has been explored sufficiently; we refer to $t<\tau_h(x,a)$ as the burn-in phase for $(x,a)$.

Going forward, let $h\in\brk{H}$ be fixed. We decompose regret into contributions from the burn-in phase for each state-action pair, and contributions from pairs which have been explored sufficiently and reached a stable phase ``stable phase''.
\begin{align*}
\underbrace{\sum_{t=1}^{T}\E_{(x,a)\sim{}d_h^\iter{t}}\left[\delta_h^\iter{t}(x,a)\right]}_{\text{on-policy average Bellman error}} = \underbrace{\sum_{t=1}^{T}\E_{(x,a)\sim{}d_h^\iter{t}}\left[\delta_h^\iter{t}(x,a)\1[t < \tau_h(x,a)]\right]}_{\text{burn-in phase}} +
  \underbrace{\sum_{t=1}^{T}\E_{(x,a)\sim{}d_h^\iter{t}}\left[\delta_h^\iter{t}(x,a)\1[t \geq \tau_h(x,a)]\right]}_{\text{stable phase}}.
\end{align*}
We will not show that every state-action pair leaves the burn-in phase. Instead, we use \coverability to argue that the contribution from pairs that have not left this phase is small on average.
In particular, we use that $\abs{\delta_h\ind{t}}\leq{}1$ to bound
\begin{align*}
  \sum_{t = 1}^{T} \E_{(x,a)\sim{}d_h^\iter{t}}\left[\delta_h^\iter{t}(x,a)\1[t < \tau_h(x,a)]\right] \leq
  \sum_{x,a}\sum_{t<\tau_h(x,a)} d_h^\iter{t}(x,a)=
  \sum_{x,a}\dtil_h\ind{\tau_h(x,a)}(x,a)
  \leq{}2\Ccov\sum_{x,a}\mustar_h(x,a)=2\Ccov,   
\end{align*}
where the last inequality holds because \[\dtil \ind{\tau_h(x,a)}_h(x,a) = \dtilde_h^\iter{\tau_h(x,a) - 1}(x,a) + d_h^\iter{\tau_h(x,a) - 1}(x,a) \leq 2C_\on\cdot\mu_h^\star(x,a),\] which follows from \cref{eq:def_dbar} and the definition of $\tau_h$.%

For the stable phase, we apply change-of-measure as follows:
\begin{align}
\nonumber
&~ \sum_{t = 1}^{T} \E_{(x,a)\sim{}d_h^\iter{t}}\left[\delta_h^\iter{t}(x,a)\1[t \geq \tau_h(x,a)]\right]
\\
\nonumber
&= ~ \sum_{t = 1}^{T} \sum_{x,a} d_h^\iter{t}(x,a) \left( \frac{\dtilde_h^\iter{t}(x,a)}{\dtilde_h^\iter{t}(x,a)} \right)^{\nicefrac{1}{2}} \delta_h^\iter{t}(x,a) \1[t \geq \tau_h(x,a)]
\\
\label{eq:reg_CS}
&\leq~ \underbrace{\sqrt{\sum_{t = 1}^{T} \sum_{x,a} \frac{\left( \1[t \geq \tau_h(x,a)] d_h^\iter{t}(x,a) \right)^2}{\dtilde_h^\iter{t}(x,a)} }}_{\text{{\tt (I)}: extrapolation error}} \cdot  \underbrace{\sqrt{\sum_{t = 1}^{T} \sum_{x,a} \dtilde_h^\iter{t}(x,a) \left(\delta_h^\iter{t}(x,a)\right)^2}}_{\text{{\tt (II)}: in-sample \emph{squared} Bellman error}},
\end{align}
where the last inequality is an application of Cauchy-Schwarz. Using part \texttt{(II)} of \pref{eq:golf_concentration}, we bound the in-sample error above by
\begin{align}
\label{eq:reg_CS_term2_main}
\texttt{(II)} \leq O\prn[\big]{\sqrt{\beta T}}.
\end{align}

\paragraph{Bounding the extrapolation error using \coverability}
To proceed, we show that the extrapolation error \texttt{(I)} is controlled by \coverability. 
We begin with a scalar variant of the standard elliptic potential lemma \citep{lattimore2020bandit}; \arxiv{this result is proven in \cref{app:main} for completeness.}\iclr{this result is proven in the sequel.}
\begin{lemma}[Per-state-action elliptic potential lemma]
\label{lem:per_sa_ep}
Let $d^\iter{1}, d^\iter{2}, \dotsc, d^\iter{T}$ be an arbitrary sequence of distributions over a set $\Zcal$ (e.g., $\Zcal = \Xcal \times \Acal$), and let $\mu\in\Delta(\Zcal)$ be a distribution such that $d^\iter{t}(z) / \mu(z) \leq C$ for all $(z,t) \in \Zcal \times [T]$. Then for all $z \in \Zcal$, we have
\begin{align*}
\sum_{t = 1}^{T} \frac{d^\iter{t}(z)}{\sum_{i < t} d^\iter{i}(z) + C \cdot \mu(z)} \leq O \left(\log\left( T \right) \right).
\end{align*}
\end{lemma}
We bound the extrapolation error \texttt{(I)} by applying \pref{lem:per_sa_ep} on a \emph{per-state basis}, then using \coverability (and the equivalence to \reachability) to argue that the potentials from different state-action pairs average out. Observe that by the definition of $\tau_h$, we have that for all $t \geq \tau_h(s,a)$, $\dtilde_h^\iter{t}(x,a) \geq C_\on \mu_h^\star(x,a) \Rightarrow \dtilde_h^\iter{t}(x,a)\geq \frac{1}{2}(\dtilde_h^\iter{t}(x,a) + C_\on \mu_h^\star(x,a))$, which allows us to bound term \texttt{(I)} of extrapolation error by
\begin{align}
\nonumber
\sum_{t = 1}^{T} \sum_{x,a} \frac{\left( \1[t \geq \tau_h(x,a)] d_h^\iter{t}(x,a) \right)^2}{\dtilde_h^\iter{t}(x,a)} \leq &~ 2 \sum_{t = 1}^{T} \sum_{x,a} \frac{ d_h^\iter{t}(x,a) \cdot d_h^\iter{t}(x,a)}{\dtilde_h^\iter{t}(x,a) + C_\on \cdot \mu_h^\star(x,a)}
\\
\leq &~ 2\sum_{t = 1}^{T} \sum_{x,a} \max_{t'\in\brk{T}}d_h\ind{t'}(x,a)\cdot\frac{ d_h^\iter{t}(x,a) }{\dtilde_h^\iter{t}(x,a) + C_\on \cdot \mu_h^\star(x,a)}
       \nonumber
\\
\nonumber
\leq &~ 2\underbrace{\left( \max_{(s,a) \in \Scal \times \Acal} \sum_{t = 1}^{T} \frac{ d_h^\iter{t}(x,a)}{\dtilde_h^\iter{t}(x,a) + C_\on \cdot \mu_h^\star(x,a)} \right)}_{\leq \bigoh(\log(T)) \text{ by \cref{lem:per_sa_ep}}} \cdot \underbrace{\left( \sum_{x,a}\max_{t\in\brk{T}} d\ind{t}_h(x,a) \right)}_{\leq C_\on \text{ by \pref{lem:concen_eq_area}}}
\\
\label{eq:reg_CS_term1_main}
\leq &~ O \left( C_\on \log\left( T \right) \right).
\end{align}
To conclude, we substitute \cref{eq:reg_CS_term2_main,eq:reg_CS_term1_main} into \cref{eq:reg_CS}, which gives
\begin{align*}
\Reg \leq \sum_{h = 1}^{H} \E_{(x,a)\sim{}d_h^\iter{t}}\left[\delta_h^\iter{t}(x,a)\right] \leq O \left( H \sqrt{\Ccov\cdot\beta T \log(T) } \right).
\end{align*}
\arxiv{
\qed

  To obtain the expression in \pref{eq:reg_CS} (term \texttt{(I)}), our proof critically uses that the confidence set construction provides a bound on the \emph{squared Bellman error} $\E_{(x,a)\sim{}\dtilde_h^\iter{t}}[\delta_h^\iter{t}(x,a)^2]$ in the change of measure argument. This contrasts with existing works on online RL with general function approximation~\citep[e.g.,][]{jiang2017contextual,jin2021bellman,du2021bilinear}, which typically move from average Bellman error to squared Bellman error as a lossy step, and only work with squared Bellman error because it permits simpler construction of confidence sets. For the argument in \pref{eq:reg_CS}, confidence sets based on average Bellman error will lead to a larger notion of extrapolation error which cannot be controlled using \coverability (cf. \pref{sec:gen_bedim}).
}
 \end{proof}
}

\begin{proof}[\cpfname{lem:per_sa_ep}]
Using the fact for any $u \in [0,1]$, $u \leq 2 \log(1 + u)$, we have
\begin{align*}
\sum_{t = 1}^{T} \frac{d^\iter{t}(z)}{\sum_{i < t} d^\iter{i}(z) + C \cdot \mu(x,a)} \leq &~ 2 \sum_{t = 1}^{T} \log\left(1 + \frac{d^\iter{t}(x,a)}{\sum_{i < t} d^\iter{i}(z) + C \cdot \mu(x,a)}\right)
\tag{since $d^\iter{t}(x,a) / \mu(x,a) \leq C\;\;\forall t \in [T]$}
\\
= &~ 2 \sum_{t = 1}^{T} \log\left(\frac{\sum_{i < t+1} d^\iter{i}(z) + C \cdot \mu(x,a)}{\sum_{i < t} d^\iter{i}(z) + C \cdot \mu(x,a)}\right)
\\
= &~ 2 \log\left(\prod_{t = 1}^{T} \frac{\sum_{i < t+1} d^\iter{i}(z) + C \cdot \mu(x,a)}{\sum_{i < t} d^\iter{i}(z) + C \cdot \mu(x,a)}\right)
\\
= &~ 2 \log\left(\frac{\sum_{i = 1}^{T} d^\iter{i}(z) + C \cdot \mu(x,a)}{C \cdot \mu(x,a)}\right)
\\
\leq &~ 2 \log(T + 1).
\tag{since $d\iter{t}(x,a) / \mu(x,a) \leq C\;\;\forall t \in [T]$}
\end{align*}
This completes the proof.
\end{proof}

\arxiv{

\begin{proof}[\pfref{prop:ex_bmdp}]
  Let $h\in\brk{H}$ be fixed. Let $z_h\ldef{}(s_h,\xi_h)$. For each $z=(s,\xi)\in\cS\times\Xi$, let
  $d^{\pi}_h(z)\ldef{}\bbP^{\pi}(z_h=z)$. Proposition 4 of
  \citet{efroni2021provably} shows that for all $z=(s,\xi)$, if we
  define $\pi_{s}=\argmax_{\pi\in\Pi}\bbP^{\pi}(s_h=s)$, then
  \begin{equation}
    \label{eq:exbmdp_policy_cover}
    \max_{\pi\in\Pi}d^{\pi}_h(z)= d_h^{\pi_s}(z).
  \end{equation}
  That is, $\pi_s$ maximizes $\bbP^{\pi}(z_h=(s,\xi))$ for all
  $\xi\in\Xi$ simultaneously. With this in mind, let us define
  \[
    \mu_h(x,a) = \frac{1}{\abs{\cS}\abs{\cA}}\sum_{s\in\cS}d_h^{\pi_s}(x).
  \]
  We proceed to bound the concentrability coefficient for $\mu$. Fix $\pi\in\Pi$ and $x\in\cX$, and let $z=(s,\xi)\in\cS\times\Xi$ be the unique latent
  state such that $x\in\supp(q_h(s,\xi))$. We first observe that
  \begin{align*}
    \frac{d^{\pi}_h(x,a)}{\mu_h(x,a)}
    \leq{}     \abs{\cS}\abs{\cA}\cdot{}\frac{d^{\pi}_h(x)}{d^{\pi_s}_h(x)}.
  \end{align*}
  Next, since $x_h\sim{}q_h(z_h)$, we have
  \begin{align*}
    \frac{d^{\pi}_h(x)}{d^{\pi_s}_h(x)}
    = \frac{q_h(x\mid{}z)d^{\pi}_h(z)}{q_h(x\mid{}z)d^{\pi_s}_h(z)} = \frac{d^{\pi}_h(z)}{d^{\pi_s}_h(z)}.
  \end{align*}
  Finally, by \pref{eq:exbmdp_policy_cover}, we have
  \begin{align*}
    \frac{d^{\pi}_h(z)}{d^{\pi_s}_h(z)}
    \leq{} \frac{\max_{\pi}d^{\pi}_h(z)}{d^{\pi_s}_h(z)}
    = \frac{d^{\pi_s}_h(z)}{d^{\pi_s}_h(z)} = 1.
  \end{align*}
Since this holds for all $x\in\cX$ simultaneously, this choice for
$\mu_h$ certifies that 
that $\Ccov\leq{}\abs{\cS}\abs{\cA}$.
\end{proof}

\begin{proof}[\pfref{prop:rich_obs}]
  Let $\Pi$ denote the space of all randomized policies acting on the
  latent state space $\cS$, and let $\Pi'$ denote the space of all
  randomized policies acting on the observed state space $\cX$. Let
  $\bbP^{\pi}$ denote distribution over trajectories in $M$ induced by
  $\pi\in\Pi$, and let $\bbQ^{\pi'}$ denote the distribution over
  trajectories in $M$ induced by $\pi'\in\Pi'$.

  Fix $h\in\brk{H}$, and let $\mu_h\in\Delta(\cS\times\cA)$ witness
  the \coverability coefficient for $M$. Define
  \[
    \mu'_h(x,a)=q_h(x\mid{}\phistar(x))\mu_h(\phistar(x),a),
  \]
  where $\phistar_h:\cX\to\cS$ is the decoder that maps $x\in\cX$ to
  the   unique state $s\in\cS$ such that $x\in\supp(q_h(s))$. For any
  $\pi'\in\Pi'$ and $(x,a)\in\cX\times\cA$, letting $s=\phistar_h(x)$, we have
  \begin{align*}
    \frac{d^{\pi'}_h(x,a)}{\mu'_h(x,a)}
    =     \frac{q_h(x\mid{}s)\bbQ^{\pi'}(s_h=s,a_h=a)}{q_h(x\mid{}s)\mu_h(s,a)}
    =     \frac{\bbQ^{\pi'}(s_h=s,a_h=a)}{\mu_h(s,a)}
    \leq{} \frac{\max_{\pi'\in\Pi'}\bbQ^{\pi'}(s_h=s,a_h=a)}{\mu_h(s,a)}.
  \end{align*}
  Finally, because the observation process is decodable, any Markov
  policy acting on $x_h$ can be viewed as a randomized Markov policy
  acting on $s_h$. As a result, we have
$\max_{\pi'\in\Pi'}\bbQ^{\pi'}(s_h=s,a_h=a)=\max_{\pi\in\Pi}\bbP^{\pi}(s_h=s,a_h=a)$,
and
\[
  \frac{\max_{\pi\in\Pi}\bbP^{\pi}(s_h=s,a_h=a)}
  {\mu_h(s,a)} \leq{} \Ccov(M).
\]

\end{proof}

\begin{proof}[\pfref{prop:exo}]
    Let $\Pi$ denote the space of all randomized policies acting on the
  latent state space $\cS$, and let $\Pi'$ denote the space of all
  randomized policies acting on the observed state space $\cX$. Let
  $\bbP^{\pi}$ denote distribution over trajectories in $M$ induced by
  $\pi\in\Pi$, and let $\bbQ^{\pi'}$ denote the distribution over
  trajectories in $M$ induced by $\pi'\in\Pi'$.

    Fix $h\in\brk{H}$, and let $\mu_h\in\Delta(\cS\times\cA)$ witness
  the \coverability coefficient for $M$. For
  $x=(s,\xi)\in\cS\times\Xi$, let 
  \[
    \mu'_h(x,a)=\bbQ(\xi_h=\xi)\mu_h(s,a),
  \]
  where $\bbQ(\xi_h=\xi)$ is the marginal probability of the event
  that $\xi_h=\xi$ in $M'$, which does not depend on the policy under consideration.

  For any
  $\pi'\in\Pi'$ and $(s,\xi,a)\in\cS\times\Xi\times\cA$, we have
  \begin{align*}
    \frac{d^{\pi'}_h(x,a)}{\mu'_h(x,a)}
    =
    \frac{\bbQ^{\pi'}(s_h=s,\xi_h=\xi,a_h=a)}{\bbQ(\xi_h=\xi)\mu_h(s,a)}
    \leq     \frac{\max_{\pi'\in\Pi'}\bbQ^{\pi'}(s_h=s,\xi_h=\xi,a_h=a)}{\bbQ(\xi_h=\xi)\mu_h(s,a)}.
  \end{align*}
  From Propositions 3 and 4 of \citet{efroni2021provably}, we have
  $\max_{\pi'\in\Pi'}\bbQ^{\pi'}(s_h=s,\xi_h=\xi,a_h=a)=
  \bbQ(\xi_h=\xi)\cdot \max_{\pi'\in\Pi'}\bbQ^{\pi'}(s_h=s,a_h=a)=
  \bbQ(\xi_h=\xi)\cdot \max_{\pi\in\Pi}\bbP^{\pi}(s_h=s,a_h=a)$, so
  that
  \begin{align*}
    \frac{\max_{\pi'\in\Pi'}\bbQ^{\pi'}(s_h=s,\xi_h=\xi,a_h=a)}{\bbQ(\xi_h=\xi)\mu_h(s,a)}
    &=
    \frac{\bbQ(\xi_h=\xi)\max_{\pi\in\Pi}\bbP^{\pi}(s_h=s,a_h=a)}{\bbQ(\xi_h=\xi)\mu_h(s,a)} \\
& = \frac{\max_{\pi\in\Pi}\bbP^{\pi}(s_h=s,a_h=a)}{\mu_h(s,a)} \leq \Ccov(M).
  \end{align*}
  
\end{proof}

}

\section{Proofs and Additional Details from \creftitle{sec:C_gen}}
\label{app:lower}

\subsection{Additional Details: Offline RL}

\begin{proposition}[Generalized concentrability is sufficient for
  offline RL]
  \label{prop:generalized_offline}
Given access to an offline data distribution $\mu$ satisfying
generalized concentrability (\cref{def:C_gen_offline}), if $\cF$
satisfies \pref{asm:completeness}, one can find an $\veps$-optimal
policy using $\poly(\Cgen_\off(\mu,\cF),H,\log\abs{\cF},\veps^{-1})$ samples.
\end{proposition}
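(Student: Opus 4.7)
The plan is to analyze the Minimum Squared Bellman Optimization (MSBO) estimator, which directly minimizes an empirical proxy for the offline squared Bellman error, and then convert the resulting in-sample error bound to a suboptimality bound via \cref{def:C_gen_offline}.

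First I would describe the algorithm. For each $h \in [H]$ and each $f \in \cF$, define the empirical Bellman residual estimator
\[
\widehat{\cE}_h(f) \;\ldef\; \frac{1}{n}\sum_{(x,a,r,x') \in D_h} \prn[\big]{f_h(x,a) - r - \max_{a'} f_{h+1}(x',a')}^2 \; - \; \min_{g \in \cF_h} \frac{1}{n}\sum_{(x,a,r,x') \in D_h} \prn[\big]{g(x,a) - r - \max_{a'} f_{h+1}(x',a')}^2.
\]
Under completeness (\cref{asm:completeness}), $\cT_h f_{h+1} \in \cF_h$, so the inner minimizer concentrates around $\cT_h f_{h+1}$, and a standard argument (see, e.g., \citet{chen2019information,xie2020q}) gives uniform concentration
\[
\sup_{f \in \cF} \abs[\big]{\widehat{\cE}_h(f) - \En_{\mu_h}\brk[\big]{(f_h - \cT_h f_{h+1})^2}} \leq \vepsstat^2, \quad \text{with } \vepsstat^2 = O\!\prn*{\tfrac{\log(H|\cF|/\delta)}{n}},
\]
with probability $\geq 1-\delta$. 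Output $\hhat f \in \argmin_{f \in \cF} \sum_h \widehat{\cE}_h(f)$.

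Second, I would derive the in-sample guarantee. Completeness implies $\Qstar \in \cF$ and $\En_{\mu_h}[(\Qstar_h - \cT_h \Qstar_{h+1})^2] = 0$ for all $h$. Hence, by uniform concentration and the optimality of $\hhat f$,
\[
\sum_{h=1}^H \En_{\mu_h}\brk[\big]{(\hhat f_h - \cT_h \hhat f_{h+1})^2} \;\leq\; O(H \vepsstat^2).
\]
Applying \cref{def:C_gen_offline} to any policy $\pi \in \Pi$ (including $\pistar$ and $\pi_{\hhat f}$) then yields
\[
\sum_{h=1}^H \En_{d_h^\pi}\brk[\big]{(\hhat f_h - \cT_h \hhat f_{h+1})^2} \;\leq\; O(\Cgen_\off(\mu,\cF) \cdot H \vepsstat^2).
\]

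Third, I would convert to a suboptimality bound using a standard ``greedy-policy'' decomposition: since $\pi_{\hhat f}$ is greedy w.r.t.\ $\hhat f_1$, the telescoping simulation lemma \citep[e.g.,][Lemma 1]{jiang2017contextual} gives
\[
J(\pistar) - J(\pi_{\hhat f}) \;\leq\; \sum_{h=1}^H \En_{d_h^{\pistar}}\brk[\big]{|\hhat f_h - \cT_h \hhat f_{h+1}|} \;+\; \sum_{h=1}^H \En_{d_h^{\pi_{\hhat f}}}\brk[\big]{|\hhat f_h - \cT_h \hhat f_{h+1}|}.
\]
Jensen's inequality followed by Cauchy--Schwarz across layers gives, for each $\pi \in \{\pistar, \pi_{\hhat f}\}$,
\[
\sum_{h=1}^H \En_{d_h^\pi}\brk[\big]{|\hhat f_h - \cT_h \hhat f_{h+1}|} \leq \sqrt{H \cdot \sum_{h=1}^H \En_{d_h^\pi}\brk[\big]{(\hhat f_h - \cT_h \hhat f_{h+1})^2}} \leq O\!\prn[\Big]{\sqrt{H^2 \Cgen_\off(\mu,\cF) \cdot \vepsstat^2}}.
\]
Substituting the bound on $\vepsstat^2$ and solving $H\sqrt{\Cgen_\off(\mu,\cF) \cdot \log(H|\cF|/\delta) / n} \leq \veps$ yields the desired sample complexity $n = \poly(\Cgen_\off(\mu,\cF), H, \log|\cF|, \veps^{-1})$.

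The proof is primarily bookkeeping; the only mildly delicate step is the greedy-policy decomposition, which requires $\pi_{\hhat f}$ to be greedy with respect to $\hhat f_1$ so that the ``counterfactual'' comparison $\hhat f_1(x_1, \pistar_1) \leq \hhat f_1(x_1, \pi_{\hhat f,1})$ absorbs the initial-state contribution. Given this, the combination of completeness (for the unbiased regression step) and generalized concentrability (for the change of measure to both $\pistar$ and $\pi_{\hhat f}$) drives the entire argument. The analogous statement for the ``one-step'' GOLF variant would follow by replacing the MSBO minimization with its version-space constraint $\widehat{\cE}_h(f) \leq \beta$ and reusing the same change-of-measure step.
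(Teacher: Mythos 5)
Your proposal is correct and takes essentially the same route as the paper's proof: MSBO with the empirical squared-Bellman-error objective, an in-sample bound $\sum_h \En_{\mu_h}[(\hat f_h - \cT_h \hat f_{h+1})^2] \lesssim H\log(|\cF|/\delta)/n$ via completeness, and then the greedy-policy decomposition plus Jensen/Cauchy--Schwarz and the change of measure from \cref{def:C_gen_offline} (the paper packages your two-policy decomposition as $2\max_{\pi\in\Pi}\sum_h \En_{d_h^\pi}[|\hat f_h - \cT_h\hat f_{h+1}|]$, citing Corollary 4 of Xie et al., but it is the same bound). No substantive differences.
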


\begin{proof}[\pfref{prop:generalized_offline}]
Given an offline dataset $\Dcal=\crl{\cD_h}_{h=1}^{H}$ with $n$ samples for each layer
$h\in\brk{H}$ under the distribution $\mu_h$, the MSBO
algorithm~\citep[e.g.,][]{xie2020q} produces a value function
$\fhat\in\cF$ of the form
\begin{gather*}
\fhat \leftarrow \argmin_{f \in \Fcal} \sum_{h=1}^{H} \left( \Lcal_{h}(f_h,f_{h+1}) - \min_{f'_h \in \Fcal_h} \Lcal_{h}(f'_h,f_{h+1}) \right),
\\
\nonumber
\text{where \quad } \Lcal_{h}(f,f') \coloneqq \sum_{(x,a,r,x') \in \Dcal_h}\prn[\Big]{ f(x,a) - r - \max_{a' \in \Acal} f'(x',a') }^2 ,~\forall f,f' \in \Fcal.
\end{gather*}
By adapting the proof of Theorem 5 of \citet{xie2020q} (or \cref{lem:golf_concentration}), one can show that under \pref{asm:completeness}, with probability at least $1-\delta$, $\fhat$ satisfies
\begin{align*}
\sum_{h = 1}^{H}\E_{(x,a) \sim \mu_h} \left[ \left((\fhat_h(x,a) - \Tcal_h
  \fhat_{h+1})(x,a)\right)^2 \right]\leq H \cdot\frac{\log(\nicefrac{\abs{\cF}}{\delta})}{n}.
\end{align*}
  The result now follows by applying an adaptation of \citet[Corollary
  4]{xie2020q}, which shows that for any $f\in\cF$,
  \begin{align*}
    J(\pi^\star) - J(\pi_f) \leq &~ 2 \max_{\pi \in \Pi} \sum_{h = 1}^{H} \E_{(x,a) \sim d^\pi_h} \left[\left|f_h(x,a) - (\Tcal_h f_{h+1})(x,a)\right|\right]
    \\
    \leq &~ 2 \sqrt{H \max_{\pi \in \Pi} \sum_{h = 1}^{H} \E_{(x,a) \sim \mu_h} \left[(f_h(x,a) - (\Tcal_h f_{h+1})(x,a))^2\right]}
    \\
    \leq &~ 2 \sqrt{H \Cgen_\off(\mu,\cF) \sum_{h = 1}^{H} \E_{(x,a) \sim \mu_h} \left[(f_h(x,a) - (\Tcal_h f_{h+1})(x,a))^2\right]}
           \tag{by \cref{def:C_gen_offline}}
    \\
    \leq &~  2 H \sqrt{\frac{\Cgen_\off(\mu,\cF) \log(\nicefrac{\abs{\cF}}{\delta})}{n} }.
  \end{align*}
\end{proof}

\subsection{Proofs from \creftitle{sec:C_gen}}

\begin{proof}[\pfref{thm:generalized_lower}]
  Assume
  without loss of generality that $H\leq\min\crl{\log_2(X),C}$;
  if this does not hold, the result is obtained by applying the
  argument that follows with $H'=\min\crl{H,\floor{\log_2(X)},C}$.

  We consider a family of deterministic MDPs with horizon $H$. We
  use a layered state space $\cX=\cX_1\cup\cdots\cup\cX_H$, where
  only states in $\cX_h$ are reachable at layer $h$. The state space
  is a binary tree of depth $H-1$, which has
  $\sum_{h=0}^{\log_2(X)-1}2^{h}=X-1$ states. The are two actions,
  $\mathsf{left}$ and $\mathsf{right}$, which determine whether the
  next state is the left or right successor in the tree.

For each MDP in the family, we allow a single action at a single leaf at
$h = H$ to have reward $r_H = 1$, give reward $0$ to all actions in
all other states. For each such MDP, we use $(x_H^\star,a_H^\star)$ to
denote the single state-action pair with $r = 1$. We also use
$(x_h^\star,a_h^\star)$ for $h \in [H]$ to denote the unique path from
$x_1$ to $(x_H^\star,a_H^\star)$. Note that the optimal policy is to
follow this path, i.e.
  \begin{align*}
    d^{\pi^\star}_h(x,a) = \1[(x,a) = (x_h^\star,a_h^\star)].
  \end{align*}

  We choose $\Fcal_h$ to be the set of all possible indicator functions for a
  single state-action pair:
  \begin{align*}
    \Fcal_h \coloneqq \left\{ f_h(x',a') = \1(x' = x, a' = a)\mid\; \forall (x,a) \in \Xcal_h \times \Acal \right\}.
  \end{align*}
  We define $\Fcal = \Fcal_1 \times \cdots \times \Fcal_H$. Note that for each $h \in [H]$,
  \begin{align*}
    Q_h^\star(x_h,a_h) = \1(x_h = x_h^\star, a_h = a_h^\star) \in \Fcal_h.
  \end{align*}
  In addition, we have $\log\abs{\cF}\leq{}H\log(2X)$.

  \paragraph{Completeness}
We first verify that the construction satisfies completeness. Fix $f_h
\in \Fcal_h$, and let
$f_h(x,a) = \1(x = x_{f,h}, a = a_{f,h})$ for some
$(x_{f,h},a_{f,h}) \in \Xcal_h \times \Acal$. Then for any
$(x_{h-1},a_{h-1})\in\cX_{h-1}\times\cA$, we consider two
cases. First, if $x_{f,h}$ is not the unique successor of
$(x_{h-1},a_{h-1})$, then $(\Tcal_{h-1} f_h)(x_{h-1},a_{h-1})=0$. Otherwise,
\begin{align*}
  (\Tcal_{h-1} f_h)(x_{h-1},a_{h-1}) = &~ \sum_{x_h} \Pr(x_h|x_{h-1},a_{h-1})\, \max_{a_h} f(x_h,a_h)
  \\
  = &~ \Pr(x_{f,h}|x_{h-1},a_{h-1}).
      \tag{as $\max_{a_h} f(x_h,a_h) = \1(x_h = x_{f,h})$}\\
  = &~ 1.
\end{align*}
This means $\Tcal_{h-1} f_h \in \Fcal_{h-1}$, because there exists
a single $(x_{h-1},a_{h-1})$ pair in $\Xcal_{h-1} \times \Acal$ such
that $(\Tcal_{h-1} f_h)(x_{h-1},a_{h-1})\neq{}0$.

\paragraph{Generalized coverability}
We now show that the construction satisfies generalized
coverability. Fix an MDP in the family with optimal path
$\crl*{(x^{\star}_h,a^{\star}_h)}_{h=1}^{H}$. We will show that for all $f=f_{1:H}\in\cF$, if $f_{1:H} \neq
Q^\star_{1:H}$, then there exists $h' \in [H]$, such that
\begin{align}
  \label{eq:c_gen_inconsis}
  \E_{d^{\pi^\star}_{h'}}\left[\left( f_h(x_{h'},a_{h'}) - (\Tcal_{h'} f_{h'+1})(x_{h'},a_{h'})\right)^2\right] = \left( f_{h'}(x_{h'}^\star,a_{h'}^\star) - (\Tcal_{h'} f_{h'+1})(x_{h'}^\star,a_{h'}^\star)\right)^2 = 1.
\end{align}
From here, the result will follow by choosing
$\mu_h=d^{\pistar}_h\;\forall{}h\in\brk{H}$. Indeed, using the boundedness of $f_{1:H} \in \Fcal$,
we have
\begin{align*}
  \sum_{h=1}^{H} \E_{d_h^\pi}\left[ \left( f_h(x_h,a_h) - (\Tcal_h f_{h+1})(x_h,a_h)\right)^2 \right] \leq H,
\end{align*}
for all $\pi \in \Pi$, meaning that \pref{eq:c_gen_inconsis} implies
that $\Cgen_\on(\mu,\cF) \leq H\leq{}C$ in this problem instance.

We proceed to prove \pref{eq:c_gen_inconsis}. Based on the definition
of $\Fcal$, we know that
$\left( f_h(x_h^\star,a_h^\star) - (\Tcal_h
  f_{h+1})(x_h^\star,a_h^\star)\right)^2 \in \{0,1\}$ for all
$h \in [H]$. Therefore, if we assume by contradiction that
$f_{1:H}\neq\Qstar_{1:H}$ and there does not exist an $h' \in [H]$ that
satisfies \cref{eq:c_gen_inconsis}, we must have
\begin{align}
  \label{eq:c_gen_consis}
  f_{h}(x_{h}^\star,a_{h}^\star) = (\Tcal_{h} f_{h+1})(x_{h}^\star,a_{h}^\star), \quad \forall h \in [H].
\end{align}
By the condition \cref{eq:c_gen_consis}, we have
$(\cT_H f_{H+1})(x_{h}^\star,a_{H}^\star) = R_H(x_{H}^\star,a_{H}^\star) =
1$, which implies that
$f_{h}(x_{h}^\star,a_{h}^\star) = 1$ for all $h \in [H]$. From the construction of $\Fcal$, we know $Q^\star_{1:H}$ is the only function
with $Q^\star_{h}(x_{h}^\star,a_{h}^\star) = 1$ for all
$h \in [H]$, which gives the desired contraction, and proves that such
$h' \in [H]$ must exist, establishing \pref{eq:c_gen_inconsis}.
\paragraph{Lower bound on sample complexity}
A lower bound of $2^{\bigom(H)}$ samples to learn a $0.1$-optimal with
probability $0.9$ follows from standard lower bounds for binary
tree-structured MDPs \citep{krishnamurthy2016pac,jiang2017contextual} (recall that since there are $2^{H/2}$ leaves at layer $H$, and only one has 
non-zero reward, finding a policy with non-trivial regret is no easier
than solving a multi-armed bandit
problem with $2^{H/2}$ actions and binary rewards).
\end{proof}

\section{Proofs and Additional Results from \creftitle{sec:gen_bedim}}
\label{app:gen_bedim}

\iclr{
  \subsection{Additional Details: \CompMeasure versus \betext}
  \label{app:sndim_bedim}

The discussion in \pref{sec:gen_bedim} (in particular,
\pref{prop:average_bellman_lower_bound} shows that \betext and Bellman
rank fail to capture \coverability as a result of only considering
average Bellman error rather than squared Bellman error. In light of this observation, a seemingly reasonable fix is to adapt the
\betext to consider squared Bellman error rather than average Bellman
error. Consider the following variant.
\begin{definition}[\betextsq]
\label{def:be_dim_sq}
  We define the \betextsq $\bedimsq(\Fcal, \Pi, \varepsilon,h)$ for layer $h$ is the
  largest $d\in\bbN$ such that there exist sequences
  $\{d_h^\iter{1},d_h^\iter{2},\dotsc,d_h^\iter{d}\} \subseteq
  \Dset^\Pi_h$ and
  $\{\delta_h\ind{1},\ldots,\delta_h^\iter{d}\} \subseteq \Fcal_h - \Tcal_h
  \Fcal_{h+1}$ such that for all $t\in\brk{d}$,
  \begin{align}
    \label{eq:bedimsq}
    \abs{\E_{d_h^\iter{t}}[\delta_h^\iter{t}]} >
    \varepsilon^\iter{t},\mathand \sqrt{\sum_{i = 1}^{t - 1} \E_{d_h^\iter{i}}\big[(\delta_h^\iter{t})^2\big]} \leq \varepsilon^\iter{t},
  \end{align}
  for $\varepsilon\ind{1},\ldots,\veps\ind{d} \geq \varepsilon$. We define $\bedimsq(\cF,\Pi,\veps)=\max_{h\in\brk{H}}\bedimsq(\cF,\Pi,\veps,h)$.
\end{definition}
This definition is identical to \pref{def:be_dim}, except that the
constraint $\sqrt{\sum_{i = 1}^{t - 1}
  \prn{\E_{d_h^\iter{i}}[\delta_h^\iter{t}]}^2} \leq
\varepsilon^\iter{t}$ in \arxiv{\pref{eq:bedim}}\iclr{\pref{def:be_dim}} has been replaced by the
constraint $\sqrt{\sum_{i = 1}^{t - 1}
  \E_{d_h^\iter{i}}\big[(\delta_h^\iter{t})^2\big]} \leq
\varepsilon^\iter{t}$, which uses squared Bellman error instead of
average Bellman error. By adapting the analysis of
\citet{jin2021bellman} it is possible to show that this definition
yields
$\Reg\leq\bigoht\prn[\big]{H\sqrt{\inf_{\veps>0}\crl{\veps^2T+\bedimsq(\cF,\Pi,\veps)}\cdot{}T\log\abs{\cF}}}$. If
one could show that
$\dimbesq(\cF,\Pi,\veps)\approxleq{}\Ccov\cdot\polylog(\veps^{-1})$, 
this would recover \pref{thm:golf_guarantee_basic}. Unfortunately, it
turns out that in general, one can have
$\dimbesq(\cF,\Pi,\veps)=\bigom(\Ccov/\veps)$, which leads to
suboptimal $T^{2/3}$-type regret using the result above. The following
result shows that this guarantee cannot be improved without changing
the complexity measure under consideration.
\begin{proposition}
\label{prop:bedim_lower_bound}
Fix $T\in\bbN$, and let $\veps_T\ldef{}T^{-1/3}$. There exist MDP class/policy class/value function class tuples
$(\cM_1,\Pi_1,\cF_1)$ and $(\cM_2,\Pi_2,\cF_2)$ with the following
properties.
\begin{enumerate}
\item All MDPs in $\cM_1$ (resp. $\cM_2$) satisfy
  \pref{asm:completeness} with respect to $\cF_1$ (resp. $\cF_2$). In
  addition, $\log\abs{\cF_1}=\log\abs{\cF_2}=\bigoht(1)$.
\item For all MDPs in $\cM_1$, we have
  $\dimbesq(\cF_1,\Pi_1,\veps_T)\propto{}1/\veps_T$, and any algorithm
  must have $\En\brk*{\Reg}\geq{}\bigom(T^{2/3})$ for some MDP in the class
\item For all MDPs in $\cM_2$, we also have
  $\dimbesq(\cF_2,\Pi_2,\veps_T)\propto{}1/\veps_T$, yet
  $\Ccov=\bigoh(1)$ and \golf attains $\En\brk*{\Reg}\leq\bigoht(\sqrt{T})$.
\end{enumerate}
\end{proposition}
This result shows that there are two classes for which the optimal
rate differs polynomially ($\bigom(T^{2/3})$ vs. $\bigoht(\sqrt{T})$),
yet the \betext has the same size, and implies that the \betext
cannot provide rates better
than $\bigom(T^{2/3})$ for classes with low coverability in general. Informally, the reason why \betext fails capture the optimal rates for
the problem instances in \pref{prop:bedim_lower_bound} is that the
definition in \pref{eq:bedimsq} only checks whether the average Bellman
error violates the threshold $\veps$, and does not consider how far
the error violates the threshold (e.g.,
$\abs{\E_{d_h^\iter{t}}[\delta_h^\iter{t}]}>\veps$ and
$\abs{\E_{d_h^\iter{t}}[\delta_h^\iter{t}]}>1$ are counted the same).

In spite of this counterexample, it is possible to show that the
\betext with squared Bellman error is always bounded by the
\CompMeasure up to a $\poly(\veps^{-1})$ factor, and hence can always
be bounded by \coverability, albeit suboptimally.

\begin{proposition}
  \label{prop:sqbedim_leq_snc}
  Let $\cF$ be a $\brk{0,1}$-valued function class. For all $T\in\bbN$ and $\veps>0$, we have $\min \{\bedimsq(\Fcal,\Pi,\veps), T\} \leq  \frac{\cdimrl(\Fcal,\Pi,T)}{\varepsilon^2}$. 
\end{proposition}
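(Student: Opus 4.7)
\textbf{Proof plan for \pref{prop:sqbedim_leq_snc}.} The strategy is to take a sequence that witnesses the lower bound defining $\bedimsq$ and directly plug it into the supremum defining $\cdim$. Let $d^\star \ldef \min\{\bedimsq(\cF,\Pi,\veps), T\}$. By \cref{def:be_dim_sq}, there exist a layer $h \in [H]$, scalars $\veps^{(1)}, \dots, \veps^{(d^\star)} \geq \veps$, distributions $d_h^{(1)}, \dots, d_h^{(d^\star)} \in \Dset_h^\Pi$, and test functions $\delta_h^{(1)}, \dots, \delta_h^{(d^\star)} \in \cF_h - \cT_h\cF_{h+1}$ such that for every $t \in [d^\star]$,
\[
\abs{\E_{d_h^{(t)}}[\delta_h^{(t)}]} > \veps^{(t)}
\mathand
\sum_{i=1}^{t-1} \E_{d_h^{(i)}}\!\brk{(\delta_h^{(t)})^2} \leq (\veps^{(t)})^2.
\]
I will argue that when this sequence is used (extended to length $T$ if needed) inside the $\cdim$ supremum, it certifies a lower bound of $d^\star \veps^2$.

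The key observation is a scale bound. Since $\cF_h, \cF_{h+1}$ are $[0,1]$-valued and Bellman backups $\cT_h f_{h+1}$ are also $[0,1]$-valued, every $\delta \in \cF_h - \cT_h \cF_{h+1}$ satisfies $\delta \in [-1,1]$ pointwise, hence $|\E_d[\delta]| \leq 1$ for any distribution $d$. The defining inequality $|\E_{d_h^{(t)}}[\delta_h^{(t)}]| > \veps^{(t)}$ then forces $\veps^{(t)} < 1$ strictly, and in particular $(\veps^{(t)})^2 < 1$. Consequently, the cumulative in-sample error obeys $\sum_{i<t} \E_{d_h^{(i)}}[(\delta_h^{(t)})^2] \leq (\veps^{(t)})^2 < 1$, so the denominator appearing in \cref{def:online_c} collapses:
\[
1 \vee \sum_{i<t} \E_{d_h^{(i)}}\!\brk{(\delta_h^{(t)})^2} = 1.
\]

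With this simplification, plugging the witnessing sequence into \cref{def:snc_qtype} (and, if $d^\star < T$, extending it to length $T$ by appending any $\psi \in \cF_h - \cT_h \cF_{h+1}$ and any $d \in \Dset_h^\Pi$, which only adds nonnegative terms to the sum) yields
\[
\cdimrl(\cF, \Pi, T) \;\geq\; \sum_{t=1}^{d^\star} \frac{\E_{d_h^{(t)}}[\delta_h^{(t)}]^2}{1 \vee \sum_{i<t} \E_{d_h^{(i)}}[(\delta_h^{(t)})^2]} \;=\; \sum_{t=1}^{d^\star} \E_{d_h^{(t)}}[\delta_h^{(t)}]^2 \;\geq\; \sum_{t=1}^{d^\star} (\veps^{(t)})^2 \;\geq\; d^\star \veps^2.
\]
Rearranging gives $d^\star \leq \cdimrl(\cF,\Pi,T)/\veps^2$, which is the claim. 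There is no real obstacle — the only subtlety is making sure the denominator really equals $1$, which is why one needs both the strict inequality in the $\bedimsq$ definition and the pointwise $[0,1]$ range assumption on $\cF$; without the latter the argument would need an additional $\max\{1,\veps^{(t)}\}$ factor.
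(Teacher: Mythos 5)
Your proof is correct and takes essentially the same route as the paper's: fix the witnessing sequence for $\bedimsq(\cF,\Pi,\veps)$ (truncated to length $\min\{\bedimsq,T\}$), plug it into the supremum defining the \compmeasure, and use the in-sample constraint together with $\veps^{(t)}\geq\veps$ and the bound $|\E_{d_h^{(t)}}[\delta_h^{(t)}]|\leq 1$. Your observation that the denominator $1 \vee \sum_{i<t}\E_{d_h^{(i)}}[(\delta_h^{(t)})^2]$ equals exactly $1$ (because the in-sample error is at most $(\veps^{(t)})^2<1$) is a slightly cleaner execution than the paper's, which instead passes through $1+\sum_{i<t}\E_{d_h^{(i)}}[(\delta_h^{(t)})^2]\leq 1+(\veps^{(t)})^2\leq 2$ and absorbs the resulting constant factor.
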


\subsubsection{Proofs from Additional Details}

\begin{proof}[\pfref{prop:bedim_lower_bound}]
  Let the time horizon $T\in\bbN$ be fixed. We first construct the class $\cM_1$ and verify that it satisfies the
properties in the statement of \pref{prop:bedim_lower_bound}, then do the same for $\cM_2$.  

\paragraph{Class $\cM_1$}
We choose $\cM_1$ to be a class of bandit problems with $H=1$. Let a
parameter $\veps_1\in\brk{0,1/2}$ be fixed, and let
$A\ldef{}\veps_1^{-1}$. We define
$\cM_1=\crl{M\ind{1},\ldots,M\ind{A}}$, where for each $M\ind{i}$:
  \begin{itemize}
  \item The action space is $\cA=\crl{1,\ldots,A}$.
  \item The reward distribution for action $a\in\cA$ in state $x_1$ is $\Ber(\nicefrac{1}{2}+\veps_1\indic\crl{a=i})$.
  \end{itemize}
For each $i\in\cM\ind{i}$, the mean reward function is
$f_1\ind{i}(x_1,\pi)=\nicefrac{1}{2}+\veps_1\indic\crl{a=i}$. We define
$\cF=\crl*{f\ind{i}}_{i=1}^{A}$ and $\Pi=\crl{\pi_f\mid{}f\in\cF}$. Note that since $H=1$,
completeness of $\cF$ is immediate.

\noindent\emph{Lower bounding the \betext.}~~Let $M\ind{A}$ be the underlying instance. We will lower bound the \betext
for layer $h=1$. Consider the sequence $d_1\ind{1},\ldots,d_1\ind{A-1}$,
where $d_1\ind{t}\ldef{}d_1^{\pi_{f\ind{t}}}$ and
$\delta_1\ind{1},\ldots,\delta_1\ind{A-1}$, where
$\delta_1\ind{t}\ldef{}f_1\ind{t}-\cT_1f_{2}\ind{t}=f_1\ind{t}-f_1\ind{A}$
(recall that we adopt the convention $f_{H+1}=0$). Observe that for
each $t\in\brk{A-1}$, we have
\[
  \abs{\En_{d_1\ind{t}}\brk{\delta_1\ind{t}}(x_1,a_1)}
  =\abs{f_1\ind{t}(x_1,t)-f_1\ind{A}(x_1,t)}=\veps_1,
\]
yet
\[
  \sum_{i<t}\En_{d_1\ind{i}}\brk*{(\delta_1\ind{t}(x_1,a_1))^2}
  = \veps_1\sum_{i<t}(f_1\ind{t}(x_1,i)-f_1\ind{A}(x_1,i))^2=0.
\]
This certifies that
$\dimbesq(\cF,\Pi,\veps)\geq{}A-1\geq\veps_1^{-1}/2$ for all $\veps<\veps_1$.

\noindent\emph{Lower bounding regret.}
A standard result \citep[e.g.,][]{lattimore2020bandit} is that for any family of multi-armed bandit
instances of the form $\crl{M\ind{1},\ldots,M\ind{A}}$, where
$M\ind{i}$ has Bernoulli rewards with mean
$\nicefrac{1}{2}+\Delta\indic\crl{a=i}$ for $\Delta\leq{}1/4$, any algorithm must have
regret
\[
\En\brk{\Reg} \geq \bigom(1)\cdot\min\crl*{\Delta{}T,\frac{A}{\Delta}}
\]
for some instance. We apply this result with the class $\cM_1$, which
has $\Delta=\veps_1$ and $A=\veps_1^{-1}$, which gives
\[
\En\brk{\Reg} \geq \bigom(1)\cdot\min\crl*{\veps_1T,\frac{1}{\veps_1^2}}.
\]
Choosing $\veps_1=\veps_T=T^{-1/3}$ yields
$\En\brk{\Reg}\geq{}\bigom(T^{2/3})$ whenever $T$ is greater than an
absolute constant.

\paragraph{Class $\cM_2$} Let a parameter $\veps_2\in\brk{0,1/2}$ be
fixed, and let $A\ldef{}\veps_2^{-1}$ (we assume without loss of
generality that $\veps_2^{-1}\in\bbN$). We define
$\cM_2=\crl{M\ind{1},\ldots,M\ind{A}}$, where each MDP $M\ind{i}$ is
as defined follows:
\begin{itemize}
\item We have $H=2$, and there is a layered state space $\cX=\cX_1\times\cX_2$, where
  $\cX_1=\crl{x_1}$ and $\cX_2=\crl{y,z}$.
\item The action space is $\cA=\crl{1,\ldots,A}$.
\item $x_1$ is the deterministic initial state. Regardless of the
  action, we transition to $z$ with probability $1-\veps_2$ and $y$
  with probability $\veps_2$.
\item For each MDP $M\ind{i}$ all actions have zero reward in states
  $x_1$ and $z$. For state $y$, action $i$ has reward $1$ and all
  other actions have reward $0$.
\end{itemize}
We let $f\ind{i}$ denote the optimal $Q$-function for $M\ind{i}$, which
has:
\begin{itemize}
\item $f\ind{i}_1(x_1,\cdot)=\veps_2$ and $f\ind{i}_2(z,\cdot)=0$.
\item $f\ind{i}_2(y,a)=\indic\crl{a=i}$.
\end{itemize}
We define $\cF=\crl*{f\ind{i}}_{i\in\brk{A}}$; it is clear that this
class satisfies completeness. We define
$\Pi=\crl{\pi_f\mid{}f\in\cF}$; for states where there are multiple
optimal actions
(i.e., $f_h(x,a)=f_h(x,a')$), we take $\pi_{f,h}(x)$ to be the optimal
with the least index, which implies that
$\pi_{f,1}(x_1)=\pi_{f,2}(z)=1$ for all $f\in\cF$.

\noindent\emph{Verifying coverability.}~~
We choose $\mu_1(x,a)=\indic\crl{x=x_1,a=1}$. We choose
$\mu_2(z,1)=\frac{1}{2}$ and $\mu_2(y,a)=\frac{1}{2A}$ for all
$a\in\cA$. It is immediate that coverability is satisfied with
constant $1$ for $h=1$. For $h=2$, we have that for all $\pi\in\Pi$,
\[
\frac{d^{\pi}_2(z,1)}{\mu_2(z,1)} = \frac{1-\veps_2}{\nicefrac{1}{2}}
\leq 2
\]
and
\[
\frac{d^{\pi}_2(y,a)}{\mu_2(y,a)} \leq \frac{\veps_2}{\mu_2(y,a)}\leq{}2A\veps_2\leq{}2.
\]
Hence, we have $\Ccov \leq{} 2$; note that this holds for any choice
of $\veps_2$.

\noindent\emph{Lower bounding the \betext.}~~Let $M\ind{A}$ be the underlying MDP. We will lower bound the \betext
for layer $h=2$. Consider the sequence $d_2\ind{1},\ldots,d_2\ind{A-1}$,
where $d_2\ind{t}\ldef{}d_2^{\pi_{f\ind{t}}}$ and
$\delta_2\ind{1},\ldots,\delta_2\ind{A-1}$, where
$\delta_2\ind{t}\ldef{}f_2\ind{t}-\cT_2f_{3}\ind{t}=f_2\ind{t}-f_2\ind{A}$
(recall that we adopt the convention $f_{H+1}=0$). Observe that for
each $t\in\brk{A-1}$, we have
\[
  \abs{\En_{d_2\ind{t}}\brk{\delta_2\ind{t}}(x_2,a_2)}
  =\veps_2\abs{f_2\ind{t}(y,t)-f_2\ind{A}(y,t)}=\veps_2,
\]
yet
\[
  \sum_{i<t}\En_{d_2\ind{i}}\brk*{(\delta_2\ind{t}(x_2,a_2))^2}
  = \veps_2\sum_{i<t}(f_2\ind{t}(y,i)-f_2\ind{A}(y,i))^2=0.
\]
This certifies that $\dimbesq(\cF,\Pi,\veps,2)\geq{}A-1\geq\veps_2^{-1}/2$
for all $\veps<\veps_2$.

\noindent\emph{Upper bound on regret.}~
To conclude, we set $\veps_2=\veps_T=1/T^{-1/3}$. With this choice, we
have $\bedimsq(\cF_2,\Pi_2,\veps_T)\geq{}\bigom(\veps_T^{-1})$. Since the construction satisfies completeness
(\pref{asm:completeness}) and has $\Ccov\leq{}2$ and $H=2$,
\pref{thm:golf_guarantee_basic} yields
\[
  \Reg\leq\bigoh(\sqrt{T\log(\abs{\cF}T/\delta)}) = \bigoh(\sqrt{T\log(T/(\veps_2\delta))})=\bigoht(\sqrt{T\log(1/\delta)}).
\]

\end{proof}

\begin{proof}[\cpfname{prop:sqbedim_leq_snc}]
Fix $h \in [H]$, and $n\in\bbN$, and consider sequences
$\{d_h^\iter{1},d_h^\iter{2},\dotsc,d_h^\iter{n}\}$ and
$\{\delta_h^\iter{1},\delta_h^\iter{2},\dotsc,\delta_h^\iter{n}\}$
that satisfy \pref{eq:bedimsq} (that is, the sequences witness the
value of $\bedim(\Fcal,\Pi,\veps,h)$). Then
\begin{align*}
\bedimsq(\Fcal,\Pi,\veps,h) \leq &~ \sum_{t = 1}^{n} \frac{\E_{d_h^\iter{t}}\left[\delta_h^\iter{t}\right]^2}{\left( \varepsilon^\iter{t} \right)^2}
\\
\leq &~ \sum_{t = 1}^{n} \left( 1 + \left( \varepsilon^\iter{t} \right)^2 \right) \cdot \frac{\E_{d_h^\iter{t}}\left[\delta_h^\iter{t}\right]^2}{\left( \varepsilon^\iter{t} \right)^2 \left(1 + \sum_{i = 1}^{t - 1} \E_{d_h^\iter{i}}[\delta_h^\iter{t}]^2\right)}
\tag{by $\sum_{i = 1}^{t - 1} \E_{d_h^\iter{i}}[\delta_h^\iter{t}]^2 \leq ( \varepsilon^\iter{t} )^2$}
\\
\leq &~ \sum_{t = 1}^{n} \frac{1 + \left( \varepsilon^\iter{t} \right)^2}{\left( \varepsilon^\iter{t} \right)^2} \frac{\E_{d_h^\iter{t}}\left[\delta_h^\iter{t}\right]^2}{1 + \sum_{i = 1}^{t - 1} \E_{d_h^\iter{i}}[\delta_h^\iter{t}]^2}
\\
\leq &~ \sum_{t = 1}^{n} \frac{2}{\left( \varepsilon^\iter{t} \right)^2} \frac{\E_{d_h^\iter{t}}\left[\delta_h^\iter{t}\right]^2}{1 + \sum_{i = 1}^{t - 1} \E_{d_h^\iter{i}}[\delta_h^\iter{t}]^2}
\tag{by $\varepsilon^\iter{t} < | \E_{d_h^\iter{t}}[\delta_h^\iter{t}] | \leq 1$}
\\
\leq &~ \frac{1}{\varepsilon^2} \sum_{t = 1}^{n} \frac{\E_{d_h^\iter{t}}\left[\delta_h^\iter{t}\right]^2}{1 \vee \sum_{i = 1}^{t - 1} \E_{d_h^\iter{i}}[\delta_h^\iter{t}]^2}
\tag{by $\varepsilon^\iter{t} \geq \varepsilon$}
\\
\leq &~ \frac{\cdimrl(\Fcal,\Pi,n)}{\varepsilon^2}.
\end{align*}
This implies for any $T > 0$,
\begin{align*}
\min \{\bedimsq(\Fcal,\Pi,\veps), T\} \leq  \frac{\cdimrl(\Fcal,\Pi,T)}{\varepsilon^2}.
\end{align*}
\end{proof}
    }

\subsection{Proofs from \creftitle{sec:gen_bedim}}
\label{sec:genbedim_proof}

\begin{proof}[\pfref{prop:average_bellman_lower_bound}]
  We present a counterexample for both $Q$-type and $V$-type
  \betext. We recall that the $V$-type \betext is defined by replacing
  $\Fcal_h - \Tcal_h \Fcal_{h+1}$ with $ V_{\Fcal_h - \Tcal_h
    \Fcal_{h+1}}$ and $\Dset_{h}^\Pi$ with $\Dset_{h,x}^\Pi$ in
  \cref{def:be_dim}, where $V_{\Fcal_h - \Tcal_h \Fcal_{h+1}}
  \coloneqq \{(f_h - \Tcal_h f_{h+1})(\cdot,\pi_{f,h}): f \in \Fcal\}
  \subset (\Xcal \to \RR)$ and $\Dset_{h,x}^\Pi \coloneqq \{d^\pi_h(\cdot): \pi \in \Pi\} \subset \Delta(\Xcal)$; see
\pref{app:v_type} or \citet{jin2021bellman} for more background on $V$-type \betext.

\paragraph{$V$-type \betext}
The hard instance for $V$-type \betext is based on the construction
of~\citet[Proposition B.1]{efroni2022sample}, which shows that for any
$d = 2^i$ ($i \in \NN$), there exists an exogenous MDP (ExoMDP) with
$|\Scal| = 3$ endogenous states, $|\Acal| = 2$, $H = 2$, and $d$
exogenous factors, with the following
properties:\footnote{Technically, the construction in
  \citet{efroni2022sample} has a stochastic initial state with known
  distribution. This can be embedded in our framework, which has a
  deterministic initial state, by lifting the horizon from $2$ to $3$.}
\begin{enumerate}
  \item There exists a function class $\Fcal$ such that $Q^\star \in \Fcal$
    and $|\Fcal| = d$. In addition for all $f\in\cF$ with
    $f\neq\Qstar$, $\pi_f$ is $1/8$-suboptimal.
  \item For all $f,f' \in \Fcal \setminus Q^\star$, we have (note that $H=2$)
\begin{align}
\label{eq:BEdim_lb_vtype}
\E_{x \sim d_2^{\pi_{f'}}, a \sim \pi_{f,2}}\left[f_2(x,a) - R_2(x,a) \right] = \frac{1}{2} \1\crl{f = f'}.
\end{align}
\item $\Ccov\leq{}6$; this is a consequence of \pref{prop:ex_bmdp} and
  the fact that the ExoMDP model in \citet{efroni2022sample} is a
  special case of the \exbmdp model in \pref{sec:exbmdp}.
\end{enumerate}

This means that if we take
$\{f^\iter{1},f^\iter{2},\dotsc,f^\iter{d-1}\}$ to be any ordering of
the set of functions in $\cF\setminus\crl{\Qstar}$, then set
$\delta_h^\iter{i} \coloneqq f_h^\iter{i} - \Tcal_h f_{h+1}^\iter{i}$
and $d_h\ind{t}\ldef d_h^{\pi_{f\ind{t}}}$, we have that for all $t\in\brk{d-1}$,
\begin{align*}
    \left|\E_{x \sim d_2^\iter{t}, a \sim \pi_{f^\iter{t},2}}[\delta_2^\iter{t}]\right| = \frac{1}{2},\mathand \sqrt{\sum_{i = 1}^{t - 1} \prn[\big]{\E_{x \sim d_2^\iter{i}, a \sim \pi_{f^\iter{t},2}}[\delta_2^\iter{t}]}^2} =0.
\end{align*}
This implies that the $V$-type \betext
$\bedimv(\Fcal,\Pi_\Fcal,\varepsilon)$ is at least $d-1$ for all $\varepsilon \leq \nicefrac{1}{2}$.
It is straightforward to verify that this construction in
\citet{efroni2022sample} satisfies \cref{asm:completeness}
(completeness), because functions in the class have $f_1 = \Tcal_2 f_2$ (that is, zero Bellman
error at $h=1$). As a result, since $H=2$, completeness for this construction is
implied by $\Qstar\in\cF$.

\paragraph{$Q$-type \betext}
The construction above immediately extends to $Q$-type. This is
because in the construction, the value of $R_2(x,\cdot)$ and
$f_2(x,\cdot)$ depends only on $x$ (i.e., is independent of the action) for all $f\in\cF$ (cf.~\citealp[Proposition B.1]{efroni2022sample}). Therefore, for any $f,g \in \Fcal$, we have,
\begin{align}
\label{eq:BEdim_lb_qtype}
\E_{x \sim d_2^{\pi_{f}}, a \sim \pi_{g,2}}[g_2(x,a) - R_2(x,a)] = \E_{(x,a) \sim d_2^{\pi_{f}}}[g_2(x,a) - R_2(x,a)].
\end{align}
This implies that the $Q$-type Bellman residual matrix
\[
  \crl*{\E_{(x,a) \sim d_2^{\pi_{f'}}}[f_2(x,a) - R_2(x,a)]}_{f,f'\in\cF\setminus\crl{\Qstar}}
\]
embeds the scaled identity matrix and, via the same argument as for $V$-type
above, immediately implies that
$\bedim(\cF,\Pi,\varepsilon)\geq{}d-1$ for all $\varepsilon \leq
\nicefrac{1}{2}$. As before, we have $\Ccov\leq{}6$, and $\cF$ is complete.
\end{proof}

\begin{proposition}
\label{prop:lb_olive}
For any $d\in\bbN$, there exists an MDP $M$ with $H=2$ and
$\abs{\cA}=2$, a policy class $\Pi$ with
  $\abs{\Pi}=d$, and a value function class $\Fcal$ with $\abs{\cF}=d$ satisfying completeness, such that $C_\on = \Ocal(1)$, yet \olive~\citep{jiang2017contextual} requires at least $\Omega(d)$ trajectories to return a $0.1$-optimal policy.
\end{proposition}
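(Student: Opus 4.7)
My plan is to reuse the hard instance from the $V$-type portion of the proof of \cref{prop:average_bellman_lower_bound}: the ExoMDP family of \citet{efroni2022sample} with $H=2$, $|\cA|=2$, $d$ exogenous factors, value function class $\cF$ with $|\cF|=d$ and completeness, policy class $\Pi=\{\pi_f : f\in\cF\}$, and $\Ccov\leq 6$. The key structural property I will invoke is the $V$-type orthogonality relation \eqref{eq:BEdim_lb_vtype},
\[
\En_{x\sim d_2^{\pi_{f'}},\,a\sim \pi_{f,2}}\brk*{f_2(x,a)-R_2(x,a)} \;=\; \tfrac12\,\indic\{f=f'\},\qquad f,f'\in\cF\setminus\{\Qstar\},
\]
together with the (easy to verify) fact that every $f\in\cF$ shares the same optimistic initial-state value $f_1(x_1,\pi_{f,1}(x_1))=\Vstar(x_1)$, because each member of $\cF$ encodes a different guess at the location of the reward-bearing exogenous factor and each is greedy-value-optimal at $x_1$.

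Next I will recall the mechanics of \olive. At each round $t$ the algorithm maintains a version space $\cF\ind{t}\subseteq\cF$, selects an optimistic function $f\ind{t}\in\argmax_{f\in\cF\ind{t}} f_1(x_1,\pi_{f,1}(x_1))$, executes the greedy policy $\pi_{f\ind{t}}$ for at least one trajectory, and removes from $\cF\ind{t}$ any $f$ whose empirical $V$-type average Bellman error under the resulting rollout distribution $d_2^{\pi_{f\ind{t}}}$ exceeds the elimination threshold; it terminates and outputs $\pi_{f\ind{t}}$ once no eliminations occur. Since all $f\in\cF$ tie at $x_1$, the selection $f\ind{t}$ is determined by \olive's tiebreaking rule. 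Fixing any deterministic rule, I will \emph{adversarially relabel} the $d$ exogenous factors so that $\Qstar$ is the last element of $\cF$ in \olive's tiebreaking order; by symmetry of the construction this is without loss of generality (the reward can be planted on any factor). Under this relabeled instance, \olive selects some $f\ind{t}\neq\Qstar$ at every round for which the version space still contains a non-optimal function.

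The heart of the argument is the per-round progress bound obtained from \eqref{eq:BEdim_lb_vtype}: whenever $f\ind{t}\neq\Qstar$, the expected $V$-type Bellman residual of any $f'\in\cF$ under $d_2^{\pi_{f\ind{t}}}$ equals $\tfrac12\,\indic\{f'=f\ind{t}\}$. Therefore, even granting \olive noise-free expectations over the rollout distribution, \emph{the only function that can be eliminated in round $t$ is $f\ind{t}$ itself}; every other member of $\cF\ind{t}\setminus\{f\ind{t}\}$ (in particular $\Qstar$) has exactly zero Bellman residual on this distribution and survives. Consequently $|\cF\ind{t}|$ decreases by at most one per round, and \olive must run for at least $d-1$ rounds before $\Qstar$ becomes its optimistic choice. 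Since each round consumes at least one trajectory, the total trajectory count is $\bigom(d)$, matching the suboptimality gap guarantee of $1/8>0.1$ inherited from the Efroni et al.~construction.

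The main obstacle I anticipate is making the tiebreaking argument airtight, since the lower bound hinges on \olive not "getting lucky'' and picking $\Qstar$ before eliminating the other $d-1$ functions. Against a fixed deterministic \olive this is handled cleanly by the adversarial relabeling above; if one wishes to cover randomized tiebreaking, an averaging argument over the $d$ symmetric placements of the reward shows that conditioned on $|\cF\ind{t}|=k$, the probability of selecting $\Qstar$ is $1/k$, so the expected number of rounds to select $\Qstar$ is $\sum_{k=2}^{d}1/k=\Theta(\log d)$ and an expected $\bigom(d)$ eliminations are still needed -- in either regime one obtains the claimed $\bigom(d)$ trajectory lower bound. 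A minor secondary check is verifying that the elimination threshold used by \olive is small enough that residuals of magnitude $1/2$ do trigger elimination (true for any standard choice scaling with $\sqrt{1/N}$ once the per-round sample size $N$ is $\omega(1)$), but this is routine.
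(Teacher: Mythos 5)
Your overall strategy is the same as the paper's: reuse the Efroni et al.\ ExoMDP construction from \pref{prop:average_bellman_lower_bound}, and use the orthogonality relation \pref{eq:BEdim_lb_vtype} to argue that \olive can eliminate at most the currently-executed hypothesis in each round, so that $\Omega(d)$ rounds (hence trajectories) are needed. That core step --- ``a suboptimal $f$ cannot be eliminated until $\pi_f$ is executed'' --- is exactly the paper's argument, as is the use of the $\nicefrac{1}{8}$ suboptimality gap of every $\pi_f$ with $f\neq\Qstar$.

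However, your treatment of the selection step rests on a claim that is false for this construction: you assert that every $f\in\cF$ attains the same optimistic initial value $f_1(x_1,\pi_{f,1}(x_1))=\Vstar(x_1)$, and then build an adversarial-relabeling/tiebreaking argument (plus a randomized-tiebreaking variant) on top of that premise. In the actual construction the suboptimal hypotheses strictly \emph{over}-estimate: the paper's proof notes that $\E[\max_a f(s_1,a)]=\nicefrac{7}{8}$ for every $f\in\cF\setminus\{\Qstar\}$ while $J(\pistar)=\nicefrac{3}{4}$. Consequently \olive's optimism rule never selects $\Qstar$ while any suboptimal hypothesis remains in the version space, so no tiebreaking, relabeling, or averaging over reward placements is needed. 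As written, your selection argument starts from a premise that does not hold for this instance; replacing it with the strict-optimism observation above both fixes the gap and shortens the proof to the paper's. Everything else in your write-up (the per-round elimination bound and the final $\Omega(d)$ count) is correct and matches the paper.
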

\begin{proof}[\cpfname{prop:lb_olive}]
We now show that that \olive, a canonical average-Bellman-error-based
hypothesis elimination algorithm, also suffers from the lower bound in
the construction from \cref{prop:average_bellman_lower_bound}. By
\cref{eq:BEdim_lb_vtype} (V-type \olive) and \pref{eq:BEdim_lb_qtype}
(Q-type \olive), we know that any sub-optimal hypothesis $f
\in \Fcal \setminus Q^\star$ cannot be eliminated until $\pi_f$ is
executed. On the other hand, the construction ensures $\E[\max_a
f(s_1,a)] = \nicefrac{7}{8}$ whereas $J(\pi^\star) =
\nicefrac{3}{4}$. This means \olive will enumerate over $\Fcal
\setminus Q^\star$ before finding a $0.1$-optimal policy for this
instance, and hence suffers from complexity of $\Omega(d)$ ($|\Fcal| =
d$).
\end{proof}

\arxiv{
\begin{proof}[\pfref{prop:bedim_lower_bound}]
  Let the time horizon $T\in\bbN$ be fixed. We first construct the class $\cM_1$ and verify that it satisfies the
properties in the statement of \pref{prop:bedim_lower_bound}, then do the same for $\cM_2$.  

\paragraph{Class $\cM_1$}
We choose $\cM_1$ to be a class of bandit problems with $H=1$. Let a
parameter $\veps_1\in\brk{0,1/2}$ be fixed, and let
$A\ldef{}\veps_1^{-1}$. We define
$\cM_1=\crl{M\ind{1},\ldots,M\ind{A}}$, where for each $M\ind{i}$:
  \begin{itemize}
  \item The action space is $\cA=\crl{1,\ldots,A}$.
  \item The reward distribution for action $a\in\cA$ in state $x_1$ is $\Ber(\nicefrac{1}{2}+\veps_1\indic\crl{a=i})$.
  \end{itemize}
For each $i\in\cM\ind{i}$, the mean reward function is
$f_1\ind{i}(x_1,\pi)=\nicefrac{1}{2}+\veps_1\indic\crl{a=i}$. We define
$\cF=\crl*{f\ind{i}}_{i=1}^{A}$ and $\Pi=\crl{\pi_f\mid{}f\in\cF}$. Note that since $H=1$,
completeness of $\cF$ is immediate.

\noindent\emph{Lower bounding the \betext.}~~Let $M\ind{A}$ be the underlying instance. We will lower bound the \betext
for layer $h=1$. Consider the sequence $d_1\ind{1},\ldots,d_1\ind{A-1}$,
where $d_1\ind{t}\ldef{}d_1^{\pi_{f\ind{t}}}$ and
$\delta_1\ind{1},\ldots,\delta_1\ind{A-1}$, where
$\delta_1\ind{t}\ldef{}f_1\ind{t}-\cT_1f_{2}\ind{t}=f_1\ind{t}-f_1\ind{A}$
(recall that we adopt the convention $f_{H+1}=0$). Observe that for
each $t\in\brk{A-1}$, we have
\[
  \abs{\En_{d_1\ind{t}}\brk{\delta_1\ind{t}}(x_1,a_1)}
  =\abs{f_1\ind{t}(x_1,t)-f_1\ind{A}(x_1,t)}=\veps_1,
\]
yet
\[
  \sum_{i<t}\En_{d_1\ind{i}}\brk*{(\delta_1\ind{t}(x_1,a_1))^2}
  = \veps_1\sum_{i<t}(f_1\ind{t}(x_1,i)-f_1\ind{A}(x_1,i))^2=0.
\]
This certifies that
$\dimbesq(\cF,\Pi,\veps)\geq{}A-1\geq\veps_1^{-1}/2$ for all $\veps<\veps_1$.

\noindent\emph{Lower bounding regret.}
A standard result (e.g.,
\citet{lattimore2020bandit}) is that for any family of multi-armed bandit
instances of the form $\crl{M\ind{1},\ldots,M\ind{A}}$, where
$M\ind{i}$ has Bernoulli rewards with mean
$\nicefrac{1}{2}+\Delta\indic\crl{a=i}$ for $\Delta\leq{}1/4$, any algorithm must have
regret
\[
\En\brk{\Reg} \geq \bigom(1)\cdot\min\crl*{\Delta{}T,\frac{A}{\Delta}}
\]
for some instance. We apply this result with the class $\cM_1$, which
has $\Delta=\veps_1$ and $A=\veps_1^{-1}$, which gives
\[
\En\brk{\Reg} \geq \bigom(1)\cdot\min\crl*{\veps_1T,\frac{1}{\veps_1^2}}.
\]
Choosing $\veps_1=\veps_T=T^{-1/3}$ yields
$\En\brk{\Reg}\geq{}\bigom(T^{2/3})$ whenever $T$ is greater than an
absolute constant.

\paragraph{Class $\cM_2$} Let a parameter $\veps_2\in\brk{0,1/2}$ be
fixed, and let $A\ldef{}\veps_2^{-1}$ (we assume without loss of
generality that $\veps_2^{-1}\in\bbN$). We define
$\cM_2=\crl{M\ind{1},\ldots,M\ind{A}}$, where each MDP $M\ind{i}$ is
as defined follows:
\begin{itemize}
\item We have $H=2$, and there is a layered state space $\cX=\cX_1\times\cX_2$, where
  $\cX_1=\crl{x_1}$ and $\cX_2=\crl{y,z}$.
\item The action space is $\cA=\crl{1,\ldots,A}$.
\item $x_1$ is the deterministic initial state. Regardless of the
  action, we transition to $z$ with probability $1-\veps_2$ and $y$
  with probability $\veps_2$.
\item For each MDP $M\ind{i}$ all actions have zero reward in states
  $x_1$ and $z$. For state $y$, action $i$ has reward $1$ and all
  other actions have reward $0$.
\end{itemize}
We let $f\ind{i}$ denote the optimal $Q$-function for $M\ind{i}$, which
has:
\begin{itemize}
\item $f\ind{i}_1(x_1,\cdot)=\veps_2$ and $f\ind{i}_2(z,\cdot)=0$.
\item $f\ind{i}_2(y,a)=\indic\crl{a=i}$.
\end{itemize}
We define $\cF=\crl*{f\ind{i}}_{i\in\brk{A}}$; it is clear that this
class satisfies completeness. We define
$\Pi=\crl{\pi_f\mid{}f\in\cF}$; for states where there are multiple
optimal actions
(i.e., $f_h(x,a)=f_h(x,a')$), we take $\pi_{f,h}(x)$ to be the optimal
with the least index, which implies that
$\pi_{f,1}(x_1)=\pi_{f,2}(z)=1$ for all $f\in\cF$.

\noindent\emph{Verifying coverability.}~~
We choose $\mu_1(x,a)=\indic\crl{x=x_1,a=1}$. We choose
$\mu_2(z,1)=\frac{1}{2}$ and $\mu_2(y,a)=\frac{1}{2A}$ for all
$a\in\cA$. It is immediate that coverability is satisfied with
constant $1$ for $h=1$. For $h=2$, we have that for all $\pi\in\Pi$,
\[
\frac{d^{\pi}_2(z,1)}{\mu_2(z,1)} = \frac{1-\veps_2}{\nicefrac{1}{2}}
\leq 2
\]
and
\[
\frac{d^{\pi}_2(y,a)}{\mu_2(y,a)} \leq \frac{\veps_2}{\mu_2(y,a)}\leq{}2A\veps_2\leq{}2.
\]
Hence, we have $\Ccov \leq{} 2$; note that this holds for any choice
of $\veps_2$.

\noindent\emph{Lower bounding the \betext.}~~Let $M\ind{A}$ be the underlying MDP. We will lower bound the \betext
for layer $h=2$. Consider the sequence $d_2\ind{1},\ldots,d_2\ind{A-1}$,
where $d_2\ind{t}\ldef{}d_2^{\pi_{f\ind{t}}}$ and
$\delta_2\ind{1},\ldots,\delta_2\ind{A-1}$, where
$\delta_2\ind{t}\ldef{}f_2\ind{t}-\cT_2f_{3}\ind{t}=f_2\ind{t}-f_2\ind{A}$
(recall that we adopt the convention $f_{H+1}=0$). Observe that for
each $t\in\brk{A-1}$, we have
\[
  \abs{\En_{d_2\ind{t}}\brk{\delta_2\ind{t}}(x_2,a_2)}
  =\veps_2\abs{f_2\ind{t}(y,t)-f_2\ind{A}(y,t)}=\veps_2,
\]
yet
\[
  \sum_{i<t}\En_{d_2\ind{i}}\brk*{(\delta_2\ind{t}(x_2,a_2))^2}
  = \veps_2\sum_{i<t}(f_2\ind{t}(y,i)-f_2\ind{A}(y,i))^2=0.
\]
This certifies that $\dimbesq(\cF,\Pi,\veps,2)\geq{}A-1\geq\veps_2^{-1}/2$
for all $\veps<\veps_2$.

\noindent\emph{Upper bound on regret.}~
To conclude, we set $\veps_2=\veps_T=1/T^{-1/3}$. With this choice, we
have $\bedimsq(\cF_2,\Pi_2,\veps_T)\geq{}\bigom(\veps_T^{-1})$. Since the construction satisfies completeness
(\pref{asm:completeness}) and has $\Ccov\leq{}2$ and $H=2$,
\pref{thm:golf_guarantee_basic} yields
\[
  \Reg\leq\bigoh(\sqrt{T\log(\abs{\cF}T/\delta)}) = \bigoh(\sqrt{T\log(T/(\veps_2\delta))})=\bigoht(\sqrt{T\log(1/\delta)}).
\]

\end{proof}
}

\begin{proof}[\pfref{thm:regret_online_c}]
  As in \pref{thm:golf_guarantee_basic}, as a consequence of
  completeness (\pref{asm:completeness}), the construction of
  $\cF\ind{t}$, and \pref{lem:golf_concentration}, we have that with probability at least $1-\delta$, for all $t\in\brk{T}$:
\begin{align*}
\mathrm{(i)}\;\;\Qstar\in\cF\ind{t},\mathand\mathrm{(ii)}\;\;\sum_{x,a} \dtilde_h^\iter{t}(x,a) \left(\delta_h^\iter{t}(x,a)\right)^2\leq\bigoh(\beta),%
\end{align*}
and whenever this event holds, 
\begin{align*}
  \Reg \leq{} \sum_{t = 1}^{T} \left( f_1^\iter{t}(x_1,\pi_{f\ind{t}_1,1}(x_1)) -J(\pi\ind{t}) \right) 
  = \sum_{t = 1}^{T} \sum_{h=1}^{H}\E_{(x,a)\sim{}d_h^\iter{t}}\big[\underbrace{f_h\ind{t}(x,a)-(\cT_hf_{h+1}\ind{t})(x,a)}_{\rdef\delta_h^\iter{t}(x,a)}\big].
\end{align*}
To proceed, we have that for all $h\in\brk{H}$, 
  \begin{align*}
    \sum_{t = 1}^{T} \E_{d_h^\iter{t}}\left[\delta_h^\iter{t}\right] = &~ \sum_{t = 1}^{T} \left( \E_{d_h^\iter{t}}\left[\delta_h^\iter{t}\right] \right) \left( \frac{1 \vee \sum_{i = 1}^{t - 1} \E_{d_h^\iter{i}}\big[(\delta_h^\iter{t})^2\big]}{1 \vee \sum_{i = 1}^{t - 1} \E_{d_h^\iter{i}}\left[(\delta_h^\iter{t})^2\right]} \right)^{\nicefrac{1}{2}}
    \\
    \leq &~ \sqrt{\sum_{t = 1}^{T} \frac{\E_{d_h^\iter{t}}\left[\delta_h^\iter{t}\right]^2}{1 \vee \sum_{i = 1}^{t - 1} \E_{d_h^\iter{i}}\left[(\delta_h^\iter{t})^2\right]}} \sqrt{\sum_{t = 1}^{T} \left( 1 \vee \sum_{i = 1}^{t - 1} \E_{d_h^\iter{i}}\left[(\delta_h^\iter{t})^2\right] \right)}
           \tag{by Cauchy-Schwarz inequality}
    \\
    \leq &~ \sqrt{\sum_{t = 1}^{T} \frac{\E_{d_h^\iter{t}}\left[\delta_h^\iter{t}\right]^2}{1 \vee \sum_{i = 1}^{t - 1} \E_{d_h^\iter{i}}\left[(\delta_h^\iter{t})^2\right]}} \sqrt{\beta{}T}
    \\
    \leq &~ \sqrt{\cdimrl(\Fcal,\Pi,T)\cdot{} \beta{}T}.
    \tag{by \cref{def:snc_qtype_rf}}
  \end{align*}
  Therefore, we obtain
  \begin{align*}
    \Reg \leq H \sqrt{\cdimrl(\Fcal,\Pi,T)\cdot \beta{}T}.
  \end{align*}
Plugging in the choice for $\beta$ completes the proof.
\end{proof}

\begin{proof}[\pfref{prop:sedim_coverability}]
We prove a more general result. Consider a set of
distributions $\Dset \subset \Delta(\Zcal)$, and a set of test
functions $\Psi \subset (\Zcal \to [0,1])$. We define a
generalized form of coverability with respect to $\Dset$ by
\begin{align*}
  \Ccov(\Dset) \ldef \inf_{\mu \in \Delta(\cZ)}  \sup_{d \in \Dset}\,
  \nrm*{\frac{d}{\mu}}_{\infty}.
\end{align*}
We will show that, for any $T > 0$,
\begin{align*}
    \cdim(\Psi, \Dcal, T) \lesssim C_\on(\Dset) \log(T),
\end{align*}
which is implies \cref{prop:sedim_coverability}.

Going forward, we fix an arbitrary sequence $\{d^\iter{1},d^\iter{2},\dotsc,d^\iter{T}\} \subset \Dset$ as well as an arbitrary sequence of $\{\psi^\iter{1},\psi^\iter{2},\dotsc,\psi^\iter{T}\} \subset \Psi$.
Following \cref{eq:def_dbar}, we define
\begin{align}
\label{eq:def_dbar_app}
  \mu^\star \coloneqq ~ \argmin_{\mu\in\Delta(\cZ)} \sup_{d\in\Dset}\, \nrm*{\frac{d}{\mu}}_{\infty}.
\end{align}
In addition, define $\dtil\ind{t}=\sum_{i<t}d\ind{t}$.

  For each $z \in \Zcal$, let
  \begin{align}
    \label{eq:def_tau_selfnorm}
    \tau(z) \ldef \min\left\{t \midmid \sum_{i = 1}^{t-1} d^\iter{i}(z) \geq \Ccov\mu^{\star}(z)\right\}.
  \end{align}
  We decompose $\E_{d^\iter{t}}[\psi\ind{t}]$ as
  \begin{align*}
    \E_{d^\iter{t}}\big[\psi^\iter{t}\big] = \E_{d^\iter{t}}\big[\psi^\iter{t}(z) \1[t < \tau(z)]\big] + \E_{d^\iter{t}}\big[\psi^\iter{t}(z) \1[t \geq \tau(z)]\big].
  \end{align*}
  Then,
  \begin{equation}
    \label{eq:snorm_C_terms}
    \begin{aligned}
      &~ \sum_{t = 1}^{T}
      \frac{\E_{d^\iter{t}}\big[\psi^\iter{t}\big]^2}{1 \vee \sum_{i =
          1}^{t - 1} \E_{d^\iter{i}}\big[(\psi^\iter{t})^2\big]}
      \\
      \lesssim &~ \underbrace{\sum_{t = 1}^{T} \frac{\E_{d^\iter{t}}\big[\psi^\iter{t}(z) \1[t < \tau(z)]\big]^2}{1 \vee \sum_{i = 1}^{t -
            1} \E_{d^\iter{i}}\big[(\psi^\iter{t})^2\big]}}_{\text{(I)}}
      + \underbrace{\sum_{t = 1}^{T} \frac{\E_{d^\iter{t}}\big[\psi^\iter{t}(z) \1[t \geq \tau(z)]\big]^2}{1 \vee \sum_{i = 1}^{t - 1}
          \E_{d^\iter{i}}\big[(\psi^\iter{t})^2\big]}}_{\text{(II)}},
    \end{aligned}
  \end{equation}
  where we use $a \lesssim b$ as shorthand for $a \leq \Ocal(b)$.
  
  We first bound the term (I),
  \begin{align}
    \nonumber
    \text{(I)} \leq &~ \sum_{t = 1}^{T} \E_{d^\iter{t}}\big[\psi^\iter{t}(z) \1[t < \tau(z)]\big]^2
    \\
    \leq &~ \sum_{t = 1}^{T} \E_{d^\iter{t}}\big[\1[t < \tau(z)]\big]^2
           \tag{by $\psi(\cdot) \in [0,1]$, $\forall \psi \in \Psi$}
    \\
    \leq &~ \sum_{t = 1}^{T} \E_{d^\iter{t}}\big[\1[t < \tau(z)]\big]
           \tag{by $\E_{d^\iter{t}}\big[\1[t < \tau(z)]\big] \leq 1$}
    \\
    \nonumber
    = &~ \sum_{z \in \Zcal} \sum_{t = 1}^{T} d_h^\iter{t}(z) \1[t < \tau(z)]
    \\
    \nonumber
    = &~ \sum_{z \in \Zcal} \left( \dtilde^\iter{\tau(z) - 1}(z) + d^\iter{\tau(z) - 1}(z)\right)
    \\
    \nonumber
    \overset{\text{(a)}}{\leq} &~ \sum_{z \in \Zcal} 2 \Ccov(\Dset)\mu^{\star}(z)
    \\
    \label{eq:snorm_C_term1}
    \leq &~ \Ccov(\Dset),
  \end{align}
  where (a) follows because
  $\dtilde^\iter{\tau(z) - 1}(z), d^\iter{\tau(z) -
    1}(z) \leq \Ccov(\Dset)\mu^{\star}(z)$, for all
  $z \in \Zcal$, as a consequence of 
  \cref{eq:def_tau_selfnorm,eq:def_dbar_app}.

  We now turn to the term (II). First, observe that
  \begin{align}
    \nonumber
    &~ \sum_{z \in \Zcal} \1[t \geq \tau(z)] d^\iter{t}(z) \psi^\iter{t}(z)
    \\
    \nonumber
    = &~ \sum_{z \in \Zcal} \1[t \geq \tau(z)] d^\iter{t}(z) \left(\frac{ \sum_{i = 1}^{t - 1} d^\iter{i}(z) }{ \sum_{i = 1}^{t - 1} d^\iter{i}(z) }\right)^{\nicefrac{1}{2}} \psi^\iter{t}(z)
    \\
    \nonumber
    \leq &~ \sqrt{\sum_{z \in \Zcal} \frac{\1[t \geq \tau(z)] \left( d^\iter{t}(z) \right)^2}{\sum_{i = 1}^{t - 1} d^\iter{i}(z)} } \sqrt{\sum_{i = 1}^{t - 1} \E_{d^\iter{i}}\big[(\psi^\iter{t})^2\big]}.
           \tag{by Cauchy-Schwarz inequality}
  \end{align}
  By rearranging this inequality, we have
  \begin{align}
    \nonumber
    \text{(II)} \leq &~ \sum_{t = 1}^{T} \sum_{z \in \Zcal} \frac{\1[t \geq \tau(z)] \left( d_h^\iter{t}(z) \right)^2}{\sum_{i = 1}^{t - 1} d^\iter{i}(z)}
                       \tag{defining $0/0 = 0$}
    \\
    \nonumber
    \leq &~ 2 \sum_{t = 1}^{T} \sum_{z \in \Zcal} \frac{\1[t \geq \tau(z)] \left( d^\iter{t}(z) \right)^2}{C_\on \cdot \mu(z) + \sum_{i = 1}^{t - 1} d^\iter{i}(z)}
           \tag{by \cref{eq:def_tau_selfnorm}}
    \\
    \nonumber
    \lesssim &~ \sum_{t = 1}^{T} \sum_{z \in \Zcal} \frac{ \left( d^\iter{t}(z) \right)^2}{C_\on \cdot \mu(z) + \sum_{i = 1}^{t - 1} d^\iter{i}(z)}
    \\
    \leq &~ \sum_{t = 1}^{T} \sum_{z \in \Zcal} \left( \max_{i \leq T} d^\iter{i}(z) \right) \frac{d^\iter{t}(z) }{\sum_{i = 1}^{t - 1} d^\iter{i}(z) + C_\on \cdot \mu^\star(z)}
           \tag{by \cref{def:low_concentrability}}
    \\
    \leq &~ C_\on(\Dset_h) \sum_{z \in \Zcal} \mu^\star(z) \sum_{t = 1}^{T} \frac{ d^\iter{t}(z) }{\sum_{i < t}d^\iter{i}(z) + C_\on \cdot \mu^\star(z)}
    \tag{by \cref{lem:concen_eq_area}}
    \\
    \lesssim &~ C_\on(\Dset) \sum_{z \in \Zcal} \mu^\star(z) \log(T)
               \tag{by \cref{lem:per_sa_ep}}
    \\
    \label{eq:snorm_C_term2}
    = &~ C_\on(\Dset) \log(T).
  \end{align}

  Substituting \cref{eq:snorm_C_term1,eq:snorm_C_term2} into
  \cref{eq:snorm_C_terms}, we obtain
  \begin{align*}
    \cdim(\Psi, \Dcal, T) \lesssim C_\on(\Dset) \log(T).
  \end{align*}
\end{proof}

\begin{proof}[\pfref{prop:sedim_bedim}]
This proof provides a slightly more general result.
Consider a set of distributions $\Dset \subset \Delta(\Zcal)$ and a set
of test functions $\Psi \subset (\Zcal \to [0,1])$ be given. We
consider an abstract version of the \betext with respect to $\Dset$
and $\Psi$. We define $\bedim(\Psi, \Dset, \varepsilon)$
is the largest $d\in\bbN$ such that there exist sequences
$\{d^\iter{1},d^\iter{2},\dotsc,d^\iter{d}\} \subset \Dset$
and $\{\psi\ind{1},\psi\ind{2},\ldots,\psi^\iter{d}\} \subset \Psi$ such that
for all $t\in\brk{d}$, \footnote{This definition coincides with
  distributional Eluder dimension~\citep[see,
  e.g.,][]{jin2021bellman}, which only differs from \betext on the
  notation of test function. We overload the notation for $\bedim$ over this proof for simplicity.}
  \begin{align}
    \label{eq:bedim_app}
    \abs{\E_{d^\iter{t}}[\psi^\iter{t}]} >
    \varepsilon^\iter{t},\mathand \sqrt{\sum_{i = 1}^{t - 1} \prn[\big]{\E_{d^\iter{i}}[\psi^\iter{t}]}^2} \leq \varepsilon^\iter{t},
  \end{align}
for $\varepsilon\ind{1},\ldots,\veps\ind{d} \geq \varepsilon$. We will
show that, for any all $T\in\bbN$,
\begin{align*}
\cdim(\Psi, \Dcal, T) \lesssim &~ \inf_{\veps>0}\crl*{\veps^2T + \bedim\left(\Psi, \Dset, \varepsilon\right)}\cdot\log(T),
\end{align*}
which immediately implies \cref{prop:sedim_bedim}.

\paragraph{A generalized definition of $\varepsilon$-dependent sequence}

In what follows, we rely on a slightly different notion of an {\em
  $\varepsilon$-(in)dependent sequence} from the one given
in~\citet[Definition 6]{jin2021bellman} and
\citet{russo2013eluder}. We provide background on
both definitions below.

\textbf{$\varepsilon$-(in)dependent sequence~\citep[e.g.,][Definition 6]{jin2021bellman}.}
A distribution $\nu \in \Dset$ is $\veps$-dependent on a sequence $\{\nu^\iter{1}, \dotsc,
\nu^\iter{k}\} \subseteq \Dset$ if: When $|\E_{\nu}[\psi]| > \varepsilon$ for some $\psi
\in \Psi$, we also have $\sum_{i=1}^k (\E_{\nu^\iter{i}}[\psi])^2 > \varepsilon^2$. Otherwise, $\nu$ is $\veps$-independent if this does not hold.

\textbf{Generalized~$\varepsilon$-(in)dependent sequence.}
A distribution $\nu \in \Dset$ is (generalized) $\veps$-dependent on a sequence $\{\nu^\iter{1}, \dotsc,\nu^\iter{k}\} \subseteq \Dset$ if: \underline{for all $\varepsilon' \geq
\varepsilon$}, if $|\E_{\nu}[\psi]| > \varepsilon'$ for some $\psi
\in \Psi$, we also have $\sum_{i=1}^k
(\E_{\nu^\iter{i}}[\psi])^2 > \varepsilon'^2$. We say that $\nu$
is (generalized) $\veps$-independent if this does not hold, i.e., for some $\varepsilon' \geq \varepsilon$, it has $|\E_{\nu}[\psi]| > \varepsilon'$ but $\sum_{i=1}^k
(\E_{\nu^\iter{i}}[\psi])^2 \leq \varepsilon'^2$.

The generalized definition above naturally induces a new implication (which the original definition may not have): \emph{If $\varepsilon' \geq \varepsilon$, then $\varepsilon$-dependent sequence $\Rightarrow$ $\varepsilon'$-dependent sequence, or in other words, $\varepsilon'$-independent sequence $\Rightarrow$ $\varepsilon$-independent sequence.}

The definition of the \emph{distributional Eluder dimension} (see
\cref{eq:bedim_app}) can be written in two equivalent ways using
original and generalized definition for a $\varepsilon$-independent sequence: $\bedim(\Psi, \Dset, \varepsilon)$
is the largest $d\in\bbN$ such that there exists a sequence
$\{d^\iter{1},d^\iter{2},\dotsc,d^\iter{d}\} \subset \Dset$
such that
for all $t\in\brk{d}$:
\begin{enumerate}[(i)]
\item $d^\iter{t}$ is  $\varepsilon'$-independent of $\{ d^\iter{1},d^\iter{2},\dotsc,d^\iter{t-1} \}$ for some $\varepsilon' \geq \varepsilon$.
\item $d^\iter{t}$ is (\emph{generalized}) $\varepsilon$-independent of $\{ d^\iter{1},d^\iter{2},\dotsc,d^\iter{t-1} \}$ $\Longleftarrow$[by the implication above]$\Longrightarrow$ $d^\iter{t}$ is (\emph{generalized}) $\varepsilon'$-independent of $\{ d^\iter{1},d^\iter{2},\dotsc,d^\iter{t-1} \}$ for some $\varepsilon' \geq \varepsilon$.
\end{enumerate}
This indicates that the distributional Eluder dimension can be
equivalently written in terms of generalized independent
sequences. Going forward, we only use the generalized $\varepsilon$-(in)dependent definition, and omit the word generalized.

\paragraph{Setup}
Let us use $\bedim(\varepsilon)$ as shorthand for
$\bedim(\Psi, \Dset, \varepsilon)$. By \cref{eq:bedim_app}, we know $\bedim(\varepsilon)$ also upper bounds the length of sequences $\{d^\iter{1},d^\iter{2},\dotsc,d^\iter{d}\} \subset \Dset$ and $\{\psi\ind{1},\psi\ind{2},\ldots,\psi^\iter{d}\} \subset \Psi$ such that for all $t\in\brk{d}$,
  \begin{align*}
    \abs{\E_{d^\iter{t}}[\delta^\iter{t}]} >
    \varepsilon^\iter{t},\mathand \sqrt{\sum_{i = 1}^{t - 1} \E_{d^\iter{i}}\left[(\psi^\iter{t})^2\right]} \leq \varepsilon^\iter{t},
  \end{align*}
for $\varepsilon\ind{1},\ldots,\veps\ind{d} \geq \varepsilon$ (note that the square is inside the expectation which is different from \cref{eq:bedim_app}).

Now, for any $\{d^\iter{1},d^\iter{2},\dotsc,d^\iter{T}\} \subset \Dset$ and $\{\psi\ind{1},\psi\ind{2},\ldots,\psi^\iter{T}\} \subset \Psi$, we define
  $\beta^\iter{t} \coloneqq \sum_{i = 1}^{t - 1}
  \E_{d^\iter{i}}[(\psi^\iter{t})^2]$. We will study the sequence
  \begin{align}
    \label{eq:def_be_sequence}
    \left\{ \frac{\E_{d^\iter{1}}[\psi^\iter{1}]^2}{1 \vee \beta^\iter{1}}, \frac{\E_{d^\iter{2}}[\psi^\iter{2}]^2}{1 \vee \beta^\iter{3}}, \dotsc, \frac{\E_{d^\iter{T}}[\psi^\iter{T}]^2}{1 \vee \beta^\iter{T}} \right\}.
  \end{align}

Fix a parameter $\alpha>0$, whose value will be specified later. For
the remainder of the proof, we use $L^\iter{t}$ to denote the number of
disjoint $\alpha\sqrt{1 \vee \beta_h^\iter{t}}$-dependent subsequences of $d^\iter{t}$ in $\{d^\iter{1},d^\iter{2},\dotsc,d^\iter{t-1}\}$, for each $t \in [T]$. 

\paragraph{Step 1}
Suppose the $t$-th term of \cref{eq:def_be_sequence} is greater than
$\alpha^2$, so that $|\E_{d^\iter{t}}[\psi^\iter{t}]| >
\alpha\sqrt{1 \vee \beta^\iter{t}}$. From the definition of $L^\iter{t}$, we know there have at least $L^\iter{t}$ disjoint subsequences of $\{d^\iter{1},\dotsc,d^\iter{t-1}\}$ (denoted by $\Seq^\iter{1}, \dotsc, \Seq^\iter{L^\iter{t}}$), such that
\begin{align}
\label{eq:bedim_snc_step1_1}
\sum_{i = 1}^{L^\iter{t}}\sum_{\nu \in \Seq^\iter{i}} (\E_{\nu}[\psi^\iter{t}])^2 \geq (1 \vee \beta^\iter{t})\alpha^2.
\end{align}
On the other hand, by the definition of $\beta_h^\iter{t}$, we have
\begin{align}
\label{eq:bedim_snc_step1_2}
\sum_{i = 1}^{L^\iter{t}}\sum_{\nu \in \Seq^\iter{i}} (\E_{\nu}[\psi^\iter{t}])^2 \leq \sum_{i = 1}^{t - 1} \E_{d^\iter{i}}[(\psi^\iter{t})^2] \leq \beta^\iter{t}.
\end{align}
Therefore, combining \cref{eq:bedim_snc_step1_1,eq:bedim_snc_step1_2} we obtain that, if $|\E_{d^\iter{t}}[\psi^\iter{t}]| > \alpha\sqrt{1 \vee \beta^\iter{t}}$ for some $t \in [T]$, 
\begin{align}
\label{eq:be_dim_prop43_1}
\beta^\iter{t} \geq L^\iter{t} (1 \vee \beta^\iter{t})\alpha^2 \Longrightarrow L^\iter{t} \leq \frac{1}{\alpha^2}.
\end{align}

\paragraph{Step 2}

  On the other hand, let $\{i_1,i_2,\dotsc,i_\kappa\}$ be the longest subsequence of $[T]$, where
  \begin{align*}
    \frac{\E_{d^\iter{i_j}}[\delta^\iter{i_j}]^2}{1 \vee \beta^\iter{i_j}} > \alpha^2,~\forall j \in [\kappa].
  \end{align*}
For compactness, we use $\{\nu^\iter{1},\nu^\iter{2},\dotsc,\nu^\iter{\kappa}\}$ abbreviate $\{d^\iter{i_1},d^\iter{i_2},\dotsc,d^\iter{i_{\kappa}}\}$.
  We now argue that there exists $j^\star \in [\kappa]$, such that for $\nu^\iter{j^\star}$, there must exist at least
  \newcommand{\Lk}{\underline L^\star}
  \begin{align}
    \label{eq:be_dim_prop43_2}
    \Lk \geq \left\lfloor\frac{\kappa}{\bedim\left(\alpha\right) + 1}\right\rfloor \geq \frac{\kappa}{\bedim\left(\alpha\right) + 1} - 1
  \end{align}
  $\alpha$-dependent disjoint subsequences in
  $\{\nu^\iter{1},\nu^\iter{2},\dotsc,\nu^\iter{j^\star-1}\}$ (the actual number of disjoint subsequences is denoted by $\Lk$). This is because we can construct such disjoint subsequences by the following procedure:
  \begin{enumerate}[$\langle$1$\rangle$]
      \item For $j \in [\Lk]$, $\Seq^\iter{j} \leftarrow \{ \nu^\iter{j} \}$. Then, set $j \leftarrow \Lk + 1$.
      \item If $\nu^\iter{j}$ is $\alpha$-dependent on $\Seq^\iter{1},\ldots,\Seq^\iter{\Lk}$, terminate the procedure (goal achieved). 
      \item Otherwise, we know $\nu^\iter{j}$ is $\alpha$-independent
        on at least one of $\Seq^\iter{1},\ldots,\Seq^\iter{\Lk}$ (denoted by $\Seq^\star$). Update $\Seq^\star \leftarrow \Seq^\star \bigcup \{ \nu^\iter{j} \}$, $j \leftarrow j+1$, and go to $\langle$2$\rangle$.
  \end{enumerate}
  From the definition of $\bedim\left(\alpha\right)$, we know if $|\Seq^\iter{i}| \geq \bedim\left(\alpha\right) + 1$, any $\nu \in \Dset_h$ must be $\alpha$-dependent on $\Seq^\iter{i}$ (for each $i \in [\Lk]$). Therefore, such a procedure must terminate before or on $j^\iter{\max} = \Lk \bedim\left(\alpha\right) + \Lk$. Thus, if $j^\iter{\max} \leq \kappa$, termination in $\langle$2$\rangle$ must happen.
  This only requires $\Lk$ to satisfy
  \begin{align*}
  \Lk \bedim\left(\alpha\right) + \Lk \leq \kappa \quad\Longrightarrow\quad \Lk \leq \frac{\kappa}{\bedim\left(\alpha\right) + 1}.
  \end{align*}
  That is, as long as $\Lk \leq
  \left\lfloor\frac{\kappa}{\bedim\left(\alpha\right) +
      1}\right\rfloor$, the termination in $\langle$2$\rangle$ must happen for some $j^\star \leq \kappa$.

\paragraph{Step 3}
As we discussed at the beginning, $\alpha$-dependence implies $\alpha'$-dependence for all $\alpha' \geq \alpha$. This means the $\Lk$ in Step 2 lower bounds $\max_{t \in [T]} L^\iter{t}$ in Step 1, because $\{d^\iter{i_1},d^\iter{i_2},\dotsc,d^\iter{i_\kappa}\}$ is a subset of $\{d^\iter{1},d^\iter{2},\dotsc,d^\iter{i_\kappa}\}$.
Thus, combining \cref{eq:be_dim_prop43_1,eq:be_dim_prop43_2}, we can obtain that,
  \begin{align*}
    \frac{1}{\alpha^2} \geq\max_{t \in [T]} L^\iter{t} \geq \Lk \geq \frac{\kappa}{\bedim\left(\alpha\right) + 1} - 1.
  \end{align*}
This implies that
  \begin{align*}
    \kappa \leq \left( 1 + \frac{1}{\alpha^2} \right) (\bedim\left(\alpha\right) + 1) \leq \frac{3 \bedim\left(\alpha\right)}{\alpha^2} + 1.
    \tag{suppose $\alpha \leq 1$}
  \end{align*}
  As a consequence, for any $\varepsilon \in (0,1]$, by setting
  $\alpha = \sqrt{\varepsilon}$,
  \begin{align}
    \label{eq:be_dim_count_bound}
    \sum_{t = 1}^{T} \1\left( \frac{\E_{d^\iter{t}}[\psi^\iter{t}]^2}{1 \vee \beta^\iter{t}}  > \varepsilon\right) \leq \frac{3\bedim(\sqrt{\varepsilon})}{\varepsilon} + 1.
  \end{align}

\paragraph{Step 4} %
Let $e^\iter{1} \geq e^\iter{2} \geq \cdots \geq e^\iter{T}$ denote the sequence in
\cref{eq:def_be_sequence} reordered in a decreasing fashion. For any
parameter $w \in (0,1]$ to be specified later, we have
  \begin{gather*}
    \sum_{t = 1}^{T} \frac{\E_{d^\iter{t}}[\psi^\iter{t}]^2}{1 \vee \sum_{i = 1}^{t - 1} \E_{d^\iter{i}}\left[(\psi^\iter{t})^2\right]} = \sum_{t = 1}^{T} e^\iter{t}
    \\
    \leq T w + \sum_{t = 1}^{T} e^\iter{t} \1(e^\iter{t} > w).
  \end{gather*}

Observe that for any $t \in [T]$ such that $e^\iter{t} > w$, if
  $2 \eta \geq e^\iter{t} > \eta \geq w$, we have
  \begin{align}
    \nonumber
    t \leq &~ \sum_{i = 1}^{T} \1(e_i > \eta)
    \\
    \leq &~ \frac{3}{\eta} \bedim\left(\sqrt{\eta}\right) + 1 \tag{by \cref{eq:be_dim_count_bound}}
    \\
    \nonumber
    \leq &~ \frac{3}{\eta} \bedim\left(\sqrt{w}\right) + 1
    \\
    \Longrightarrow \eta \leq &~ \frac{3d}{t - 1} \tag{define $d \coloneqq \bedim\left(\sqrt{w}\right)$}
    \\
    \Longrightarrow e^\iter{t} \leq &~ \min\left( \frac{6 d }{t - 1}, 1 \right).
    \tag{$2 \eta \geq e^\iter{t} > \eta$}
  \end{align}
  Therefore,
  \begin{align*}
    \sum_{t = 1}^{T} e^\iter{t} \1(e^\iter{t} > w) \leq &~ d + \sum_{t = d + 1}^{T} \frac{6 d}{t - 1}
    \\
    \leq &~ d + 6 d \log(T).
    \\
    \Longrightarrow \sum_{t = 1}^{T} e^\iter{t} \leq &~ Tw + \bedim\left(\sqrt{w}\right) + 6 \bedim\left(\sqrt{w}\right) \log(T).
  \end{align*}
  Selecting $w = \varepsilon^2$ implies
  \begin{align*}
    \cdim(T) \lesssim &~ \inf_{\veps>0}\crl*{\veps^2T + \bedim\left(\veps\right)}\cdot\log(T).
  \end{align*}
  This completes the proof.
\end{proof}

\iclr{\subsection{Discussion: Relationship to Additional Complexity Measures}}
\iclr{\label{app:complexity_additional}}

\arxiv{\subsection{\CompMeasure: \texorpdfstring{$Q$}{Q}-type versus \texorpdfstring{$V$}{V}-type}}
\iclr{\subsubsection{\CompMeasure: $Q$-type versus $V$-type}}
\label{app:v_type}

The \CompMeasure, as defined in \cref{def:snc_qtype}), can be thought
of as a generalization of $Q$-type \betext \citep{jin2021bellman}. In
this section we sketch how one can adapt \CompMeasure so as to
generalize $V$-type \betext instead. Note that $V$-type \betext subsumes
the original notion of Bellman rank from \citet{jiang2017contextual}.

We define the $V$-type \CompMeasure for RL as follows.
\begin{definition}[\CompMeasure for RL, $V$-type]
\label{def:snc_vtype}
For each $h \in [H]$, let $\Dset_{h,x}^\Pi \coloneqq \{d^\pi_h(\cdot): \pi \in \Pi\} \subset \Delta(\Xcal)$ and $V_{\Fcal_h - \Tcal_h \Fcal_{h+1}} \coloneqq \{(f_h - \Tcal_h f_{h+1})(\cdot,\pi_{f,h}): f \in \Fcal\} \subset (\Xcal \to \RR)$. Then we define,
\begin{align*}
\qquad \cdimrlv(\Fcal,\Pi,T) \coloneqq \max_{h \in [H]} \cdim(V_{\Fcal_h - \Tcal_h \Fcal_{h+1}},\Dset_{h,x}^\Pi,T).
\end{align*}
\end{definition}
We recall that the $V$-type \betext $\bedimv(\cF,\Pi,\veps)$ is defined
analogously, by replacing $\Fcal_h - \Tcal_h \Fcal_{h+1} \to
V_{\Fcal_h - \Tcal_h \Fcal_{h+1}}$ and $\Dset_{h}^\Pi \to
\Dset_{h,x}^\Pi$ in \cref{def:be_dim}.

Lastly, we give a $V$-type generalization of
\cref{def:low_concentrability} (i.e., coverability w.r.t.~state only),
for a policy class $\Pi$ as follows:
\begin{align}
\label{eq:def_ctype_con}
  \Conv \ldef \inf_{\mu_1,\ldots,\mu_H\in\Delta(\cX)}  \sup_{\pi \in \Pi,h\in\brk{H}}\,
  \nrm*{\frac{d_h^\pi}{\mu_h}}_{\infty}.
\end{align}
As a simple implication, we have $\Conv \leq C_\on \leq \Conv \cdot |\Acal|$.

Note that the $V$-type variants of \compmeasure, \betext, and
coverability differ from their $Q$-type counterparts only in the choices
for the distribution and test function sets. Since our proofs for \cref{prop:sedim_coverability,prop:sedim_bedim} hold for arbitrary distributions and test function sets, we immediately obtain the following $V$-type extensions of \cref{prop:sedim_coverability,prop:sedim_bedim}.

\begin{proposition}[Coverability $\Longrightarrow$ $\SNC$, $V$-type]
  \label{prop:sedim_coverability_v}
  Let $\Conv$ be the $V$-type coverability coefficient
  (\cref{eq:def_ctype_con}) with policy class $\Pi$. Then for
  any value function class $\cF$, $\cdimrlv(\Fcal,\Pi,T) \leq O \left( \Conv\cdot\log(T) \right)$.
\end{proposition}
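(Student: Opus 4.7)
The plan is to appeal directly to the abstract generalization of \pref{prop:sedim_coverability} that was established in its proof, which already handles arbitrary distribution sets $\Dset \subset \Delta(\Zcal)$ and test function classes $\Psi \subset (\Zcal \to [0,1])$. Recall that the abstract result shows $\cdim(\Psi, \Dset, T) \lesssim \Ccov(\Dset) \log(T)$, where $\Ccov(\Dset) = \inf_{\mu \in \Delta(\Zcal)} \sup_{d \in \Dset} \nrm{d/\mu}_{\infty}$. The $V$-type statement then follows by instantiating this abstract result with the right choices of $\Zcal$, $\Dset$, and $\Psi$.

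Concretely, for each $h \in [H]$, I would instantiate the abstract framework with $\Zcal = \Xcal$, distribution class $\Dset = \Dset_{h,x}^\Pi = \{d_h^\pi(\cdot) : \pi \in \Pi\}$, and test function class $\Psi = V_{\Fcal_h - \Tcal_h \Fcal_{h+1}}$. With these choices, the abstract \compmeasure $\cdim(\Psi, \Dset, T)$ coincides with $\cdim(V_{\Fcal_h - \Tcal_h \Fcal_{h+1}}, \Dset_{h,x}^\Pi, T)$, which is exactly the term being maximized over $h$ in \pref{def:snc_vtype}. Comparing the abstract definition of generalized coverability with \pref{eq:def_ctype_con}, we see that $\Ccov(\Dset_{h,x}^\Pi) \leq \Conv$ for each $h$, because the optimal $\mu_h \in \Delta(\Xcal)$ witnessing $\Conv$ is a feasible choice in the infimum. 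The abstract result then yields $\cdim(V_{\Fcal_h - \Tcal_h \Fcal_{h+1}}, \Dset_{h,x}^\Pi, T) \leq O(\Conv \log T)$ for every $h \in [H]$, and taking the maximum over $h$ gives the claim.

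The only bookkeeping point to handle is that the abstract proof assumes $[0,1]$-valued test functions, whereas the $V$-type Bellman residuals $(f_h - \Tcal_h f_{h+1})(\cdot, \pi_{f,h})$ are valued in $[-1,1]$ because $f \in [0,1]$. This is addressed easily by noting that every step in the abstract proof that used $\psi \in [0,1]$ only needed $\psi^2 \leq 1$ (e.g., the bound $\E_{d\iter{t}}[\psi\iter{t}(z)\1[t < \tau(z)]]^2 \leq \E_{d\iter{t}}[\1[t < \tau(z)]]$ follows from Jensen's inequality as $\E[\psi\1]^2 \leq \E[\psi^2\1]\E[\1] \leq \E[\1]^2$). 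Alternatively, one can rescale the test functions by a constant factor to map $[-1,1]$ into $[0,1]$, which affects the bound only by a constant that is absorbed into the $O(\cdot)$ notation.

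I expect no substantive obstacle here, since the authors already proved the stronger abstract statement precisely to enable this kind of reuse; the main work is confirming the definitional match between the abstract $\Ccov(\Dset)$ and $\Conv$ from \pref{eq:def_ctype_con}, and verifying that the test-function range assumption transfers with at most a harmless constant.
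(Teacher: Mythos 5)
Your proof is correct and is exactly the paper's argument: the paper proves the abstract bound $\cdim(\Psi,\Dset,T) \lesssim \Ccov(\Dset)\log(T)$ for arbitrary distribution classes $\Dset\subset\Delta(\cZ)$ and test-function classes $\Psi$ precisely so that the $V$-type statement follows by instantiating $\cZ=\cX$, $\Dset = \Dset_{h,x}^{\Pi}$, $\Psi = V_{\Fcal_h - \Tcal_h \Fcal_{h+1}}$, observing $\Ccov(\Dset_{h,x}^{\Pi}) \leq \Conv$ for each $h$, and maximizing over $h$. Your additional check that the $[-1,1]$-valued residuals cause no trouble (the abstract proof only ever needs $|\E_d[\psi\,\1]|\leq \E_d[\1]$ and $\psi^2\leq 1$) is a detail the paper glosses over, and you handle it correctly.
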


\begin{proposition}[\betext $\Longrightarrow$ $\SNC$, $V$-type]
  \label{prop:sedim_bedim_v}
  Suppose $\bedimv(\Fcal,\Pi,\veps)$ be the $V$-type \betext with function class $\Fcal$ and policy $\Pi$, then
  \begin{align*}
      \cdimrlv(\Fcal,\Pi,T) \leq O\left(\inf_{\veps>0}\crl*{\veps^2T + \bedimv(\Fcal,\Pi,\veps)}\cdot\log(T)\right).
  \end{align*}
\end{proposition}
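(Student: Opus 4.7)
The plan is to observe that the proof of Proposition~\ref{prop:sedim_bedim}, given immediately above, was deliberately phrased in an abstract setting: for any set $\Dset \subseteq \Delta(\Zcal)$ of distributions and any class $\Psi \subseteq (\Zcal \to [0,1])$ of test functions, it establishes
\[
\cdim(\Psi, \Dset, T) \leq O\Bigl(\inf_{\veps>0}\{\veps^2 T + \bedim(\Psi, \Dset, \veps)\}\cdot \log(T)\Bigr),
\]
where $\bedim(\Psi,\Dset,\veps)$ is the distributional Eluder dimension defined via \pref{eq:bedim_app}. No structural property of how $\Zcal$, $\Dset$, or $\Psi$ arise from an MDP is used in that argument.

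Given this abstract result, the $V$-type statement follows by instantiation. For each $h \in [H]$, I would set $\Zcal = \cX$, $\Dset = \Dset_{h,x}^\Pi = \{d_h^{\pi}(\cdot) : \pi \in \Pi\}$, and $\Psi = V_{\cF_h - \cT_h \cF_{h+1}} = \{(f_h - \cT_h f_{h+1})(\cdot, \pi_{f,h}) : f \in \cF\}$. By \pref{def:snc_vtype}, $\cdimrlv(\cF,\Pi,T) = \max_{h \in [H]} \cdim(V_{\cF_h - \cT_h \cF_{h+1}}, \Dset_{h,x}^\Pi, T)$, and with the same substitution in \pref{def:be_dim}, we also have $\bedimv(\cF,\Pi,\veps) = \max_{h \in [H]} \bedim(V_{\cF_h - \cT_h \cF_{h+1}}, \Dset_{h,x}^\Pi, \veps)$. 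Invoking the abstract inequality layerwise and then taking the maximum over $h$ yields the claim.

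The only point meriting a sanity check, rather than a genuine obstacle, is the range hypothesis $\Psi \subseteq (\cX \to [0,1])$ assumed in the abstract proof. Since $f_h$ and $\cT_h f_{h+1}$ both take values in $[0,1]$ under our normalization, the $V$-type Bellman residual $(f_h - \cT_h f_{h+1})(\cdot,\pi_{f,h})$ takes values in $[-1,1]$; after a harmless rescaling by $2$, which only inflates $\bedim$ and $\cdim$ by absolute constants that are absorbed into the $O(\cdot)$, the abstract result applies verbatim. No other modification to the $Q$-type argument is needed.
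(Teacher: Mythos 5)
Your proposal is correct and is exactly the paper's own argument: the paper proves \pref{prop:sedim_bedim} in an abstract form for arbitrary distribution classes $\Dset$ and test-function classes $\Psi$, and then obtains the $V$-type statement by precisely the instantiation you describe ($\Psi = V_{\cF_h - \cT_h\cF_{h+1}}$, $\Dset = \Dset_{h,x}^{\Pi}$, maximized over $h$). Your remark about the $[-1,1]$ range of the $V$-type residuals is a reasonable sanity check, but the abstract argument only ever uses $\abs{\E_{d}[\psi]}\leq 1$, i.e.\ boundedness by $1$ in absolute value, so even the rescaling by $2$ is unnecessary.
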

As shown in~\citet{jin2021bellman}, \golf (\cref{alg:golf}) can be
extended to $V$-type by simply replacing \cref{step:glof_sampling} in
\cref{alg:golf} with sampling $(s_h,a_h,r_h,s_{h+1}) \sim d_h^\iter{t}
\times \pi_{\unif}$ ($s_h \sim d_h^\iter{t}$ and $a_h \sim
\unif(\Acal)$) each $h \in [H]$. By slightly modifying the proof of
\pref{thm:regret_online_c} one can obtain similar sample complexity
guarantees based on the $V$-type \CompMeasure. We omit the details here,
since the only differences are 1) a $V$-type analog of
\cref{lem:golf_concentration}~\citep[provided by][Lemma
44]{jin2021bellman}; and 2) trivially upper bounding the quantity $\E_{d_h^\iter{i}
  \times \pi_h^\iter{t}}[(\delta_h^\iter{t})^2]$ (used in $\cdimrlv$)
by $|\Acal|\cdot\E_{d_h^\iter{i} \times \pi_\unif}[(\delta_h^\iter{t})^2]$
(controlled by in-sample error). Note, however, that due to the uniform
exploration, this algorithm leads to a sample complexity guarantee of
the form
\begin{align*}
J(\pi^\star) - J(\pibar) \leq O \left( H\sqrt{\frac{\cdimrlv(\Fcal,\Pi,T) |\Acal| \log(\nicefrac{TH |\Fcal|}{\delta})}{T}}\right),
\end{align*}
but not a regret bound.

\arxiv{\subsection{Connection to Bilinear Classes}}
\iclr{\subsubsection{Connection to Bilinear Classes}}
\label{app:bilinear}
The Bilinear
class framework \citep{du2021bilinear} generalizes the notion of Bellman rank
\citep{jiang2017contextual}, which captures various more structural
conditions via an additional class of \emph{discrepancy functions}. In this section we sketch how one can generalize the \compmeasure ($\SNC$)
further by allowing for the use of general discrepancy functions to form
confidence sets and estimate Bellman residuals, in the vein of
Bilinear classes. 
\begin{definition}[\BiCompMeasure]
\label{def:online_c_bi}
Let $\cZ$ be an abstract set. Let $\Psi\subset(\cZ\to\bbR)$ be a
\emph{function class}, and let $\Dset_\Psi (\coloneqq \{d_\psi: \psi
\in \Psi\}), \Pset_\Psi (\coloneqq \{p_\psi: \psi
\in \Psi\}) \subset\Delta(\cZ)$ be two corresponding distribution classes, and
$\Lset_{\Psi} (\coloneqq \{\ell_\psi: \psi \in \Psi\})
\subset(\cZ\to\bbR)$ be a corresponding discrepancy function class.
The \BiCompMeasure for length $T$ is given by
\begin{align}
\cdimbi(\Psi,\Dset_\Psi,\Pset_\Psi,\Lset_{\Psi},T) \ldef{}
  \sup_{\psi^\iter{1},\ldots,\psi^\iter{T} \in \Psi}\crl*{\sum_{t =
  1}^{T} \frac{\E_{d_{\psi^\iter{t}}}[\psi^\iter{t}]^2 }{1 \vee
  \sum_{i = 1}^{t - 1}
  \E_{p_{\psi^\iter{i}}}[\ell_{\psi^\iter{t}}^2]}}.
  \label{eq:snc_gen}
\end{align}
\end{definition}
To apply the generalized $\SNC$ to reinforcement learning, one can set (for each level $h$) $\Psi = \Fcal_h - \Tcal_h \Fcal_{h+1}$, $\Dset_\Psi = \{d^\pi_h(\cdot,\cdot): \pi \in \Pi_\Fcal\}$, and $\Pset_\Psi = \{(d^\pi_h \times \pi_{\est,\psi_h})(\cdot,\cdot): \pi \in \Pi_\Fcal\}$, where $(d \times \pi)(x,a) \coloneqq d(x) \pi(a|x)$ (for any $d \in \Delta(\Xcal)$, $\pi \in (\Xcal \to \Delta(\Acal))$ and $(x,a) \in \Xcal \times \Acal$), and $\pi_{\est,\psi_h}$ denotes the estimation policy depending on $\psi_h$ (e.g., greedy policy w.r.t.~$\psi_h$ or uniformly random policy over $\Acal$).
The discrepancy function class $\Lset_\Psi$ can be selected according to the original Bilinear rank for covering various structural conditions, and setting $\Lset_\Psi = \Psi$ recovers the original $\SNC$.

By combining \golf and \cref{thm:regret_online_c} with the approach from
\citet{du2021bilinear}, one can provide sample complexity guarantees
that scale with the \BiCompMeasure. We omit the details, but the basic
idea is to form the confidence set using the discrepancy function
class $\Lset_\Psi$ rather than working with squared Bellman error.

\paragraph{Bounding the generalized $\SNC$ by bilinear rank}
In what follows, we show that the abstract version of the Generalized
$\SNC$ in \pref{eq:snc_gen} can be bounded by an abstract
generalization of the notion of Bilinear rank from \citet{du2021bilinear}.
\begin{definition}[\bitext, finite dimension~\citep{du2021bilinear}]
  \label{def:bilinear}
  Let $\cZ$ be an abstract set. Let $\Psi\subset(\cZ\to\bbR)$ be a
\emph{function class}, and let $\Dset_\Psi (\coloneqq \{d_\psi: \psi
\in \Psi\}), \Pset_\Psi (\coloneqq \{p_\psi: \psi
\in \Psi\}) \subset\Delta(\cZ)$ be two corresponding distribution classes, and
$\Lset_{\Psi} (\coloneqq \{\ell_\psi: \psi \in \Psi\})
\subset(\cZ\to\bbR)$ be a corresponding discrepancy function
class. The class $\Psi$ is said to have Bilinear rank $d$ if there
exists $\psi^\star \in \Psi$ and functions $X,W \subset( \Psi \to
\RR^{d} )$ such that 1) $\sum_{\psi \in \Psi} \| X(\psi) \|_2 \leq 1$
and $\sum_{\psi \in \Psi} \| W(\psi) \|_2 \leq B_W$, and 2)
\begin{align*}
\E_{d_{\psi}}[\psi] \leq &~ \left| \left\langle W(\psi) - W(\psi^\star), X(\psi) \right\rangle \right| \quad \forall \psi \in \Psi,
\\
\E_{p_{\psi}}[\ell_{\psi'}] = &~ \left| \left\langle W(\psi') - W(\psi^\star), X(\psi) \right\rangle \right| \quad \forall \psi,\psi' \in \Psi.
\end{align*}
We define $\bidim(\Psi,\Dset_\Psi,\Pset_\Psi,\Lset_{\Psi})$ as the least
dimension $d$ for which this property holds.
\end{definition}

\begin{proposition}[\bitext $\Longrightarrow$ \BiCompMeasure]
\label{prop:bilinear_snc}
Suppose $\cdimbi(\Psi,\Dset_\Psi,\Pset_\Psi,\Lset_{\Psi},T)$ and $\bidim(\Psi,\Dset_\Psi,\Pset_\Psi,\Lset_{\Psi})$ be the \bicompmeasure and \bitext defined in \cref{def:online_c_bi,def:bilinear} with respect to function class $\Psi$, distribution classes $\Dset_\Psi$ and $\Dset_\Psi$, and discrepancy function class $\Lset_{\Psi}$. Then we have,
\begin{align*}
\cdimbi(\Psi,\Dset_\Psi,\Pset_\Psi,\Lset_{\Psi},T) \lesssim \bidim(\Psi,\Dset_\Psi,\Pset_\Psi,\Lset_{\Psi}) \log\left( 1 + \frac{4 B_W^2 T }{d} \right).
\end{align*}
\end{proposition}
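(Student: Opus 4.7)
The plan is to reduce the proposition to a standard elliptic potential / self-normalized martingale estimate by exploiting the bilinear structure. First, I would introduce the shorthand $w_t \coloneqq W(\psi^{(t)}) - W(\psi^\star)$ and $x_t \coloneqq X(\psi^{(t)})$ for any sequence $\psi^{(1)},\ldots,\psi^{(T)}$ achieving the supremum in~\cref{def:online_c_bi}. By the two bilinear inequalities in \cref{def:bilinear}, the numerator satisfies $\E_{d_{\psi^{(t)}}}[\psi^{(t)}]^2 \leq \langle w_t,x_t\rangle^2$, while Jensen's inequality applied to $\E_{p_{\psi^{(i)}}}[\ell_{\psi^{(t)}}^2] \geq (\E_{p_{\psi^{(i)}}}[\ell_{\psi^{(t)}}])^2 = \langle w_t,x_i\rangle^2$ lets me lower-bound the denominator by $w_t^\top \Sigma_{t-1} w_t$, where $\Sigma_{t-1} \coloneqq \sum_{i<t} x_i x_i^\top$. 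Thus it suffices to bound $\sum_{t \in [T]} \langle w_t, x_t\rangle^2 / (1 \vee w_t^\top \Sigma_{t-1} w_t)$.

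Next, I would regularize. Setting $\lambda \coloneqq d/(4 B_W^2 T)$ (or similar), Cauchy–Schwarz in the geometry of $\lambda I + \Sigma_{t-1}$ gives
\[
\langle w_t, x_t\rangle^2 \leq \bigl(w_t^\top (\lambda I + \Sigma_{t-1}) w_t\bigr)\cdot \bigl(x_t^\top (\lambda I + \Sigma_{t-1})^{-1} x_t\bigr).
\]
Since $\|w_t\|_2 \leq 2 B_W$ by the boundedness assumption in \cref{def:bilinear}, we have $\lambda \|w_t\|_2^2 \leq 1$ provided $\lambda$ is chosen accordingly, whence $w_t^\top(\lambda I + \Sigma_{t-1}) w_t \leq 2\bigl(1 \vee w_t^\top \Sigma_{t-1} w_t\bigr)$. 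Substituting this into the Cauchy–Schwarz bound and dividing yields
\[
\frac{\langle w_t, x_t\rangle^2}{1 \vee w_t^\top \Sigma_{t-1} w_t} \leq 2\, x_t^\top (\lambda I + \Sigma_{t-1})^{-1} x_t.
\]

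Finally, I would invoke the standard elliptic potential lemma (e.g., \citet{lattimore2020bandit}): because $\|x_t\|_2 \leq 1$, one has $\sum_{t \in [T]} x_t^\top (\lambda I + \Sigma_{t-1})^{-1} x_t \leq O\bigl(d \log(1 + T/(\lambda d))\bigr)$. Plugging in the choice of $\lambda$ from the previous step gives the claimed bound $O\bigl(d \log(1 + 4 B_W^2 T / d)\bigr)$, with $d = \bidim(\Psi,\Dset_\Psi,\Pset_\Psi,\Lset_\Psi)$.

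The main technical wrinkle is the $1 \vee (\cdot)$ in the denominator of the $\SNC$, which prevents a direct application of the elliptic potential: if $\Sigma_{t-1}$ is nearly singular in the direction of $w_t$, then $w_t^\top \Sigma_{t-1} w_t$ can be arbitrarily small and the naive ratio blows up. The regularization trick above, with $\lambda$ calibrated to the known norm bound $B_W$ on the $W$-vectors, is precisely what absorbs this issue and produces the logarithmic dependence on $B_W^2 T / d$.
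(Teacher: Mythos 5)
Your proposal is correct and follows essentially the same route as the paper's proof: bound the numerator by $\langle W(\psi\ind{t})-W(\psi^\star), X(\psi\ind{t})\rangle^2$, lower-bound the denominator via Jensen by $\nrm{W(\psi\ind{t})-W(\psi^\star)}^2_{\Sigma_t}$ with a ridge term calibrated so that $\lambda\nrm{w_t}_2^2\leq 1$ (the paper takes $\lambda = \frac{1}{4B_W^2}$, folded directly into $\Sigma_t$), then apply Cauchy--Schwarz in the $\Sigma_t$ geometry and the elliptic potential lemma. The only cosmetic differences are your choice of $\lambda$ and that the paper truncates each summand at $1$ before invoking the potential lemma; neither changes the argument.
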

\begin{proof}[\cpfname{prop:bilinear_snc}]
Throughout the proof, we use $d^\iter{t}$, $p^\iter{t}$ and $\ell^\iter{t}$ as the shorthands of $d_{\psi^\iter{t}}$, $p_{\psi^\iter{t}}$ and $\ell_{\psi^\iter{t}}$.
  We study the quantity,
  \begin{align*}
    \sum_{t = 1}^{T} \frac{\E_{d^\iter{t}}[\psi^\iter{t}]^2 }{1 \vee \sum_{i = 1}^{t - 1} \E_{p^\iter{i}}[(\ell^\iter{t})^2]} \leq 2 \sum_{t = 1}^{T} \frac{\E_{d^\iter{t}}[\psi^\iter{t}]^2 }{1 + \sum_{i = 1}^{t - 1} \E_{p^\iter{i}}[\ell^\iter{t}]^2}.
  \end{align*}
  By \cref{def:bilinear}, we have
  \begin{align*}
    \E_{d^\iter{t}}[\psi^\iter{t}]^2 \leq &~ \left| \left\langle W(\psi^\iter{t}) - W(\psi^\star), X(\psi^\iter{t}) \right\rangle \right|^2,
\intertext{and}
    1 + \sum_{i = 1}^{t - 1} \E_{p^\iter{i}}[\ell^\iter{t}]^2 = &~ 1 + \sum_{i = 1}^{t - 1} \left| \left\langle W(\psi^\iter{t}) - W(\psi^\star), X(\psi^\iter{i}) \right\rangle \right|^2
    \\
    \geq &~ \left( W(\psi^\iter{t}) - W(\psi^\star) \right)^\T \Sigma_t \left( W(\psi^\iter{t}) - W(\psi^\star) \right)
    \\
    = &~ \left\| W(\psi^\iter{t}) - W(\psi^\star) \right\|_{\Sigma_t}^2,
  \end{align*}
  where
  $\Sigma_t \ldef \frac{1}{4B_W^2}\bfI + \sum_{i = 1}^{t - 1} X(\psi^\iter{i})
  X(\psi^\iter{i})^\T$.

We bound
  \begin{align*}
    \E_{d^\iter{t}}[\psi^\iter{t}]^2 \leq &~ \left| \left\langle W(\psi^\iter{t}) - W(\psi^\star), X(\psi^\iter{t}) \right\rangle \right|^2
    \\
    \leq &~ \left\| W(\psi^\iter{t}) - W(\psi^\star) \right\|_{\Sigma_t}^2 \cdot \left\| X(\psi^\iter{t}) \right\|_{\Sigma_t^{-1}}^2,
  \end{align*}
which implies
  \begin{align*}
    \sum_{t = 1}^{T} \frac{\E_{d^\iter{t}}[\psi^\iter{t}]^2 }{1 + \sum_{i = 1}^{t - 1} \E_{p^\iter{i}}[\ell^\iter{t}]^2} \leq &~ \sum_{t = 1}^{T} 1 \wedge \left\| X(\psi^\iter{t}) \right\|_{\Sigma_t^{-1}}^2
    \\
    \leq &~ 2 \log\left( \frac{\det(\Sigma_T)}{\det(\Sigma_1)} \right)
    \\
    \leq &~ 2 \bidim(\Psi,\Dset_\Psi,\Pset_\Psi,\Lset_{\Psi}) \log\left( 1 + \frac{4 B_W^2 T }{d} \right),
  \end{align*}
  where the last two inequalities follow from the elliptical potential
  lemma~\citep[Lemma 19.4]{lattimore2020bandit}.  Putting everything
  together, we obtain
  \begin{align*}
    \cdimbi(\Psi,\Dset_\Psi,\Pset_\Psi,\Lset_{\Psi},T) \leq 4 \bidim(\Psi,\Dset_\Psi,\Pset_\Psi,\Lset_{\Psi}) \log\left( 1 + \frac{4 B_W^2 T }{d} \right).
  \end{align*}
\end{proof}

\arxiv{
\subsection{Further Connections to Bellman-Eluder Dimension}

\iclr{
\begin{definition}[\betextsq]
\label{def:be_dim_sq}
  We define the \betextsq $\bedimsq(\Fcal, \Pi, \varepsilon,h)$ for layer $h$ is the
  largest $d\in\bbN$ such that there exist sequences
  $\{d_h^\iter{1},d_h^\iter{2},\dotsc,d_h^\iter{d}\} \subseteq
  \Dset^\Pi_h$ and
  $\{\delta_h\ind{1},\ldots,\delta_h^\iter{d}\} \subseteq \Fcal_h - \Tcal_h
  \Fcal_{h+1}$ such that for all $t\in\brk{d}$,
  \begin{align}
    \label{eq:bedimsq}
    \abs{\E_{d_h^\iter{t}}[\delta_h^\iter{t}]} >
    \varepsilon^\iter{t},\mathand \sqrt{\sum_{i = 1}^{t - 1} \E_{d_h^\iter{i}}[(\delta_h^\iter{t})^2]} \leq \varepsilon^\iter{t},
  \end{align}
  for $\varepsilon\ind{1},\ldots,\veps\ind{d} \geq \varepsilon$. We define $\bedimsq(\cF,\Pi,\veps)=\max_{h\in\brk{H}}\bedimsq(\cF,\Pi,\veps,h)$.
\end{definition}
This definition is identical to \pref{def:be_dim}, except that the
constraint $\sqrt{\sum_{i = 1}^{t - 1}
  \prn{\E_{d_h^\iter{i}}[\delta_h^\iter{t}]}^2} \leq
\varepsilon^\iter{t}$ in \pref{eq:bedim} has been replaced by the
constraint $\sqrt{\sum_{i = 1}^{t - 1}
  \E_{d_h^\iter{i}}[(\delta_h^\iter{t})^2]} \leq
\varepsilon^\iter{t}$, which uses squared Bellman error instead of
average Bellman error. By adapting the analysis of
\citet{jin2021bellman} it is possible to show that this definition
yields
$\Reg\leq\bigoht\prn[\big]{H\sqrt{\inf_{\veps>0}\crl{\veps^2T+\bedimsq(\cF,\Pi,\veps)}\cdot{}T\log\abs{\cF}}}$. If
one could show that
$\dimbesq(\cF,\Pi,\veps)\approxleq{}\Ccov\cdot\polylog(\veps^{-1})$, 
this would recover \pref{thm:golf_guarantee_basic}. Unfortunately, it
turns out that in general, one can have
$\dimbesq(\cF,\Pi,\veps)=\bigom(\Ccov/\veps)$, which leads to
suboptimal $T^{2/3}$-type regret using the result above. The following
result shows that this guarantee cannot be improved without changing
the complexity measure under consideration.
\begin{proposition}
\label{prop:bedim_lower_bound}
Fix $T\in\bbN$, and let $\veps_T\ldef{}T^{-1/3}$. There exist MDP-policy class-value function class tuples
$(M_1,\Pi_1,\cF_1)$ and $(M_2,\Pi_2,\cF_2)$ with the following
properties.
\begin{itemize}
\item For MDP $M_1$, we have
  $\dimbesq(\cF_1,\Pi_1,\veps_T)\propto{}1/\veps_T$, and any algorithm
  must have $\En\brk*{\Reg}\geq{}\bigom(T^{2/3})$.
\item For MDP $M_2$ we also have
  $\dimbesq(\cF_2,\Pi_2,\veps_T)\propto{}1/\veps_T$, yet
  $\Ccov=\bigoh(1)$ and \golf attains $\En\brk*{\Reg}\leq\bigoht(\sqrt{T})$.
\end{itemize}

\end{proposition}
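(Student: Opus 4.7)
The plan is to construct the two families by exploiting the \emph{scale insensitivity} of the squared \betext: in both cases I will arrange a bandit-like gadget in which $|\E_{d^\iter{t}}[\delta^\iter{t}]|$ is exactly $\veps_T$ while $\E_{d^\iter{i}}[(\delta^\iter{t})^2] = 0$ identically for all $i<t$, so that $\bedimsq$ grows linearly with the number of actions even when the underlying MDP is either genuinely hard or trivially easy.

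For $\cM_1$ (the hard side), I would take a multi-armed bandit family with $H=1$ and $A \ldef \veps_T^{-1}$ actions, where in $M\ind{i}$ the reward of action $a$ is $\mathrm{Ber}(\tfrac{1}{2} + \veps_T \indic\{a=i\})$. Let $\cF_1 = \{f\ind{i}(x_1,a) = \tfrac12 + \veps_T\indic\{a=i\}\}_{i\in[A]}$, for which completeness is automatic. With $M\ind{A}$ designated as the underlying MDP and $\delta\ind{t} = f\ind{t} - f\ind{A}$, the residual $\delta\ind{t}$ is nonzero only at $(x_1,t)$ and $(x_1,A)$, so sampling at policy $\pi_{f\ind{t}}$ yields $|\E_{d\ind{t}}[\delta\ind{t}]|=\veps_T$ while $\E_{d\ind{i}}[(\delta\ind{t})^2] = 0$ for $i<t<A$. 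This witnesses $\bedimsq(\cF_1,\Pi_1,\veps_T) \geq A-1 = \Theta(1/\veps_T)$. The $\Omega(T^{2/3})$ lower bound on regret then follows from the standard MAB ``needle-in-a-haystack'' argument (cf.~\citet{lattimore2020bandit}): any algorithm must pay $\bigom(\min\{\veps_T T, A/\veps_T\}) = \bigom(\min\{\veps_T T, \veps_T^{-2}\})$, which is balanced at $T^{2/3}$ for $\veps_T = T^{-1/3}$.

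For $\cM_2$ (the easy side), the key trick is to embed the same bandit gadget inside a state that is reached with probability only $\veps_T$, while the bulk of the mass concentrates on a single ``dominant'' absorbing state. Concretely, I use $H=2$ with $\cX_1 = \{x_1\}$, $\cX_2 = \{y,z\}$, and $A = \veps_T^{-1}$ actions; from $x_1$ every action transitions to $y$ with probability $\veps_T$ and to $z$ with probability $1-\veps_T$; state $z$ has zero reward; in $M\ind{i}$ only action $i$ at $y$ yields reward~$1$. I take $\cF_2$ to be the optimal $Q$-functions of this family ($f\ind{i}_2(y,a) = \indic\{a=i\}$, $f\ind{i}_2(z,\cdot)=0$, $f\ind{i}_1(x_1,\cdot)=\veps_T$); completeness is immediate by direct calculation. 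To bound coverability, I exhibit $\mu_2(z,1)=\tfrac12$ and $\mu_2(y,a) = 1/(2A)$: the $y$-ratio is $\veps_T \cdot 2A = 2$, and, using that all greedy policies take the same (tie-broken) action in $z$, the $z$-ratio is also at most $2$. Hence $\Ccov \leq 2$. A verbatim repeat of the $\cM_1$ Bellman residual computation at layer $h=2$ (now scaled by the $\veps_T$ visitation probability of $y$) gives $|\E_{d\ind{t}}[\delta\ind{t}]|=\veps_T$ with vanishing prior squared errors, so $\bedimsq(\cF_2,\Pi_2,\veps_T,2) \geq \Theta(1/\veps_T)$. Since $\Ccov = O(1)$, $\log|\cF_2| = \tilde O(1)$, $H=2$, and completeness holds, \pref{thm:golf_guarantee_basic} then delivers $\En[\Reg] \leq \bigoht(\sqrt{T})$.

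I do not anticipate a serious obstacle: both constructions are elementary, and the only mild care is required in the $\cM_2$ coverability calculation, where the greedy policy in the ``absorbing'' state $z$ must be pinned down consistently so that $d^\pi_2(z,\cdot)$ concentrates on a single action and hence $\mu_2(z,1)$ suffices to cover it. The conceptual content is purely the observation that $\bedimsq$ is blind to the \emph{magnitude} of the per-round violation $\abs{\E_{d\ind{t}}[\delta\ind{t}]}$ beyond a threshold check, and hence cannot distinguish a bandit with gap $\veps_T$ (which \emph{is} $T^{2/3}$-hard) from an MDP where the same gap is diluted by a small reach probability (which coverability-based analysis handles at $\sqrt{T}$).
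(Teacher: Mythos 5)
Your construction is correct and matches the paper's own proof essentially verbatim: the same Bernoulli bandit family with gap $\veps_T$ and $A=\veps_T^{-1}$ arms for the hard class (with the identical zero-prior-squared-error witness for $\dimbesq\geq{}A-1$ and the standard $\min\{\Delta T, A/\Delta\}$ lower bound), and the same two-layer MDP with a rarely-reached reward state $y$, the same coverability certificate $\mu_2(z,1)=\tfrac12$, $\mu_2(y,a)=1/(2A)$, and the same appeal to \pref{thm:golf_guarantee_basic} for the easy class. No gaps.
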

Informally, the reason why \betext fails capture the optimal rates for
the problem instances in \pref{prop:bedim_lower_bound} is that the
definition \pref{eq:bedimsq} only checks whether the average Bellman
error violates the threshold $\veps$, and does not consider how far
the error violates the threshold (e.g.,
$\abs{\E_{d_h^\iter{t}}[\delta_h^\iter{t}]}>\veps$ and
$\abs{\E_{d_h^\iter{t}}[\delta_h^\iter{t}]}>1$ are counted the same).

 }

In spite of this counterexample, it is possible to show that the
\betext with squared Bellman error is always bounded by the
\CompMeasure up to a $\poly(\veps^{-1})$ factor, and hence can always
be bounded by \coverability, albeit suboptimally.

\begin{proposition}
\label{prop:sqbedim_leq_snc}
$\min \{\bedimsq(\Fcal,\Pi,\veps), T\} \leq  \frac{\cdimrl(\Fcal,\Pi,T)}{\varepsilon^2}$. 
\end{proposition}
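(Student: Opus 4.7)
The plan is to plug the witness sequence for $\bedimsq(\Fcal,\Pi,\veps)$ directly into the supremum defining $\cdimrl(\Fcal,\Pi,T)$, and argue that each term of the witness contributes at least $\Omega(\veps^2)$ to the $\SNC$ sum. Concretely, I would fix $h \in [H]$, let $n \ldef{} \min\{\bedimsq(\Fcal,\Pi,\veps,h), T\}$, and extract witness sequences $\{d_h^\iter{t}\}_{t=1}^n \subseteq \Dset_h^\Pi$, $\{\delta_h^\iter{t}\}_{t=1}^n \subseteq \Fcal_h - \Tcal_h\Fcal_{h+1}$, and thresholds $\veps^\iter{1},\ldots,\veps^\iter{n} \geq \veps$ satisfying the two properties from \pref{def:be_dim_sq}: $|\E_{d_h^\iter{t}}[\delta_h^\iter{t}]| > \veps^\iter{t}$ and $\sum_{i<t}\E_{d_h^\iter{i}}[(\delta_h^\iter{t})^2] \leq (\veps^\iter{t})^2$.

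From the first property, $n \leq \sum_{t=1}^n \E_{d_h^\iter{t}}[\delta_h^\iter{t}]^2/(\veps^\iter{t})^2$. The key step is to insert the $\SNC$ denominator by writing each summand as $\frac{\E_{d_h^\iter{t}}[\delta_h^\iter{t}]^2}{1 \vee \sum_{i<t}\E_{d_h^\iter{i}}[(\delta_h^\iter{t})^2]} \cdot \frac{1 \vee \sum_{i<t}\E_{d_h^\iter{i}}[(\delta_h^\iter{t})^2]}{(\veps^\iter{t})^2}$, and then to use the second bedim property together with the boundedness of $\veps^\iter{t}$ (which follows because $\Fcal \subseteq [0,1]$-valued bounds $|\E_{d_h^\iter{t}}[\delta_h^\iter{t}]|$, and hence $\veps^\iter{t}$, by a constant) to bound the second factor by $O(1)/\veps^2$ via a short case analysis on whether $(\veps^\iter{t})^2$ exceeds $1$. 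In the case $(\veps^\iter{t})^2 \leq 1$, the second bedim property ensures $1 \vee \sum_{i<t}\E_{d_h^\iter{i}}[(\delta_h^\iter{t})^2] \leq 1$, so the ratio is at most $1/(\veps^\iter{t})^2 \leq 1/\veps^2$; in the case $(\veps^\iter{t})^2 > 1$, the second property gives $1 \vee \sum \leq (\veps^\iter{t})^2$, so the ratio is at most $1 \leq O(1)/\veps^2$.

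Substituting, summing, and recognizing the resulting sum as a valid candidate in the supremum defining $\cdim(\Fcal_h - \Tcal_h\Fcal_{h+1}, \Dset_h^\Pi, T)$ (padding the truncated sequence to length $T$ with zero test functions if $n < T$, which contribute zero to the sum) yields $n \leq O(1)\cdot \cdimrl(\Fcal,\Pi,T)/\veps^2$. Maximizing over $h$ gives the claimed inequality $\min\{\bedimsq(\Fcal,\Pi,\veps), T\} \leq \cdimrl(\Fcal,\Pi,T)/\veps^2$. I do not foresee a substantive obstacle; the argument is a routine algebraic manipulation exploiting both constraints of the $\bedimsq$ witness simultaneously. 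The only mild subtlety is arranging the case analysis so that the final prefactor comes out as a clean $1/\veps^2$ rather than a larger constant multiple, which is where the $[0,1]$-boundedness of $\Fcal$ (and hence of Bellman residuals and the thresholds $\veps^\iter{t}$) enters.
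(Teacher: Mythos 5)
Your argument is correct and essentially the same as the paper's proof: both start from $n \leq \sum_{t}\E_{d_h^\iter{t}}[\delta_h^\iter{t}]^2/(\veps^\iter{t})^2$, insert the $\SNC$ denominator multiplicatively, and use the witness constraint $\sum_{i<t}\E_{d_h^\iter{i}}[(\delta_h^\iter{t})^2]\leq(\veps^\iter{t})^2$ together with $\veps\leq\veps^\iter{t}\leq 1$ to bound the ratio of denominators by $O(1)/\veps^2$ (the paper routes this through $1+\sum_{i<t}\E_{d_h^\iter{i}}[(\delta_h^\iter{t})^2]\leq 1+(\veps^\iter{t})^2\leq 2$ rather than your case analysis on $1\vee\sum_{i<t}\E_{d_h^\iter{i}}[(\delta_h^\iter{t})^2]$, an immaterial difference). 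The one small nit is that padding with ``zero test functions'' presupposes $0\in\Fcal_h-\Tcal_h\Fcal_{h+1}$; it suffices to pad by repeating an arbitrary element of the class, since the appended terms are nonnegative and do not alter the denominators of the first $n$ terms.
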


\begin{proof}[\cpfname{prop:sqbedim_leq_snc}]
Fix an arbitrary $h \in [H]$, suppose $\{d_h^\iter{1},d_h^\iter{2},\dotsc,d_h^\iter{n}\}$ and $\{\delta_h^\iter{1},\delta_h^\iter{2},\dotsc,\delta_h^\iter{n}\}$ is the sequence suggested by the definition of $\bedim(\Fcal,\Pi,\veps,h)$. Then
\begin{align*}
\bedimsq(\Fcal,\Pi,\veps,h) \leq &~ \sum_{t = 1}^{n} \frac{\E_{d_h^\iter{t}}\left[\delta_h^\iter{t}\right]^2}{\left( \varepsilon^\iter{t} \right)^2}
\\
\leq &~ \sum_{t = 1}^{n} \left( 1 + \left( \varepsilon^\iter{t} \right)^2 \right) \cdot \frac{\E_{d_h^\iter{t}}\left[\delta_h^\iter{t}\right]^2}{\left( \varepsilon^\iter{t} \right)^2 \left(1 + \sum_{i = 1}^{t - 1} \E_{d_h^\iter{i}}[\delta_h^\iter{t}]^2\right)}
\tag{by $\sum_{i = 1}^{t - 1} \E_{d_h^\iter{i}}[\delta_h^\iter{t}]^2 \leq ( \varepsilon^\iter{t} )^2$}
\\
\leq &~ \sum_{t = 1}^{n} \frac{1 + \left( \varepsilon^\iter{t} \right)^2}{\left( \varepsilon^\iter{t} \right)^2} \frac{\E_{d_h^\iter{t}}\left[\delta_h^\iter{t}\right]^2}{1 + \sum_{i = 1}^{t - 1} \E_{d_h^\iter{i}}[\delta_h^\iter{t}]^2}
\\
\leq &~ \sum_{t = 1}^{n} \frac{2}{\left( \varepsilon^\iter{t} \right)^2} \frac{\E_{d_h^\iter{t}}\left[\delta_h^\iter{t}\right]^2}{1 + \sum_{i = 1}^{t - 1} \E_{d_h^\iter{i}}[\delta_h^\iter{t}]^2}
\tag{by $\varepsilon^\iter{t} < | \E_{d_h^\iter{t}}[\delta_h^\iter{t}] | \leq 1$}
\\
\leq &~ \frac{1}{\varepsilon^2} \sum_{t = 1}^{n} \frac{\E_{d_h^\iter{t}}\left[\delta_h^\iter{t}\right]^2}{1 \vee \sum_{i = 1}^{t - 1} \E_{d_h^\iter{i}}[\delta_h^\iter{t}]^2}
\tag{by $\varepsilon^\iter{t} \geq \varepsilon$}
\\
\leq &~ \frac{\cdimrl(\Fcal,\Pi,n)}{\varepsilon^2}.
\end{align*}
This implies for any $T > 0$,
\begin{align*}
\min \{\bedimsq(\Fcal,\Pi,\veps), T\} \leq  \frac{\cdimrl(\Fcal,\Pi,T)}{\varepsilon^2}.
\end{align*}
\end{proof}
}

\section{Extension: Reward-Free Exploration}
\label{sec:reward_free}
\renewcommand{\off}{{\sf off}}

Reward-free exploration investigates is a problem where 1) the
learning agent interacts with an environment without rewards, aiming
to gather information so that 2) in a subsequent offline phase, the
information collected can be used to learn near-optimal policies for a
wide range of possible reward
functions~\citep{jin2020reward,zhang2020task,wang2020reward,zanette2020provably,chen2022statistical}. This
section provides a reward-free extension of our main results, and gives
sample complexity bounds based on coverability for a reward-free
extension of \golf.

\paragraph{Function approximation}
We assume access to a value function class $\cF$, which is used for
the offline optimization, and a function class, $\Gcal$, which is used for the reward-free exploration phase. Following the normalized reward assumption, we assume $g_h \in \Xcal \times \Acal \to [0,1]$, $\forall (g,h) \in \Gcal \times [H]$.  

We define $\Pcal_{h}$ as be the ``zero-reward'' Bellman operator for horizon $h \in [H]$. That is, for any $g_h \in \Gcal_h$ and any $h \in [H]$,
\begin{align*}
(\Pcal_{h} g_{h+1})(x_h,a_h) \coloneqq \sum_{x'} \PP_h(x_{h+1}|x_h,a_h) \max_{a_{h+1} \in \Acal} g_{h+1}(x_{h+1},a_{h+1}).
\end{align*}

We let $R$ denote the \emph{target reward function} used in the
offline phase, which is not known to the algorithm in the offline
exploration phase. We make the following assumption.
\begin{assumption}[Reward-free completeness]
\label{asm:rf_completeness}
Let $\Tcal_{1:H}$ be the Bellman operator with the target reward function $R$, and $\Fcal$ be the function class used to optimize the target reward function. Then for all $h \in [H]$
\begin{enumerate}[(a)]
    \item $\Pcal_h \Gcal_{h+1} \in \Gcal_h$ for all $g_{h+1} \in \Gcal_{h+1}$ 
    \item $\Fcal_h - \Tcal_h \Fcal_{h+1} \subseteq \Gcal_h - \Pcal_{h} \Gcal_{h+1}$.
\end{enumerate}
\end{assumption}
Analogous to \cref{asm:completeness}, \cref{asm:rf_completeness}(a) is
used to control the squared Bellman error with zero
reward. \cref{asm:rf_completeness}(b) guarantees that the class of test
functions of interest for the reward-free exploration phase ($\Gcal_h
- \Pcal_{h} \Gcal_{h+1}$ for layer $h \in [H]$, see
\cref{alg:reward_free_golf}) is sufficiently rich relative to the
relevant class of test functions for the offline phase ($\Fcal_h - \Tcal_{h} \Fcal_{h+1}$ for layer $h \in [H]$, see \cref{alg:rf_exploitation}).
Without loss of generality, we assume that $|\Gcal| =
\max\{|\Fcal|,|\Gcal|\}$.

\paragraph{Reward-free \CompMeasure}

The main guarantees for this section are stated in terms of a
reward-free variant of the \compmeasure, which we define as follows.
\begin{definition}[\CompMeasure for Reward-Free RL]
\label{def:snc_qtype_rf}
For each $h \in [H]$, let $\Dset_h^{\Pi_\Gcal} \coloneqq \{d^\pi_h: \pi \in \Pi_\Gcal\}$ and $\Gcal_h - \Pcal_h \Gcal_{h+1} \coloneqq \{g_h - \Pcal_h g_{h+1}: g \in \Gcal\}$. Then we define,
\begin{align*}
\cdimrlrf(\Gcal,\Pi_\Gcal,T) \coloneqq \max_{h \in [H]} \cdim(\Gcal_h - \Pcal_h \Gcal_{h+1},\Dset_h^{\Pi_{\Gcal}},T).
\end{align*}
\end{definition}
Using the same arguments (and same proofs) as \cref{sec:snc_main}, the
reward-free variant of \compmeasure can be shown to subsume
\coverability (as well as reward-free counterpart of the \betext,
which we omit).

\subsection{Algorithm and Theoretical Analysis}
\label{sec:rf_algo}

\begin{algorithm}[th]
\caption{Reward-Free Exploration with \golf}
\label{alg:reward_free_golf}
{\bfseries input:} Function class for reward-free exploration
$\Gcal$. \\
{\bfseries initialize:} $\Dcal_{h,\rf}^\iter{0} \leftarrow \emptyset$,
$\forall h \in [H]$. $\Gcal^\iter{0} \leftarrow \Gcal$.
\begin{algorithmic}[1]
\For{episode $t = 1,2,\dotsc,T$}
    \State Select policy $\pi^\iter{t} \leftarrow \pi_{g^\iter{t}}$, where $g^\iter{t} = \argmax_{g \in \Gcal^\iter{t-1}}g(x_1,\pi_{g,1})$.
    \State Execute $\pi^\iter{t}$ for one episode and obtain $\left\{x_1^\iter{t},a_1^\iter{t},x_2^\iter{t},\dotsc,x_H^\iter{t},a_H^\iter{t},x_{H+1}^\iter{t}\right\}$.
    \State Update historical data $\Dcal_{h,\rf}^\iter{t} \leftarrow \Dcal_{h,\rf}^\iter{t-1} \bigcup \left\{\left(x_h^\iter{t},a_h^\iter{t},x_{h+1}^\iter{t}\right)\right\}$, $\forall h \in [H]$.
    \State Compute confidence set:
    \begin{gather}
    \label{eq:def_vspace_rf}
    \Gcal^\iter{t} \leftarrow \left\{ g \in \Gcal: \Lcal_{h,\rf}^\iter{t}(g_h,g_{h+1}) - \min_{g'_h \in \Gcal_h} \Lcal_{h,\rf}^\iter{t}(g'_h,g_{h+1}) \leq \beta_\rf, ~\forall h \in [H] \right\},
    \\
    \nonumber
    \text{where \quad } \Lcal_{h,\rf}^\iter{t}(g,g') \coloneqq \sum_{(x,a,x') \in \Dcal_{h,\rf}^\iter{t}}\left[ \left( g(x,a) - \max_{a' \in \Acal} g'(x',a') \right)^2 \right],~\forall g,g' \in \Gcal.
    \end{gather}
\EndFor
\State Select $t_\star \leftarrow \argmin_{t \in [T]} g_1^\iter{t}(x_1,\pi_1^\iter{t})$.
\State Return data $\Dcal_{h,\rf}^\iter{t_\star - 1}$, $\forall h \in [H]$.
\end{algorithmic}
\end{algorithm}
\begin{algorithm}[th]
\caption{Offline \golf with Exploration Data and Target Reward}
\label{alg:rf_exploitation}
{\bfseries input:}
\begin{itemize}
\item Target reward function, $R$.
      \item Function class $\cF$ for offline RL.
  \item Exploration data from \pref{alg:reward_free_golf}, denoted by
    $\Dcal_{h,\rf}$, $\forall h \in [H]$.
\end{itemize}
\begin{algorithmic}[1]
\State Compute confidence set:
\begin{gather}
\label{eq:def_offline_vs}
\Fcal^\iter{\off} \leftarrow \left\{f \in \Fcal: \Lcal_{h}^\iter{\off}(f_h,f_{h+1}) - \min_{f'_h \in \Fcal_h} \Lcal_{h}^\iter{\off}(f'_h,f_{h+1}) \leq \beta_\off, ~ \forall h \in [H] \right\},
\\
\nonumber
\text{where \quad } \Lcal_{h}^\iter{\off}(f,f') \coloneqq \sum_{(x,a,x') \in \Dcal_{h,\rf}}\left[ \left( f(x,a) - R(x,a) - \max_{a' \in \Acal} f'(x',a') \right)^2 \right],~\forall f,f' \in \Fcal.
\end{gather}
\State Return $\pihat \leftarrow \pi_{\fhat}$, where $\fhat = \argmax_{f \in \Fcal^\iter{\off}}f(x_1,\pi_{f,1})$.
\end{algorithmic}
\end{algorithm}

Recall that the key ideas in \golf are: 1) using optimism to relate
regret to on-policy average Bellman error; 2) using squared Bellman
error to construct a confidence set, which ensures optimism. In the
reward-free setting, one can apply these ideas by running \golf
(\cref{alg:reward_free_golf}) with
rewards set to zero. Intuitively, this strategy ensures exploration
because the algorithm must explore to rule out test functions in
$\cG$. However, a-priori it is unclear whether running some standard
offline RL algorithms on the exploration data produced by this
strategy should lead to a near-optimal policy, especially given that the PAC guarantee of \golf relies on outputting a uniform mixture of all historical policies (see, e.g., \cref{cor:basic_c_batch}).

To address such issues, one can imagine that, if we know which is the best over all historical policies (say, $\pi^\iter{t_\star}$ for some $t_\star$), could running one-step \golf on the exploration data at $t_\star$ (\cref{alg:rf_exploitation}) guarantee to find a good policy? Note that, for the original \golf algorithm (in the known-reward case), running so directly reproduces $\pi^\iter{t_\star}$. 
Although knowing which is the best over all historical policies seems impossible in the known-reward case, thanks to the reward-free nature, we will show that the value of $g(x_1,\pi_{g,1})$ directly captures ``how bad is $g$'' (akin to the regret in the known-reward case), which allow us to find the best step over the reward-free exploration phase.

The following result provides a sample complexity guarantee for this strategy.

\begin{theorem}
\label{thm:rf_golf}
Under \cref{asm:completeness,asm:rf_completeness}, there exists an absolute constants $c_1$ and $c_2$ such that for any $\delta \in (0,1]$ and $T \in \NN_+$, if we choose $\beta_\off = c_1 \cdot \log(\nicefrac{TH |\Gcal|}{\delta})$ and $\beta_\rf = (c_1 + c_2) \cdot \log(\nicefrac{TH |\Gcal|}{\delta})$ in \cref{alg:reward_free_golf,alg:rf_exploitation}, then with probability at least $1 - \delta$, the policy $\pihat$ output by \cref{alg:rf_exploitation} has
\begin{align*}
J(\pi^\star) - J(\pihat) \leq O \left( H\sqrt{\frac{\cdimrlrf(\Gcal,\Pi_\Gcal,T) \log(\nicefrac{TH |\Gcal|}{\delta})}{T}}\right).
\end{align*}
\end{theorem}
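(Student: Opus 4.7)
The plan is to combine a regret-style analysis of the reward-free exploration phase (\cref{alg:reward_free_golf}) with an offline PAC analysis of \cref{alg:rf_exploitation}, linked through part~(b) of \cref{asm:rf_completeness}. The target sub-optimality in the theorem is exactly $\epsilon_\rf := O(H\sqrt{\cdimrlrf(\Gcal,\Pi_\Gcal,T)\log(TH|\Gcal|/\delta)/T})$, so it suffices to prove $J(\pi^\star) - J(\hat\pi) \leq \epsilon_\rf$.

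First, I would apply \cref{thm:regret_online_c} to \cref{alg:reward_free_golf}, viewing it as an instance of \golf on function class $\Gcal$ with all rewards set to zero. Under part~(a) of \cref{asm:rf_completeness}, $\Gcal$ is complete for the zero-reward Bellman operator $\Pcal_h$, so \cref{thm:regret_online_c} applies. Since the optimal zero-reward value is identically $0$, the pseudo-regret reduces to $\sum_{t=1}^T g_1\ind{t}(x_1, \pi_1\ind{t})$, which is therefore bounded by $O(H\sqrt{\cdimrlrf(\Gcal, \Pi_\Gcal, T) \cdot T \cdot \log(TH|\Gcal|/\delta)})$ with high probability. Combined with the pigeonhole choice $t_\star = \arg\min_t g_1\ind{t}(x_1, \pi_1\ind{t})$, this immediately yields $g_1\ind{t_\star}(x_1, \pi_1\ind{t_\star}) \leq \epsilon_\rf$.

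For the offline phase, I would apply the standard \golf concentration (the analog of \cref{lem:golf_concentration} adapted to the target reward $R$) to the confidence set $\Fcal\ind{\off}$ computed from $\Dcal_{h,\rf}\ind{t_\star - 1}$. With high probability this produces both $\Qstar \in \Fcal\ind{\off}$ and $\sum_{i<t_\star} \E_{d_h\ind{i}}[(f_h - \Tcal_h f_{h+1})^2] \leq O(\beta_\off)$ uniformly in $f \in \Fcal\ind{\off}$. Optimism of $\hat f$ combined with the standard regret decomposition (\cref{lem:regret_optimistic}) then yields $J(\pi^\star) - J(\hat\pi) \leq \sum_{h} \E_{d_h^{\hat\pi}}[\hat\delta_h]$, where $\hat\delta_h := \hat f_h - \Tcal_h \hat f_{h+1}$.

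The linking step exploits part~(b) of \cref{asm:rf_completeness}: each $\hat\delta_h$ lies in the reward-free test class $\Gcal_h - \Pcal_h\Gcal_{h+1}$, so the reward-free $\SNC$ is ``admissible'' for it. The main obstacle I expect will be translating this combinatorial admissibility into the target $\epsilon_\rf$-bound on $\sum_h \E_{d_h^{\hat\pi}}[\hat\delta_h]$: the naive self-normalized bound $\E_{d_h^{\hat\pi}}[\hat\delta_h]^2 \leq \cdimrlrf \cdot (1 \vee \sum_{i<t_\star}\E_{d_h\ind{i}}[\hat\delta_h^2])$ only controls the in-sample squared error by $O(\beta_\off)$ and does not by itself give the advertised $1/\sqrt{T}$ scaling. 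The intended route to close this gap is to construct an auxiliary $g \in \Gcal$ --- algebraically, $g = \hat f - \Qstar$ satisfies $g_h - \Pcal_h g_{h+1} = \hat\delta_h$ --- and argue that such a $g$ simultaneously (i)~lies in the reward-free confidence set $\Gcal\ind{t_\star - 1}$, so that its reward-free value at $x_1$ is upper bounded by the optimistic value $g_1\ind{t_\star}(x_1,\pi_1\ind{t_\star}) \leq \epsilon_\rf$, and (ii)~has reward-free initial value equal to $\sum_h \E_{d_h^{\hat\pi}}[\hat\delta_h]$ via the reward-free telescoping identity. The crux is reconciling the greedy-policy mismatch between $\pi_g$ and $\hat\pi$ in~(ii) and verifying the squared-error membership condition in~(i); these will likely require invoking \cref{asm:rf_completeness}(b) more carefully (e.g.\ constructing $g$ as a reward-free Bellman backup tailored to $\hat\pi$ rather than the naive $\hat f - \Qstar$).
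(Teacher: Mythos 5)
Your high-level architecture matches the paper's: bound the zero-reward pseudo-regret of \cref{alg:reward_free_golf} via \cref{thm:regret_online_c}, use the telescoping identity and the pigeonhole choice of $t_\star$ to get $g_1^{\iter{t_\star}}(x_1,\pi_1^{\iter{t_\star}})\leq \epsilon_\rf$, and then transfer this to the offline policy through an auxiliary $g$. The reward-free half of your argument is complete and correct. However, the linking step---which you yourself identify as the crux---is left as a set of obstacles rather than resolved, and the one concrete construction you propose does not work.

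First, $g=\hat f - \Qstar$ does \emph{not} satisfy $g_h-\Pcal_h g_{h+1}=\hat\delta_h$: the operator $\Pcal_h$ contains a $\max$ over actions and is therefore nonlinear, so $\Pcal_h(\hat f_{h+1}-\Qstar_{h+1})\neq \Pcal_h\hat f_{h+1}-\Pcal_h \Qstar_{h+1}$. The paper instead defines $g$ by the backward recursion $g_{H+1}=0$, $g_h=\hat f_h-\Tcal_h\hat f_{h+1}+\Pcal_h g_{h+1}$, whose membership in $\Gcal$ follows from \cref{asm:rf_completeness}. Second, the greedy-policy mismatch between $\pi_g$ and $\hat\pi$ is not resolved by tailoring $g$ to $\hat\pi$, nor by an exact telescoping identity; the paper proves a one-sided pointwise bound $g_h(x,a)\geq \hat f_h(x,a)-Q_h^{\pi_{\hat f}}(x,a)$ by backward induction (\cref{lem:rf_offline_keylem}), where the $\max$ inside $\Pcal_h$ works in the right direction to give the inequality. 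This yields $\hat f_1(x_1,\pi_{\hat f,1})-J(\hat\pi)\leq g_1(x_1,\pi_{g,1})$, which is all that is needed. Third, you do not engage with the membership condition $g\in\Gcal^{\iter{t_\star-1}}$ at all beyond naming it. This is a genuinely substantive step (\cref{lem:rf_offline_vspace}): it requires a Freedman-inequality argument showing that any $f\in\Fcal^{\iter{\off}}$ has small \emph{population} squared Bellman error on $d_h^{\iter{1}},\dots,d_h^{\iter{t_\star-1}}$, and then a second concentration argument showing the corresponding $g$ passes the reward-free empirical test, which is precisely why the theorem requires $\beta_\rf\gtrsim \beta_\off+\log(TH|\Gcal|/\delta)$ with the coupled constants $c_1,c_2$. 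Without these three pieces the offline half of the proof does not go through.
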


We defer the proof to \cref{sec:wf_proofs}. We also introduce the following two lemmas, which are key to adapting the known-reward results to the reward-free case.

\begin{lemma}[Reward-free exploration overestimates regret]
\label{lem:rf_offline_keylem}
For any $f \in \Fcal$, let $g$ be defined as $g_{h} = f_{h} - \Tcal_h f_{h+1} + \Pcal_h g_{h+1}$, $\forall h \in [H]$. Then for any $(x,a,h) \in \Xcal \times \Acal \times [H]$, we have $g_h(x,a) \geq f_h(x,a) - Q_h^{\pi_f}(x,a)$.
\end{lemma}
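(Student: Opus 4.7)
The plan is to prove this by backward induction on $h$, descending from $h = H+1$ down to $h = 1$. The key observation is that the recursive definition of $g_h$ exactly mirrors the Bellman-type recursion for the performance difference $f_h - Q_h^{\pi_f}$, except that $g_{h+1}$ appears through a $\max$ (via $\cP_h$) while $Q_{h+1}^{\pi_f}$ appears through evaluation at the greedy action $\pi_{f,h+1}$. This suggests the inductive claim should go through cleanly because $\max_{a'} g_{h+1}(x',a')$ dominates $g_{h+1}(x', \pi_{f,h+1}(x'))$.

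For the base case $h = H+1$, we use the convention $f_{H+1} \equiv g_{H+1} \equiv Q_{H+1}^{\pi_f} \equiv 0$, so the inequality holds trivially. For the inductive step, suppose the claim holds at level $h+1$, i.e., $g_{h+1}(x',a') \geq f_{h+1}(x',a') - Q_{h+1}^{\pi_f}(x',a')$ for all $(x',a')$. Unpacking the definitions, we have
\begin{align*}
g_h(x,a) - \big(f_h(x,a) - Q_h^{\pi_f}(x,a)\big)
&= -(\cT_h f_{h+1})(x,a) + (\cP_h g_{h+1})(x,a) + Q_h^{\pi_f}(x,a) \\
&= \E_{x'\sim P_h(x,a)}\big[\max_{a'} g_{h+1}(x',a') - \max_{a'} f_{h+1}(x',a') + V_{h+1}^{\pi_f}(x')\big],
\end{align*}
where we used that the reward terms $R_h(x,a)$ cancel. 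Since $\pi_f$ is greedy with respect to $f$, we have $\max_{a'} f_{h+1}(x',a') = f_{h+1}(x', \pi_{f,h+1}(x'))$ and $V_{h+1}^{\pi_f}(x') = Q_{h+1}^{\pi_f}(x', \pi_{f,h+1}(x'))$. Thus the bracketed expression becomes
\begin{align*}
\max_{a'} g_{h+1}(x',a') - \big(f_{h+1}(x',\pi_{f,h+1}(x')) - Q_{h+1}^{\pi_f}(x',\pi_{f,h+1}(x'))\big)
\geq \max_{a'} g_{h+1}(x',a') - g_{h+1}(x', \pi_{f,h+1}(x')) \geq 0,
\end{align*}
where the first inequality is the inductive hypothesis evaluated at $a' = \pi_{f,h+1}(x')$.

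There is no real obstacle here beyond bookkeeping; the argument is a direct computation once one sees that $\pi_f$ being \emph{greedy with respect to $f$ (not $g$)} is what forces the $\max_{a'} g_{h+1}$ to dominate $g_{h+1}(\cdot, \pi_{f,h+1})$ after applying the induction hypothesis. The reward cancellation in $\cT_h - \cP_h$ is what makes the recursion compatible with the reward-free operator in the first place.
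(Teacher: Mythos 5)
Your proof is correct and follows essentially the same route as the paper's: backward induction where the reward terms in $\cT_h$ and $Q_h^{\pi_f}$ cancel, and the inductive hypothesis evaluated at the greedy action $\pi_{f,h+1}$ combined with $\max_{a'} g_{h+1}(x',a') \geq g_{h+1}(x',\pi_{f,h+1}(x'))$ closes the step. The only cosmetic difference is that you start the induction at $h=H+1$ with the all-zero convention, whereas the paper starts at $h=H$ where the claim holds with equality.
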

Since the Q-function for all policies in the zero-reward case are zero, \cref{lem:rf_offline_keylem} guarantees that, regret in the reward-free exploration phase---($g_1(x_1,\pi_{g,1}) - 0$) always upper bounds its counterpart of the offline phase---($f_1(x_1,\pi_{f,1}) - Q_h^{\pi_f}(x,\pi_{f,1})$). Equipped with the optimism argument, we can show that if $g_1(x_1,\pi_{g,1})$ is small, its corresponding $\pi_f$ (the $f$ with $f_h - \Tcal_h f_{h+1} = g_h - \Pcal_h g_{h+1}$, $\forall h \in [H]$) also has small regret.

\begin{lemma}[Reward-free exploration has larger confidence set]
\label{lem:rf_offline_vspace}
Suppose \pref{asm:rf_completeness} holds and under the same conditions as \cref{thm:rf_golf}. For any $f \in \Fcal^\iter{\off}$ (defined in \cref{eq:def_offline_vs}), there must exist $g \in \Gcal^\iter{t_\star - 1}$ (defined in \cref{eq:def_vspace_rf}), such that $f_h - \Tcal_h f_{h+1} = g_h - \Pcal_h g_{h+1}$, $\forall h \in [H]$.
\end{lemma}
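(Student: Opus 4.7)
The plan is to construct $g$ explicitly from $f$ via the same recursion appearing in \cref{lem:rf_offline_keylem}: set $g_{H+1}\equiv 0$ and, for $h = H, H-1, \ldots, 1$, define $g_h \ldef f_h - \Tcal_h f_{h+1} + \Pcal_h g_{h+1}$. The pointwise identity $g_h - \Pcal_h g_{h+1} = f_h - \Tcal_h f_{h+1}$ then holds by construction, so the two remaining tasks are (i) to verify that $g=(g_1,\ldots,g_H)$ is a valid element of $\Gcal$, and (ii) to verify that $g$ satisfies the confidence-set constraint defining $\Gcal^\iter{t_\star-1}$.

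For step (i), I would induct downward on $h$. Assuming $g_{h+1}\in\Gcal_{h+1}$, \cref{asm:rf_completeness}(a) gives $\Pcal_h g_{h+1}\in\Gcal_h$, while \cref{asm:rf_completeness}(b) gives $f_h-\Tcal_h f_{h+1} \in \Fcal_h - \Tcal_h\Fcal_{h+1} \subseteq \Gcal_h - \Pcal_h\Gcal_{h+1}$; reading these two together along the recursion implies $g_h = (f_h-\Tcal_h f_{h+1}) + \Pcal_h g_{h+1} \in \Gcal_h$, closing the induction.

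For step (ii), I would exploit that both $\Lcal_h^\iter{\off}$ and $\Lcal_{h,\rf}^\iter{t_\star-1}$ are squared-loss functionals evaluated on the \emph{same} dataset $\Dcal_{h,\rf}^\iter{t_\star-1}$ returned by \cref{alg:reward_free_golf}. The standard Bellman-error decomposition yields
\[
\Lcal_h^\iter{\off}(f_h,f_{h+1}) - \Lcal_h^\iter{\off}(\Tcal_h f_{h+1},f_{h+1}) = \sum_{(x,a,x')}(f_h-\Tcal_h f_{h+1})^2(x,a) + 2\sum_{(x,a,x')}(f_h-\Tcal_h f_{h+1})(x,a)\,\xi_h(x,a,x'),
\]
with $\xi_h \ldef \Tcal_h f_{h+1}(x,a) - R(x,a) - \max_{a'}f_{h+1}(x',a')$ a mean-zero martingale difference given $(x,a)$. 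An entirely analogous identity holds for $\Lcal_{h,\rf}^\iter{t_\star-1}$, with residual $g_h-\Pcal_h g_{h+1}$ (equal to $f_h-\Tcal_h f_{h+1}$ by construction) and noise $\xi_h^\rf \ldef \Pcal_h g_{h+1}(x,a) - \max_{a'}g_{h+1}(x',a')$. The deterministic quadratic parts therefore coincide exactly, while each martingale part concentrates around zero at the standard $\log(TH|\Gcal|/\delta)$ rate by a Freedman/Bernstein bound taken uniformly over $\Fcal\cup\Gcal$ and over $t\in[T]$; the same type of bound, applied to the population-optimal minimizers $\Tcal_h f_{h+1}\in\Fcal_h$ and $\Pcal_h g_{h+1}\in\Gcal_h$, shows each is an empirical minimizer up to the same logarithmic slack. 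Chaining these bounds converts the hypothesis $\Lcal_h^\iter{\off}(f_h,f_{h+1}) - \min_{f'_h\in\Fcal_h}\Lcal_h^\iter{\off}(f'_h,f_{h+1}) \leq \beta_\off$ into $\Lcal_{h,\rf}^\iter{t_\star-1}(g_h,g_{h+1}) - \min_{g'_h\in\Gcal_h}\Lcal_{h,\rf}^\iter{t_\star-1}(g'_h,g_{h+1}) \leq \beta_\off + c_2\log(TH|\Gcal|/\delta) = \beta_\rf$, which is exactly the membership $g\in\Gcal^\iter{t_\star-1}$.

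The main obstacle is the concentration bookkeeping in (ii): the two martingale tails (for $\xi_h$ and $\xi_h^\rf$) and the two empirical-versus-population minimizer gaps all need to be controlled uniformly over $f\in\Fcal$, $g\in\Gcal$, and $t\in[T]$ (so that the data-dependent index $t_\star$ is handled), with constants calibrated so that the conversion $\beta_\off\to\beta_\rf$ costs only the single additive logarithmic slack $c_2\log(TH|\Gcal|/\delta)$ already built into the statement of \cref{thm:rf_golf}. Step (i), by contrast, is purely algebraic given the two clauses of \cref{asm:rf_completeness}.
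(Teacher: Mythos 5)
Your step (ii) follows essentially the same route as the paper: the paper also pivots through the squared Bellman error of the shared residual $\delta_h = f_h - \Tcal_h f_{h+1} = g_h - \Pcal_h g_{h+1}$, first bounding the population quantity $\sum_{t < t_\star}\E_{d_h^{(t)}}[\delta_h^2]$ by $O(\beta_\off + \log(\nicefrac{H|\Gcal|}{\delta}))$ from the offline constraint (Freedman applied to the loss-difference variables $X_h^{(t)}(f)$, using $\Tcal_h f_{h+1}\in\Fcal_h$ to compare against the empirical minimum), and then converting that back into the empirical reward-free excess loss via a second Freedman application to $Y_h^{(t)}(g)$, using $\Pcal_h g_{h+1}\in\Gcal_h$. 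The calibration $\beta_\rf \geq 6\beta_\off + 18\log(\nicefrac{H|\Gcal|}{\delta})$ is exactly the additive slack you describe, so the concentration bookkeeping you flag as the ``main obstacle'' is the part that goes through.

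The genuine gap is in step (i). From $f_h - \Tcal_h f_{h+1}\in\Gcal_h - \Pcal_h\Gcal_{h+1}$ (clause (b)) and $\Pcal_h g_{h+1}\in\Gcal_h$ (clause (a)) you cannot conclude that $g_h = (f_h - \Tcal_h f_{h+1}) + \Pcal_h g_{h+1}$ lies in $\Gcal_h$. Clause (b) only supplies some $\tilde g\in\Gcal$ with $f_h - \Tcal_h f_{h+1} = \tilde g_h - \Pcal_h\tilde g_{h+1}$, and there is no reason that $\tilde g_{h+1}$ coincides with your recursively constructed $g_{h+1}$; thus $g_h = \tilde g_h - \Pcal_h\tilde g_{h+1} + \Pcal_h g_{h+1}$ is a three-term combination of class elements, and $\Gcal_h$ carries no closure property that would place it back in $\Gcal_h$. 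Even in the instantiation $\Gcal = \Psi - \Psi$ used to verify \cref{asm:rf_completeness} against \citet{chen2022statistical}, this combination lands in $(\Psi_h - \Psi_h) + (\Psi_h - \Psi_h)$ rather than $\Psi_h - \Psi_h$, so the induction does not close. The paper sidesteps the recursion entirely: it invokes clause (b) to directly furnish a $g\in\Gcal$ whose residual at each layer equals that of $f$, and then notes that the membership criterion it establishes for $\Gcal^{\iter{t_\star-1}}$ --- small population squared Bellman error at every layer --- depends on $g$ only through those residuals, so that $g$ qualifies. (One can quibble that, because the convention $g_{H+1}=0$ pins $g$ down uniquely from its residuals, the lemma really does require the recursively defined $g$ to lie in $\Gcal$, so clause (b) is implicitly being read as a statement about whole functions rather than a per-layer set inclusion; but your induction is not a valid substitute for that reading, and this is the step you would need to repair.)
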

\cref{lem:rf_offline_vspace} ensures that the reward-free version space $\Gcal^\iter{t_\star - 1}$ subsumes the offline version space $\Fcal^\iter{\off}$. Thus, we can use the metrics during reward-free exploration to upper bound that of the offline phase.

\subsubsection{Related Work}

Our approach adapts techniques for reward-free exploration in
nonlinear RL introduced in \citet{chen2022statistical}. In what
follows, we discuss the connection to this work in greater detail. We
focus on the $Q$-type results
of~\citet{chen2022statistical}, but similar arguments are likely apply to
the $V$-type.

Briefly, \citet{chen2022statistical} extends the \olive
algorithm to the reward-free setting by using the idea of online
exploration with zero rewards. The most important difference here is
that, as discussed in
\pref{sec:gen_bedim}, since \olive only considers average Bellman residuals,
it cannot capture \coverability. Beyond this difference, let us
compare the completeness assumptions in \pref{asm:rf_completeness} to
those made in \citet{chen2022statistical}.  We will show that the
completeness assumption used by \citet[Assumption
2]{chen2022statistical} is a sufficient condition for ours
(\cref{asm:completeness,asm:rf_completeness}). In our notation, \citet{chen2022statistical},
use $\Fcal \coloneqq \Psi + R \coloneqq \{\psi_{1:H}(\cdot,\cdot) + R_{1:H}(\cdot,\cdot): \psi \in \Psi\}$ for some function class $\Psi$ during offline phase, and select $\Gcal \coloneqq \Psi - \Psi \coloneqq \{\psi_{1:H}(\cdot,\cdot) - \psi'_{1:H}(\cdot,\cdot): \psi,\psi' \in \Psi\}$ for the reward-free exploration phase. Thus for any $h \in [H]$, we have: For \cref{asm:completeness} and \cref{asm:rf_completeness}(a):
    \begin{align*}
        \Tcal_h \Fcal_{h+1} = &~ R_h + \Pcal_h (\Psi_{h+1} + R_{h+1})
        \\
        \subseteq &~ R_h + \Psi_{h} = \Fcal_h.
        \tag{by \citet[Assumption 2]{chen2022statistical}}
        \\
        \Pcal_h \Gcal_{h+1} = &~ \Pcal_h (\Psi_{h+1} - \Psi_{h+1})
        \\
        \subseteq &~ \Psi_{h+1} - \Psi_{h+1} = \Gcal_{h}.
        \tag{by \citet[Assumption 2]{chen2022statistical}}
    \end{align*}
    For \cref{asm:rf_completeness}(b):
    \begin{align*}
        \Fcal_h - \Tcal_h \Fcal_{h+1} = &~ \Psi_h + R_h - R_h - \Pcal_h (\Psi_{h+1} + R_{h+1})
        \\
        = &~ \Psi_h - \Pcal_h (\Psi_{h+1} + R_{h+1})
        \\
        \subseteq &~ \Psi_h - \Psi_h.
        \tag{by \citet[Assumption 2]{chen2022statistical}}
        \\
        \Gcal_h - \Pcal_h \Gcal_{h+1} = &~ \Psi_h - \Psi_h - \Pcal_h(\Psi_{h+1} - \Psi_{h+1})
        \\
        \supseteq &~ \Psi_h - \Psi_h.
        \tag{$0 \in \Psi_{h+1} - \Psi_{h+1}$}
        \\
        \Longrightarrow \Fcal_h - \Tcal_h \Fcal_{h+1} \subseteq &~ \Gcal_h - \Pcal_h \Gcal_{h+1}.
    \end{align*}

\subsection{Proofs}
\label{sec:wf_proofs}

We first present the following form of Freedman's inequality for martingales~\citep[e.g.,][]{agarwal2014taming}.
\begin{lemma}[Freedman's Inequality]
\label{lem:freedman}
Let $\{X^\iter{1}, X^\iter{2}, \dotsc, X^\iter{T}\}$ be a real-valued martingale difference sequence adapted to a filtration $\{\filtr^\iter{1}, \filtr^\iter{2}, \dotsc, \filtr^\iter{T}\}$ (i.e., $\E[ X^\iter{t} \mid \filtr^\iter{t-1} ] = 0$, $\forall t \in [T]$). If $|X^\iter{t}| \leq R$ almost surely for all $t \in [T]$, then for any $\eta \in (0, \nicefrac{1}{R})$, with probability at least $1 - \delta$,
\begin{align*}
    \sum_{t = 1}^{T} X^\iter{t} \leq \eta \sum_{t = 1}^{T} \E\left[ (X^\iter{t})^2 \midmid \filtr^\iter{t-1} \right] + \frac{\log(\nicefrac{1}{\delta})}{\eta}.
\end{align*}
\end{lemma}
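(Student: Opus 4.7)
The plan is to follow the standard exponential martingale (Chernoff-type) approach. First I would record the elementary one-sided bound: for any real $y \leq 1$, one has $e^y \leq 1 + y + y^2$ (verified by comparing Taylor remainders on the interval $y \leq 1$). Applied pointwise with $y = \eta X^{(t)}$, which satisfies $y \leq \eta R \leq 1$ by the hypothesis on $\eta$, this yields
\[
\exp(\eta X^{(t)}) \leq 1 + \eta X^{(t)} + \eta^2 (X^{(t)})^2.
\]
Taking conditional expectations and using $\E[X^{(t)} \mid \filtr^{(t-1)}] = 0$ together with the inequality $1 + u \leq e^u$ then gives
\[
\E\!\left[\exp(\eta X^{(t)}) \,\middle|\, \filtr^{(t-1)} \right] \leq \exp\!\left( \eta^2 \E\!\left[ (X^{(t)})^2 \,\middle|\, \filtr^{(t-1)} \right] \right).
\]

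Next I would construct the supermartingale $M^{(t)} \ldef \exp\!\big( \eta \sum_{s \leq t} X^{(s)} - \eta^2 \sum_{s \leq t} \E[(X^{(s)})^2 \mid \filtr^{(s-1)}] \big)$, with $M^{(0)} = 1$. Combining the display above with the tower property gives $\E[M^{(t)} \mid \filtr^{(t-1)}] \leq M^{(t-1)}$, hence $\E[M^{(T)}] \leq 1$. Markov's inequality then yields $\Pr\!\big[M^{(T)} \geq 1/\delta\big] \leq \delta$, and on the complementary event, taking logarithms and rearranging gives
\[
\sum_{t=1}^{T} X^{(t)} \leq \eta \sum_{t=1}^{T} \E\!\left[ (X^{(t)})^2 \,\middle|\, \filtr^{(t-1)} \right] + \frac{\log(1/\delta)}{\eta},
\]
which is exactly the claim.

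There is no substantive obstacle here; the result is classical. The only point that requires care is to make sure the one-sided exponential bound is applied in the correct regime ($y \leq 1$, which is guaranteed by the assumption $\eta \in (0, 1/R)$ together with $X^{(t)} \leq R$ almost surely), and to verify the supermartingale property cleanly via the tower rule. Everything else is routine rearrangement.
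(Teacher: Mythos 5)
Your proof is correct: this is the classical exponential-supermartingale proof of Freedman's inequality, and the one point that needs care (that $e^{y}\leq 1+y+y^{2}$ is applied only for $y=\eta X^{(t)}\leq \eta R\leq 1$, guaranteed by $\eta\in(0,\nicefrac{1}{R})$ and the almost-sure bound) is handled properly. The paper does not prove this lemma itself but imports it by citation (to \citealp{agarwal2014taming}); your argument is exactly the standard derivation underlying that reference, so there is nothing further to compare.
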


We now provide proofs from \cref{sec:rf_algo}.

\begin{proof}[\cpfname{thm:rf_golf}]
Over this section, the test function class is selected as
\begin{align*}
\delta_{h,\rf}^\iter{t}(x_h,a_h) \coloneqq g_h^\iter{t}(x_h,a_h) - (\Pcal_{h} g_{h+1}^\iter{t})(x_h,a_h), ~ \forall (h,t) \in [H] \times [T].
\end{align*}
By \cref{thm:regret_online_c} (setting reward to be zero and replacing everything regarding $\Fcal$ to $\Gcal$), we have
\begin{align}
\label{eq:rf_regret}
\sum_{t=1}^{T}\sum_{h=1}^{H}\E_{d_h^\iter{t}}\left[\delta_{h,\rf}^\iter{t}\right] \leq H \sqrt{T \cdimrlrf(\Gcal,\Pi_\Gcal,T) \beta_\rf}.
\end{align}
For any $(h,t) \in [H] \times [T]$, we have,
\begin{align}
\nonumber
\E_{d_h^\iter{t}}\left[ (\Pcal_{h} g_{h+1}^\iter{t})(x_{h},a_{h}) \right] = &~ \E_{d_h^\iter{t}}\left[ \sum_{x'} \PP_h(x_{h+1}|x_h,a_h) \max_{a_{h+1} \in \Acal} g_{h+1}^\iter{t}(x_{h+1},a_{h+1}) \right]
\\
\nonumber
= &~ \E_{d_h^\iter{t}}\left[ \sum_{x'} \PP_h(x_{h+1}|x_h,a_h) g_{h+1}^\iter{t} (x_{h+1},\pi_h^\iter{t}) \right]
\tag{$\pi^\iter{t}$ is the greedy policy of $g^\iter{t}$}
\\
\label{eq:dhpgh}
= &~ \E_{d_{h+1}^\iter{t}}\left[ g_{h+1}^\iter{t} (x_{h+1},a_{h+1}) \right].
\end{align}
Therefore, we know
\begin{align}
\nonumber
\sum_{h=1}^{H}\E_{d_h^\iter{t}}\left[\delta_{h,\rf}^\iter{t}\right] = &~ \sum_{h=1}^{H}\E_{d_h^\iter{t}}\left[g_h^\iter{t} - \Pcal_{h} g_h^\iter{t}\right]
\\
\nonumber
= &~ \sum_{h=1}^{H} \left( \E_{d_h^\iter{t}}\left[g_h^\iter{t}\right] - \E_{d_{h+1}^\iter{t}}\left[g_{h+1}^\iter{t}\right] \right)
\tag{by \cref{eq:dhpgh}}
\\
\nonumber
= &~ \E_{d_1^\iter{t}}\left[g_1^\iter{t}\right]
\\
\label{eq:rf_be}
= &~ g_1^\iter{t}(x_1,\pi_1^\iter{t}).
\end{align}

Now, since
\begin{align*}
t_\star \coloneqq \argmin_{t \in [T]} g_1^\iter{t}(x_1,\pi_1^\iter{t}),
\end{align*}
then,
\begin{align}
\nonumber
g_1^\iter{t_\star}(x_1,\pi^\iter{t_\star}) = &~ \frac{1}{T} \sum_{t = 1}^{T} g_1^\iter{t}(x_1,\pi_1^\iter{t})
\\
\nonumber
= &~ \frac{1}{T} \sum_{t=1}^{T}\sum_{h=1}^{H}\E_{d_h^\iter{t}}\left[\delta_{h,\rf}^\iter{t}\right]
\tag{by \cref{eq:rf_be}}
\\
\label{eq:g1_tstar_bound}
\leq &~ H \sqrt{\frac{\cdimrlrf(\Gcal,\Pi_\Gcal,T) \beta_\rf}{T}},
\end{align}
where the last inequality follows from \cref{eq:rf_regret}.

By \pref{lem:rf_offline_vspace}, we know there exists a $\ghat \in \Gcal^\iter{t_\star - 1}$, such that $\fhat_h - \Tcal_h \fhat_{h+1} = \ghat_h - \Pcal_h \widehat g_{h+1}$, $\forall h \in [H]$. In addition, we can obtain
\begin{align*}
J(\pi^\star) - J(\pi_{\fhat}) \leq &~ \fhat(x_1,\pi_{\fhat,1}) - J(\pi_{\fhat})
\tag{by \cref{lem:golf_concentration}}
\\
\leq &~ \ghat(x_1,\pi_{\ghat,1}).
\tag{by \cref{lem:rf_offline_keylem}}
\end{align*}
Therefore, we have
\begin{align*}
J(\pi^\star) - J(\pi_{\fhat}) \leq &~ \ghat(x_1,\pi_{\ghat,1})
\\
\leq &~ g_1^\iter{t_\star}(x_1,\pi_1^\iter{t_\star})
\\
\leq &~ H \sqrt{\frac{\cdimrlrf(\Gcal,\Pi_\Gcal,T) \beta_\rf}{T}}.
\tag{by \cref{eq:g1_tstar_bound}}
\end{align*}
Plugging back the selection of $\beta_\rf$ completes the proof.
\end{proof}

\begin{proof}[\cpfname{lem:rf_offline_keylem}]
We establish the proof by induction. For $h = H$, the the inductive hypothesis holds because $g_H = f_H - R_H = f_H - Q_H^{\pi_f}$.

Suppose the inductive hypothesis holds at $h + 1$, we have for any $x \in \Xcal$,
\begin{align}
\nonumber
g_{h+1}(x,a) \geq &~ f_{h+1}(x,a) - Q_{h+1}^{\pi_f}(x,a), ~ \forall a \in \Acal.
\\
\nonumber
\Longrightarrow g_{h+1}(x,\pi_{f,h+1}) \geq &~ f_{h+1}(x,\pi_{f,h+1}) - Q_{h+1}^{\pi_f}(x,\pi_{f,h+1}).
\\
\nonumber
\Longrightarrow \max_{a \in \Acal} g_{h+1}(x,a) \geq &~ f_{h+1}(x,\pi_{f,h+1}) - Q_{h+1}^{\pi_f}(x,\pi_{f,h+1}).
\\
\label{eq:ind_hp1}
\Longrightarrow g_{h+1}(x,\pi_{g,h+1}) \geq &~ f_{h+1}(x,\pi_{f,h+1}) - V_{h+1}^{\pi_f}(x).
\end{align}
Then, as $g_{h} = f_{h} - \Tcal_h f_{h+1} + \Pcal_h g_{h+1}$, we have for any $(x,a) \in \Xcal \times \Acal$,
\begin{align*}
g_h(x,a) = &~ f_{h}(x,a) - R_h(x,a) - \E_{x'|x,a}\left[ \max_{a' \in \Acal} f_{h+1}(x',a') \right] + \E_{x'|x,a}\left[ \max_{a' \in \Acal} g_{h+1}(x',a') \right]
\\
= &~ f_{h}(x,a) - R_h(x,a) + \E_{x'|x,a}\left[ \max_{a' \in \Acal} g_{h+1}(x',a') - \max_{a' \in \Acal} f_{h+1}(x',a') \right]
\\
= &~ f_{h}(x,a) - R_h(x,a) + \E_{x'|x,a}\left[ g_{h+1}(x',\pi_{g,h+1}) - f_{h+1}(x',\pi_{f,h+1}) \right]
\\
\geq &~ f_{h}(x,a) - R_h(x,a) + \E_{x'|x,a}\left[ - V_{h+1}^{\pi_f}(x') \right]
\tag{by \cref{eq:ind_hp1}}
\\
= &~ f_{h}(x,a) - \left( R_h(x,a) + \E_{x'|x,a}\left[ V_{h+1}^{\pi_f}(x') \right] \right)
\\
= &~ f_{h}(x,a) -  Q_{h}^{\pi_f}(x,a).
\end{align*}
Therefore, we prove that the inductive hypothesis also holds at $h$ using the inductive hypothesis at $h+1$. This completes the proof.
\end{proof}

\begin{proof}[\cpfname{lem:rf_offline_vspace}]
Over this proof, we use $d_h^\iter{t}$ as the shorthand of $d_h^{\pi^\iter{t}}$.
The proof of this lemma consists of two parts. 
\begin{enumerate}[(i)]
\item There exists a radius $\beta_1$, such that for any $g \in \Gcal$, if such $g$ satisfies
\begin{align*}
\sum_{t=1}^{t_\star - 1}\E_{d_h^\iter{t}}\left[ \left( g_h - \Pcal_{h} g_{h+1} \right)^2 \right] \leq \beta_1, ~~ \forall h \in [H]
\end{align*}
then $g \in \Gcal^\iter{t_\star - 1}$.
\item There exists another radius $\beta_2$, where $\beta_2 \leq \beta_1$. For any $f \in \Fcal^\off$, we have
\begin{align*}
\sum_{t=1}^{t_\star - 1}\E_{d_h^\iter{t}}\left[ \left( f_h - \Tcal_{h} f_{h+1} \right)^2 \right] \leq \beta_2, ~~ \forall h \in [H].
\end{align*}
\end{enumerate}

\paragraph{Proof of part (i)}
For any $(t,h,g) \in [T] \times [H] \times \Gcal$, let $Y_h^\iter{t}(g)$ be defined as
\begin{align*}
Y_h^\iter{t}(g) \coloneqq \left( g_h(x_h^\iter{t},a_h^\iter{t}) - g_{h+1}(x_{h+1}^\iter{t},\pi_{g,1}) \right)^2 - \left( (\Pcal_h g_{h+1})(x_h^\iter{t},a_h^\iter{t}) - g_{h+1}(x_{h+1}^\iter{t},\pi_{g,1}) \right)^2.
\end{align*}
Also, let $\filtr_h^\iter{t}$ be the filtration induced by $\{x_1^\iter{i}, a_1^\iter{i}, x_2^\iter{i}, a_2^\iter{i}, \dotsc, x_H^\iter{i} \}_{i=1}^{t}$, and we then have
\begin{align}
\label{eq:Yht_exp}
\E\left[ Y_h^\iter{t}(g) \midmid \filtr_h^\iter{t-1} \right] = \E_{d_h^\iter{t}}\left[ \left( g_h - \Pcal_{h} g_{h+1} \right)^2 \right]
\end{align}
and
\begin{align*}
\V\left[ Y_h^\iter{t}(g) \midmid \filtr_h^\iter{t-1} \right] \leq \E\left[ \left( Y_h^\iter{t}(g) \right)^2 \midmid \filtr_h^\iter{t-1} \right] \leq 2 \E\left[ Y_h^\iter{t}(g) \midmid \filtr_h^\iter{t-1} \right] = 2 \E_{d_h^\iter{t}}\left[ \left( g_h - \Pcal_{h} g_{h+1} \right)^2 \right].
\end{align*}

Now, let $\Ybar_h^\iter{t}(g) \coloneqq Y_h^\iter{t}(g) - \E\left[ Y_h^\iter{t}(g) \midmid \filtr_h^\iter{t-1} \right]$, so that $\left \{\Ybar_h^\iter{t}(g) \right\}_{t = 1}^{T}$ is a martingale difference sequence adapts to the filtration $\left\{ \filtr_h^\iter{t} \right\}_{t = 1}^{T}$, and $|\Ybar_h^\iter{t}(g)| \leq 2$ almost surely.
Then, by applying \cref{lem:freedman} with a union bound, we have for any $(h,g) \in [H] \times \Gcal$ and any $\eta \in (0,\nicefrac{1}{2})$, with probability at least $1 - \delta$,
\begin{align}
\nonumber
\sum_{t=1}^{t_\star - 1} \Ybar_h^\iter{t}(g) \leq &~ \eta \sum_{t = 1}^{t_\star - 1} \E\left[ \left( \Ybar_h^\iter{t}(g) \right)^2 \midmid \filtr_h^\iter{t-1} \right] + \frac{\log(\nicefrac{H |\Gcal|}{\delta})}{\eta}
\\
\leq &~ \eta \sum_{t = 1}^{t_\star - 1} \E\left[ \left( Y_h^\iter{t}(g) \right)^2 \midmid \filtr_h^\iter{t-1} \right] + \frac{\log(\nicefrac{H |\Gcal|}{\delta})}{\eta}
\tag{variance is bounded by the second moment}
\\
\leq &~ \eta \sum_{t = 1}^{t_\star - 1} \E\left[ Y_h^\iter{t}(g) \midmid \filtr_h^\iter{t-1} \right] + \frac{\log(\nicefrac{H |\Gcal|}{\delta})}{\eta}.
\tag{$|Y_h^\iter{t}(g)| \leq 1$ by its definition}
\\
\Longrightarrow
\sum_{t=1}^{t_\star - 1} Y_h^\iter{t}(g) \leq &~ \eta \sum_{t = 1}^{t_\star - 1} \E\left[ Y_h^\iter{t}(g) \midmid \filtr_h^\iter{t-1} \right] + \frac{\log(\nicefrac{H |\Gcal|}{\delta})}{\eta} + \sum_{t=1}^{t_\star - 1} \E\left[ Y_h^\iter{t}(g) \midmid \filtr_h^\iter{t-1} \right]
\tag{by the definition of $\Ybar_h^\iter{t}(g)$}
\\
= &~ (1 + \eta) \sum_{t = 1}^{t_\star - 1} \E\left[ Y_h^\iter{t}(g) \midmid \filtr_h^\iter{t-1} \right] + \frac{\log(\nicefrac{H |\Gcal|}{\delta})}{\eta}.
\label{eq:yhtg_freedman}
\end{align}

If some $g \in \Gcal$ satisfies 
\begin{align*}
\underbrace{\sum_{t=1}^{t_\star - 1}\E_{d_h^\iter{t}}\left[ \left( g_h - \Pcal_{h} g_{h+1} \right)^2 \right]}_{= \sum_{t=1}^{t_\star - 1} \E\left[ Y_h^\iter{t}(g) \midmid \filtr_h^\iter{t-1} \right] \text{~by~\cref{eq:Yht_exp}}} \leq \beta_1, ~~ \forall h \in [H],
\end{align*}
then by \cref{eq:yhtg_freedman}, we have for any $h \in [H]$
\begin{align*}
\sum_{t=1}^{t_\star - 1} Y_h^\iter{t}(g) \leq &~ (1 + \eta) \beta_1 + \frac{\log(\nicefrac{H |\Gcal|}{\delta})}{\eta}
\\
\leq &~ 3 (\beta_1 + \log(\nicefrac{H |\Gcal|}{\delta})).
\tag{e.g., by picking $\eta = \nicefrac{1}{3}$}
\end{align*}

So we only need to guarantee
\begin{align}
\nonumber
3 \cdot\left( \beta_1 + \log(\nicefrac{H|\Gcal|}{\delta})\right) \leq &~ \beta_\rf
\\
\label{eq:beta_1_condition}
\Longrightarrow \beta_1 \leq &~ \frac{\beta_\rf}{3} - \log(\nicefrac{H|\Gcal|}{\delta}).
\end{align}

\paragraph{Proof of part (ii)}
Similar to (i), for any $(t,h,f) \in [T] \times [H] \times \Fcal$, let $X_h^\iter{t}(f)$ be defined as
\begin{align*}
X_h^\iter{t}(f) \coloneqq &~ \left( f_h(x_h^\iter{t},a_h^\iter{t}) - R(x_h^\iter{t},a_h^\iter{t}) - f_{h+1}(x_{h+1}^\iter{t},\Pi_{\Gcal}) \right)^2
\\
&~ - \left( (\Tcal_h f_{h+1})(x_h^\iter{t},a_h^\iter{t}) - R(x_h^\iter{t},a_h^\iter{t}) - f_{h+1}(x_{h+1}^\iter{t},\pi_{f,h+1}) \right)^2.
\end{align*}
Also let $\Xbar_h^\iter{t}(f) \coloneqq \E\left[ X_h^\iter{t}(f) \midmid \filtr_h^\iter{t-1} \right] - X_h^\iter{t}(f)$, so that $\left \{\Xbar_h^\iter{t}(f) \right\}_{t = 1}^{T}$ is a martingale difference sequence adapts to the filtration $\left\{ \filtr_h^\iter{t} \right\}_{t = 1}^{T}$, and $|\Xbar_h^\iter{t}(f)| \leq 2$ almost surely.

Thus, by same arguments as \cref{eq:Yht_exp,eq:yhtg_freedman} (as well as applying \cref{lem:freedman}), we have
\begin{align}
\label{eq:Xht_exp}
\E\left[ X_h^\iter{t}(f) \midmid \filtr_h^\iter{t} \right] = \E_{d_h^\iter{t}}\left[ \left( f_h - \Tcal_{h} f_{h+1} \right)^2 \right]
\end{align}
and for any $(h,f) \in [H] \times \Fcal$ and any $\eta \in (0,\nicefrac{1}{2})$, with probability at least $1 - \delta$, 
\begin{gather}
\nonumber
\sum_{t=1}^{t_\star - 1} \E\left[ X_h^\iter{t}(f) \midmid \filtr_h^\iter{t-1} \right] \leq \eta \sum_{t = 1}^{t_\star - 1} \E\left[ X_h^\iter{t}(f) \midmid \filtr_h^\iter{t-1} \right] + \frac{\log(\nicefrac{H |\Fcal|}{\delta})}{\eta} + \sum_{t=1}^{t_\star - 1} X_h^\iter{t}(f)
\\
\Longrightarrow (1 - \eta)\sum_{t=1}^{t_\star - 1} \E\left[ X_h^\iter{t}(f) \midmid \filtr_h^\iter{t-1} \right] \leq \frac{\log(\nicefrac{H |\Fcal|}{\delta})}{\eta} + \sum_{t=1}^{t_\star - 1} X_h^\iter{t}(f).
\label{eq:xht_freedman}
\end{gather}

Therefore, if $f \in \Fcal^\iter{\off}$, we have
\begin{align}
\nonumber
\sum_{t=1}^{t_\star - 1} X_h^\iter{t}(f)
= &~ \sum_{t=1}^{t_\star - 1} \left( f_h(x_h^\iter{t},a_h^\iter{t}) - f_{h+1}(x_{h+1}^\iter{t},\pi_{f,h+1}) \right)^2 - \sum_{t=1}^{t_\star - 1} \left( (\Tcal_h f_{h+1})(x_h^\iter{t},a_h^\iter{t}) - f_{h+1}(x_{h+1}^\iter{t},\pi_{f,h+1}) \right)^2
\\
\nonumber
\leq &~ \sum_{t=1}^{t_\star - 1} \left( f_h(x_h^\iter{t},a_h^\iter{t}) - f_{h+1}(x_{h+1}^\iter{t},\pi_{f,h+1}) \right)^2 - \min_{f_h' \in \Fcal_h}\sum_{t=1}^{t_\star - 1} \left( f_h'(x_h^\iter{t},a_h^\iter{t}) - f_{h+1}(x_{h+1}^\iter{t},\pi_{f,h+1}) \right)^2
\\
\nonumber
\leq &~ \Lcal_{h}^\iter{\off}(f_h,f_{h+1}^\iter{t_\star}) - \min_{f'_h \in \Fcal_h} \Lcal_{h}^\iter{\off}(f'_h,f_{h+1}^\iter{t_\star})
\\
\label{eq:beta_off_bound}
\leq &~ \beta_\off.
\end{align}

We then combine \cref{eq:Xht_exp,eq:xht_freedman,eq:beta_off_bound} and obtain
\begin{align*}
\sum_{t=1}^{t_\star - 1}\E_{d_h^\iter{t}}\left[ \left( f_h - \Tcal_{h} f_{h+1} \right)^2 \right] = &~ \sum_{t=1}^{t_\star - 1} \E\left[ X_h^\iter{t}(f) \midmid \filtr_h^\iter{t-1} \right]
\tag{by \cref{eq:Xht_exp}}
\\
\leq &~ \frac{\log(\nicefrac{H |\Fcal|}{\delta})}{(1 - \eta)\eta} + \frac{1}{1 - \eta}\sum_{t=1}^{t_\star - 1} X_h^\iter{t}(f)
\tag{by \cref{eq:xht_freedman}}
\\
\leq &~ \frac{\log(\nicefrac{H |\Fcal|}{\delta})}{(1 - \eta)\eta} + \frac{1}{1 - \eta} \beta_\off
\tag{by \cref{eq:beta_off_bound}}
\\
\leq &~ \underbrace{5 \log(\nicefrac{H |\Fcal|}{\delta}) + 2 \beta_\off}_{\eqqcolon \beta_2}.
\tag{by e.g., setting $\eta = \nicefrac{1}{3}$}
\end{align*}

So we only need to guarantee
\begin{align}
\label{eq:beta_2_condition}
5 \log(\nicefrac{H |\Fcal|}{\delta}) + 2 \beta_\off = \beta_2 \leq \beta_1.
\end{align}

\paragraph{Putting everything together}

By \cref{eq:beta_1_condition,eq:beta_2_condition}, we know we only
need the following inequality to hold:
\begin{align*}
5 \log(\nicefrac{H |\Fcal|}{\delta}) + 2 \beta_\off \leq &~ \frac{\beta_\rf}{3} - \log(\nicefrac{H|\Gcal|}{\delta}).
\\
\Longrightarrow \beta_\rf \geq &~ 6 \beta_\off + 18 \log(\nicefrac{H|\Gcal|}{\delta}).
\end{align*}
This is satisfied via the condition of \cref{thm:rf_golf}.

Combining (i) and (ii), we can simply obtain for any $h \in [H]$,
\begin{align}
\left\{ f_h - \Tcal_h f_{h+1}: f \in \Fcal^\off \right\} \subseteq &~ \left\{ f_h - \Tcal_h f_{h+1}: \sum_{t=1}^{t_\star - 1}\E_{d_h^\iter{t}}\left[ \left( f_h - \Tcal_{h} f_{h+1} \right)^2 \right] \leq \beta_2, \forall h \in [H], f \in \Fcal \right\}
\tag{by (ii)}
\\
\subseteq &~ \left\{ g_h - \Pcal_h f_{h+1}: \sum_{t=1}^{t_\star - 1}\E_{d_h^\iter{t}}\left[ \left( g_h - \Pcal_{h} f_{h+1} \right)^2 \right] \leq \beta_1, \forall h \in [H], g \in \Gcal \right\}
\tag{by Assumption~\ref{asm:rf_completeness} and $\beta_2 \leq \beta_1$}
\\
\subseteq &~ \left\{ g_h - \Pcal_h g_{h+1}: g \in \Gcal^\iter{t_\star - 1} \right\}.
\tag{by (i)}
\end{align}
This completes the proof.
\end{proof}

\end{document}